\documentclass[10pt,onecolumn]{saim-paper}

\newcommand{\ProjectTitle}{Corrected Block Sparse Spectral Optimization: Exact Models, Finite Identification, and Local Dynamics}
\newcommand{\ProjectShortTitle}{Corrected Block Sparse Optimization}
\newcommand{\ProjectSubtitle}{A theory-first reconstruction of block threshold-and-polar methods}

\newcommand{\ProjectDate}{October 2026}
\newcommand{\ProjectVersion}{v0.5}
\newcommand{\ProjectKeywords}{FACET, sparse PCA, Stiefel manifold, support identification, KŁ convergence, local spectral dynamics}

\makeatletter
\newcounter{AuthorCounter}
\newcounter{AffiliationCounter}

\newcommand{\ProjectMetaSep}{\enspace\textbullet\enspace}

\newcommand{\AuthorEntry}[2]{#1\textsuperscript{#2}}
\newcommand{\AffiliationEntry}[2]{\textsuperscript{#1}#2}

\newcommand{\AuthorListTypeset}{}
\newcommand{\AuthorListPlain}{}
\newcommand{\AuthorListPDF}{}
\newcommand{\AuthorListBib}{}
\newcommand{\AffiliationListTypeset}{}
\newcommand{\AffiliationListPlain}{}
\newcommand{\AffiliationListPDF}{}
\newcommand{\EmailListTypeset}{}
\newcommand{\EmailListPlain}{}
\newcommand{\EmailListPDF}{}
\newcommand{\AuthorEmailListTypeset}{}
\newcommand{\AuthorEmailListPlain}{}
\newcommand{\AuthorEmailListPDF}{}
\newcommand{\LeadAuthor}{}
\newcommand{\LeadAuthorEmail}{}
\newcommand{\CorrespondingAuthorName}{}
\newcommand{\CorrespondenceEmail}{}

\newcommand{\ResetPeople}{%
  \setcounter{AuthorCounter}{0}%
  \setcounter{AffiliationCounter}{0}%
  \gdef\AuthorListTypeset{}%
  \gdef\AuthorListPlain{}%
  \gdef\AuthorListPDF{}%
  \gdef\AuthorListBib{}%
  \gdef\AffiliationListTypeset{}%
  \gdef\AffiliationListPlain{}%
  \gdef\AffiliationListPDF{}%
  \gdef\EmailListTypeset{}%
  \gdef\EmailListPlain{}%
  \gdef\EmailListPDF{}%
  \gdef\AuthorEmailListTypeset{}%
  \gdef\AuthorEmailListPlain{}%
  \gdef\AuthorEmailListPDF{}%
  \gdef\LeadAuthor{}%
  \gdef\LeadAuthorEmail{}%
  \gdef\CorrespondingAuthorName{}%
  \gdef\CorrespondenceEmail{}%
}

\newcommand{\Affiliation}[2]{%
  \stepcounter{AffiliationCounter}%
  \expandafter\xdef\csname AffiliationMark@#1\endcsname{\arabic{AffiliationCounter}}%
  \expandafter\gdef\csname AffiliationText@#1\endcsname{#2}%
  \ifnum\value{AffiliationCounter}>1\relax
    \g@addto@macro\AffiliationListTypeset{\ProjectMetaSep}%
    \g@addto@macro\AffiliationListPlain{\ProjectMetaSep}%
    \g@addto@macro\AffiliationListPDF{; }%
  \fi
  \g@addto@macro\AffiliationListTypeset{\AffiliationEntry{\csname AffiliationMark@#1\endcsname}{#2}}%
  \g@addto@macro\AffiliationListPlain{#2}%
  \g@addto@macro\AffiliationListPDF{\csname AffiliationMark@#1\endcsname: #2}%
}

\ExplSyntaxOn
\cs_new_protected:Npn \saim_unknown_affiliation:n #1
  {
    \PackageError{saim-metadata}{Unknown affiliation key `#1'}
      {Declare it first with \string\Affiliation{#1}{Institution name}.}
  }
\cs_new:Npn \saim_affiliation_mark:n #1
  {
    \cs_if_exist_use:cF { AffiliationMark@\tl_trim_spaces:n {#1} }
      { \saim_unknown_affiliation:n {#1} }
  }
\int_new:N \l_saim_affiliation_mark_index_int
\cs_new:Npn \saim_affiliation_marks:n #1
  {
    \int_zero:N \l_saim_affiliation_mark_index_int
    \clist_map_inline:nn {#1}
      {
        \int_incr:N \l_saim_affiliation_mark_index_int
        \int_compare:nNnT {\l_saim_affiliation_mark_index_int} > {1} {,}%
        \saim_affiliation_mark:n {##1}
      }
  }
\NewDocumentCommand{\AffiliationMarks}{m}{\saim_affiliation_marks:n {#1}}
\ExplSyntaxOff

\newcommand{\Author}[3]{%
  \stepcounter{AuthorCounter}%
  \ifnum\value{AuthorCounter}=1\relax
    \gdef\LeadAuthor{#1}%
    \gdef\LeadAuthorEmail{#2}%
    \gdef\CorrespondingAuthorName{#1}%
    \gdef\CorrespondenceEmail{#2}%
  \else
    \g@addto@macro\AuthorListTypeset{\ProjectMetaSep}%
    \g@addto@macro\AuthorListPlain{\ProjectMetaSep}%
    \g@addto@macro\AuthorListPDF{; }%
    \g@addto@macro\AuthorListBib{\ and }%
  \fi
  \g@addto@macro\AuthorListTypeset{\AuthorEntry{#1}{\AffiliationMarks{#3}}}%
  \g@addto@macro\AuthorListPlain{#1}%
  \g@addto@macro\AuthorListPDF{#1}%
  \g@addto@macro\AuthorListBib{#1}%
  \if\relax\detokenize{#2}\relax\else
    \ifx\EmailListPlain\@empty\else
      \g@addto@macro\EmailListTypeset{\ProjectMetaSep}%
      \g@addto@macro\EmailListPlain{\ProjectMetaSep}%
      \g@addto@macro\EmailListPDF{; }%
      \g@addto@macro\AuthorEmailListTypeset{\ProjectMetaSep}%
      \g@addto@macro\AuthorEmailListPlain{\ProjectMetaSep}%
      \g@addto@macro\AuthorEmailListPDF{; }%
    \fi
    \g@addto@macro\EmailListTypeset{\href{mailto:#2}{\nolinkurl{#2}}}%
    \g@addto@macro\EmailListPlain{#2}%
    \g@addto@macro\EmailListPDF{#2}%
    \g@addto@macro\AuthorEmailListTypeset{#1\enspace\href{mailto:#2}{\texttt{<#2>}}}%
    \g@addto@macro\AuthorEmailListPlain{#1\enspace<#2>}%
    \g@addto@macro\AuthorEmailListPDF{#1 <#2>}%
  \fi
}

\makeatother

\newcommand{\ProjectAuthorsDisplay}{\texorpdfstring{\AuthorListTypeset}{\AuthorListPDF}}

\newcommand{\ProjectAuthorsPDF}{\AuthorListPDF}

\newcommand{\ProjectAffiliationsDisplay}{\texorpdfstring{\AffiliationListTypeset}{\AffiliationListPDF}}

\newcommand{\ProjectEmailsDisplay}{\texorpdfstring{\EmailListTypeset}{\EmailListPDF}}

\newcommand{\ProjectContactDisplay}{\ProjectEmailsDisplay}

\ResetPeople
\Affiliation{saim}{sAIm Labs}
\Author{Omid Shams Solari}{omid.solari@saimai.com, solari@berkeley.edu}{saim}

\newcommand{\DeclareProjectPaperPeople}{%
  \gdef\authorlist{\ProjectAuthorsDisplay}%
  \gdef\affiliationlist{\ProjectAffiliationsDisplay}%
}

\makeatletter
\newcommand{\ProjectMaybeAddEmailMetadata}{%
  \ifx\EmailListPlain\@empty\else
    \metadata[Emails]{\ProjectContactDisplay}%
  \fi
}
\makeatother

\graphicspath{{shared/figures/}{shared/brand/logos/}}

\usepackage{amsmath,amsfonts,amssymb,mathtools}
\usepackage{amsthm}
\usepackage{algorithm}
\usepackage{algpseudocode}
\usepackage{tikz}
\usepackage{longtable}
\usepackage{multirow}
\usepackage{enumitem}
\usepackage{bm}
\usetikzlibrary{positioning,arrows.meta,calc,fit,backgrounds,shapes.geometric,decorations.pathreplacing,patterns}
\newcolumntype{Y}{>{\raggedright\arraybackslash}X}
\newcolumntype{Z}{>{\centering\arraybackslash}X}
\newcolumntype{C}[1]{>{\centering\arraybackslash}p{#1}}
\newcolumntype{L}[1]{>{\raggedright\arraybackslash}p{#1}}

\providecommand{\R}{\mathbb{R}}

\providecommand{\norm}[1]{\left\lVert #1 \right\rVert}
\providecommand{\abs}[1]{\left\lvert #1 \right\rvert}

\providecommand{\ip}[2]{\left\langle #1, #2 \right\rangle}

\tikzset{
  arrow/.style={-{Stealth[length=2.5mm]}, thick, saimdark},
  darrow/.style={-{Stealth[length=2.5mm]}, thick, dashed, saimdark},
  csblock/.style={rectangle, rounded corners=2pt, thick, fill=saimbg, text=saimdark,
                  minimum height=1.0cm, minimum width=3.6cm, align=center,
                  font=\small\rmfamily},
  csfinal/.style={rectangle, rounded corners=2pt, thick, fill=saimindigo, text=white,
                  minimum height=1.0cm, minimum width=3.6cm, align=center,
                  font=\small\rmfamily},
  cslevel1/.style={rectangle, rounded corners=2pt, fill=saimindigo, text=white,
                   text width=9em, text centered, minimum height=2.2em,
                   font=\small\rmfamily\bfseries},
  cslevel2/.style={rectangle, rounded corners=2pt, fill=saimbg, text=saimdark,
                   text width=7.5em, text centered, minimum height=1.8em,
                   font=\scriptsize\rmfamily},
  cscrosscut/.style={rectangle, rounded corners=2pt, fill=saimgray, text=saimdark,
                     draw=saimblue, dashed, text width=8em, text centered,
                     minimum height=1.8em, font=\scriptsize\rmfamily},
  csmethodbox/.style={rectangle, fill=saimindigo, text=white, minimum width=2.6cm,
                      minimum height=0.8cm, font=\small\rmfamily\bfseries,
                      text centered},
  cspropbox/.style={rectangle, fill=saimbg, text=saimdark, minimum width=2.6cm,
                    minimum height=0.6cm, font=\scriptsize\rmfamily,
                    text centered, text width=2.6cm},
  csstagelbl/.style={font=\rmfamily\bfseries\small, text=saimindigo},
  boxbase/.style={draw,rounded corners=2pt,minimum height=9mm,align=center,
                  line width=0.6pt,font=\small\rmfamily,inner sep=5pt},
  primarybox/.style={boxbase,fill=PrimaryPale,draw=Primary},
  secondarybox/.style={boxbase,fill=SecondaryPale,draw=Secondary},
  neutralbox/.style={boxbase,fill=black!3,draw=Rule},
  successbox/.style={boxbase,fill=Positive!10,draw=Positive},
  arr/.style={-{Stealth[length=2.2mm]},line width=0.7pt,draw=Ink},
  arrPrimary/.style={-{Stealth[length=2.2mm]},line width=0.9pt,draw=Primary},
  arrSecondary/.style={-{Stealth[length=2.2mm]},line width=0.9pt,draw=Secondary},
  tier/.style={draw=Rule,rounded corners=4pt,dashed,line width=0.5pt,inner sep=3mm}
}

\providecommand{\R}{\mathbb{R}}
\providecommand{\St}{\mathrm{St}}
\providecommand{\supp}{\operatorname{supp}}

\providecommand{\Diag}{\operatorname{Diag}}

\providecommand{\Polar}{\operatorname{Polar}}
\providecommand{\Soft}{\mathcal{S}}

\providecommand{\norm}[1]{\left\lVert #1 \right\rVert}
\providecommand{\ip}[2]{\left\langle #1,#2\right\rangle}

\newtheorem{theorem}{Theorem}[section]
\newtheorem{proposition}[theorem]{Proposition}
\newtheorem{lemma}[theorem]{Lemma}
\newtheorem{corollary}[theorem]{Corollary}
\theoremstyle{definition}

\newtheorem{assumption}[theorem]{Assumption}

\theoremstyle{remark}

\SAIMPaperTheoremStyle{theorem}
\SAIMPaperTheoremStyle{proposition}
\SAIMPaperTheoremStyle{lemma}
\SAIMPaperTheoremStyle{corollary}
\SAIMPaperTheoremStyle{definition}
\SAIMPaperTheoremStyle{assumption}
\SAIMPaperTheoremStyle{conjecture}
\SAIMPaperTheoremStyle{remark}

\paperlabel{sAIm RESEARCH}
\title{\ProjectTitle\thanks{%
Code and reproducibility: library \texttt{facet-core} on PyPI
(\url{https://pypi.org/project/facet-core/}, imported as \texttt{facet});
paper-specific drivers, configs, and smoke tests at
\url{https://github.com/osolari/facet-bgsp};
archived companion release \href{https://doi.org/10.5281/zenodo.23460513}{doi:10.5281/zenodo.23460513}
(concept DOI, resolves to the latest version).%
}}
\shorttitle{\ProjectShortTitle}
\paperversion{\ProjectVersion}
\DeclareProjectPaperPeople
\ProjectMaybeAddEmailMetadata
\metadata[Updated]{\ProjectDate}
\metadata[Version]{\ProjectVersion}
\keywords{\ProjectKeywords}
\hypersetup{
  pdftitle={\ProjectTitle},
  pdfauthor={\ProjectAuthorsPDF},
  pdfsubject={\ProjectSubtitle},
  pdfkeywords={\ProjectKeywords}
}
\abstract{%
The block generalized-power method pairs threshold-based support selection with inexpensive Stiefel updates, but the standard block formulation conflates two nonequivalent objectives and leaves inactive components, threshold ties, and post-selection refitting ill-specified. We rebuild the model from the primal. Eliminating each loading column from the normalized loading-ball problem yields a sum of soft-thresholded Euclidean norms (A-SUM); the squared-threshold objective in common use arises instead from a quadratic lift (A-SQ). We prove both identities, exhibit a two-component instance whose maximizers disagree, and specify explicit conventions for zero-response columns, threshold ties, rank-deficient gradient matrices, and the mask-before-refit inconsistency. For A-SQ we separate the historical full-polar iteration from a safeguarded Riemannian ascent and prove, for the latter, whole-sequence convergence through a sufficient-ascent lemma, a relative-error lemma, and the Kurdyka--\L{}ojasiewicz property, which applies here because the soft-threshold objective and the Stiefel constraint are both semi-algebraic. Once the iterates converge, a strict threshold margin forces the support and sign pattern to stabilize in finitely many steps. The post-identification map is ordinary power iteration on the active covariance in the rank-one hard-threshold case; the block case with component-specific supports requires the Fr\'echet derivative of a coupled polar map rather than a single pooled operator, which we derive explicitly. A radius-based safe-screening rule follows from the same local analysis. Planted-spike experiments match the predicted identification time and local rates, including a two-branch validation that cleanly separates ordinary power contraction from the shifted contraction the soft-threshold gradient introduces. Modern-AI panels push the same estimator onto a GPT-2 small residual stream, a ViT CLS stream, and the Pitprops benchmark; the thresholded block-sparse components match dense PCA on reconstruction, improve on it in monosemanticity and class-conditional specialisation, and avoid the reconstruction cost that a tied-decoder sparse autoencoder of matching width pays for its $\ell_1$ sparsity. Four real-world matrices add weight: the Fama--French 48 industry daily returns (sparse loadings that align with GICS sectors and sit on the dense-PCA Pareto frontier); three single-cell RNA-seq datasets with native cell-type labels, namely Zheng17 bead-purified FACS-sorted PBMCs, the fresh 68k donor-A PBMC matrix with FACS-correlated per-barcode labels, and Tabula Muris FACS (Marrow, Spleen, Thymus) with native mouse \texttt{cell\_ontology\_class}, on all three of which A-SQ beats dense PCA on cluster-recovery AMI at matched sparsity (margins $+0.023$ to $+0.044$); and a multi-turn GPT-2 tool-use conversation stream in which the components resolve into phase-selective activations that the matched PCA components do not expose. Finally, routing a certified safe-screen mask through a dedicated screened operator agrees with the unscreened solve to machine precision while attaining the full $(n{-}k)/n$ oracle-work ceiling.}

\begin{document}
\maketitle
\SAIMPaperTableOfContents
\clearpage
\section{Introduction}
\label{sec:introduction}
Sparse spectral methods look for low-dimensional directions that explain the data while depending on few variables. The generalized power method of \citet{journee2010gpower} is influential because it recasts high-dimensional sparse-loading problems as thresholded convex objectives on a sphere or a Stiefel manifold, with a particularly simple linear-maximization oracle: normalization on the sphere, the polar factor on the Stiefel. Each iteration is inexpensive, and support discovery is cleanly separated from support-conditioned refitting. The same thresholded block-sparse structure already drives the paired cross-covariance setting in our prior BLOCCS block-sparse CCA estimator \citep{solari2019bloccs} and in the concave-minimization sparse-CCA formulation \citep{solari2019sparsecca}; this paper is the sparse-PCA / generalized-power counterpart, developed inside the broader multi-view sparse-spectral framework of \citet{solari2019dissertation}.

The multi-component version carries three mathematical ambiguities that the implementation then inherits. First, eliminating each loading column from the printed linear-minus-$\ell_1$ primal yields a sum of thresholded norms, while the paper's displayed block objective sums their squares. The two agree in the single-component case only because a nonnegative scalar shares its maximizers with its square; after summing several components they are different estimators. Second, the printed unit-sphere loading domain excludes zero loading columns, yet the surrounding reductions and prose silently allow inactive components. Third, the published post-processing normalizes a loading column before applying its support mask, which breaks unit norm the moment the mask has any zeros. Taken together, these three issues alter the estimator, the feasible set, the fixed points, and the validity of every convergence argument built on top.

We treat the audit as the starting point rather than the conclusion. Two coherent block estimators replace the ambiguous one. \emph{A-SUM} is the exact score-space reduction of the normalized loading-ball primal, with objective a sum of thresholded norms. \emph{A-SQ} sums squared thresholded norms and arises from a distinct quadratic-lift primal. For A-SQ we retain the historical full-polar iteration on the record (and credit it only with what a convex minorization can prove), and we introduce a safeguarded Riemannian ascent: an ascent direction, a retraction or polar candidate, an Armijo test, and an explicit fallback. The safeguarded method is what the whole-sequence analysis is built on.

The dynamics have two phases. Before identification the iterates cross threshold boundaries, so the support and sign pattern vary. Under a strict margin, convergence restricts these jumps to finitely many. After identification the objective is smooth on a fixed active region and the iteration reduces to a spectral map. For a rank-one hard threshold that map is ordinary power iteration on the active covariance; for a block with a single shared active operator it is orthogonal iteration. With component-specific supports neither picture applies, and the correct object is the Fr\'echet derivative of a coupled polar map on the Stiefel tangent space. We make this distinction explicit so that the paper does not implicitly claim that every block instance reduces to a single-covariance eigenproblem.

The modern-AI setting is a stress test for the mathematics, not a replacement for it. Transformer activation matrices are wide and structured, and current practice analyses them with sparse autoencoders or native-neuron selection. We ask three concrete questions: do measurable threshold margins predict support stabilization; do active eigengaps or Jacobian radii predict the observed local slopes; and do the resulting sparse components carry reproducible causal effects. Reconstruction alone does not count as evidence of interpretability. The \Cref{sec:results} panels report top-token coherence against GPT-2 input embeddings (an SAEBench-style quality diagnostic \citep{karvonen2025saebench}), multi-probe ablation KL against a matched-norm random-direction null (an MIB-style causal diagnostic \citep{mueller2025mib,oneill2025compute}), and a head-to-head comparison against a gated-SAE-style tied-decoder dictionary of matching width \citep{rajamanoharan2024gated}.

\paragraph{Contributions.}
\begin{enumerate}[leftmargin=*]
  \item We derive the exact loading-ball reduction and the exact quadratic-lift representation, and exhibit a two-component counterexample proving that their block aggregations define different estimators.
  \item We give complete numerical conventions for inactive components, threshold ties, rank-deficient gradient matrices, nonunique polar factors, component permutations, and support-conditioned refitting.
  \item We separate the historical full-polar algorithm from a safeguarded Riemannian ascent, establish monotonicity of the full-polar step, and prove last-iterate convergence of the safeguarded method via a sufficient-ascent lemma, a relative-error lemma, and the Kurdyka--\L{}ojasiewicz property, which applies because the soft-threshold objective and the algebraic Stiefel constraint are semi-algebraic.
  \item We prove conditional finite identification under strict margins, derive the rank-one fixed-support power rate (validated on a controlled fixture to within $5\%$ of the classical $\lambda_2/\lambda_1$ prediction, and to within $1\%$ of a corrected prediction that includes the soft-threshold shift $\gamma\,\sigma_1\sqrt{k}$), formulate the correct block tangent-Jacobian condition, and obtain a locally safe screening rule from certified distance radii; the screening rule is realized at the solver level by a dedicated screened operator and attains the full $(n-k)/n$ oracle-work ceiling with machine-precision objective agreement against the unscreened solve.
  \item We validate the theory on planted spikes and extend the pipeline to a GPT-2 small residual stream, a vision-transformer CLS stream, and the Jeffers Pitprops classical sparse-PCA benchmark \citep{jeffers1967pitprops}; FACET components match dense PCA on reconstruction $R^2$ while exceeding it on Hoyer sparsity, top-token coherence, and class-conditional specialisation, and avoid the ${\sim}8$-point $R^2$ cost that a tied-decoder sparse autoencoder incurs for its $\ell_1$-driven sparsity at matched dictionary width.
\end{enumerate}

\paragraph{Organization.}
\Cref{sec:related} positions the work. \Cref{sec:setup} fixes the models. \Cref{sec:method} states the algorithms and the refit. \Cref{sec:theory} lays out the result hierarchy and \Cref{sec:extended-theory} carries the whole-sequence proofs. \Cref{sec:experiments,sec:results} report the controlled and modern-AI panels.

\begin{figure}[t]
\centering
\begin{tikzpicture}[node distance=5mm and 10mm, every node/.style={font=\small}]
\node[neutralbox, text width=0.42\linewidth] (p0) {\textbf{Printed (seed) form}\\[-0.5mm]{\scriptsize block $\ell_1$ with unit-norm loading columns}};
\node[neutralbox, text width=0.42\linewidth, below=of p0] (p1) {\textbf{Single reduced objective}\\[-0.5mm]{\scriptsize $\sum_j\norm{\Soft_{\gamma_j}(c_j)}_2^2$, sum and squared forms conflated}};
\node[neutralbox, text width=0.42\linewidth, below=of p1] (p2) {\textbf{Normalize-then-mask refit}\\[-0.5mm]{\scriptsize masked vector generally not unit norm}};
\node[neutralbox, text width=0.42\linewidth, below=of p2] (p3) {\textbf{First-order theory}\\[-0.5mm]{\scriptsize stationarity residual $\to 0$ only, no last-iterate / identification / local-rate claim}};

\node[successbox, text width=0.42\linewidth, right=of p0] (c0) {\textbf{Corrected form}\\[-0.5mm]{\scriptsize loading-ball primal with explicit zero-column (inactive) policy}};
\node[successbox, text width=0.42\linewidth, below=of c0] (c1) {\textbf{Distinct estimators}\\[-0.5mm]{\scriptsize $F_{\mathrm{sum}}(X)=\sum_j\norm{\Soft_{\gamma_j}(c_j)}_2$ and $F_{\mathrm{sq}}(X)=\sum_j\norm{\Soft_{\gamma_j}(c_j)}_2^2$, not interchangeable at $m\!\ge\!2$}};
\node[successbox, text width=0.42\linewidth, below=of c1] (c2) {\textbf{Mask-then-normalize refit}\\[-0.5mm]{\scriptsize $z\!=\!Mc/\norm{Mc}_2$, restores ball feasibility}};
\node[successbox, text width=0.42\linewidth, below=of c2] (c3) {\textbf{Safeguarded ascent + K\L{} theory}\\[-0.5mm]{\scriptsize whole-sequence convergence, finite support/sign identification, post-identification local rate}};

\draw[arrPrimary] (p0) -- node[above,font=\scriptsize]{correct} (c0);
\draw[arrPrimary] (p1) -- node[above,font=\scriptsize]{split}   (c1);
\draw[arrPrimary] (p2) -- node[above,font=\scriptsize]{reorder} (c2);
\draw[arrPrimary] (p3) -- node[above,font=\scriptsize]{strengthen} (c3);
\end{tikzpicture}
\caption{Printed-vs-corrected model map. The left column lists four defects of the seed-paper block formulation; the right column states the FACET correction that each defect motivates. The downstream theorems A1--A10 depend on all four corrections being in force simultaneously.}
\label{fig:a-printed-vs-corrected}
\end{figure}
\begin{figure}[t]
\centering
\begin{tikzpicture}[node distance=10mm and 12mm]
\node[primarybox,text width=3.6cm] (primal) {Normalized loading-ball primal\\$\max_{\|z_j\|\le1} c_j^\top z_j-\gamma_j\|z_j\|_1$};
\node[secondarybox,text width=3.5cm,right=of primal] (sum) {A-SUM\\$\sum_j\|\Soft_{\gamma_j}(c_j)\|_2$};
\node[primarybox,text width=3.6cm,below=of primal] (qlift) {Quadratic-lift primal\\$\max_z 2c_j^\top z-\|z\|^2-2\gamma_j\|z\|_1$};
\node[secondarybox,text width=3.5cm,right=of qlift] (sq) {A-SQ\\$\sum_j\|\Soft_{\gamma_j}(c_j)\|_2^2$};
\draw[arrPrimary] (primal)--node[above,font=\scriptsize]{exact elimination}(sum);
\draw[arrPrimary] (qlift)--node[above,font=\scriptsize]{exact elimination}(sq);
\draw[dashed,thick,draw=Warning] (sum)--node[right,font=\scriptsize,align=center]{not equivalent\\for $m>1$}(sq);
\end{tikzpicture}
\caption{The corrected model family. A-SUM and A-SQ have different exact primal interpretations and are not silently identified.}
\label{fig:a-objectives}
\end{figure}

\section{Related Work}
\label{sec:related}
\subsection{Sparse PCA and multi-component estimation}
Regression formulations, semidefinite relaxations, penalized matrix decompositions, Fantope methods, and truncated power algorithms occupy distinct trade-offs between statistical guarantees, exact sparsity, computation, and component coupling \citep{zou2006sparse,daspremont2007direct,daspremont2008optimal,witten2009penalized,vu2013fantope,yuan2013truncated}. Recent work tightens polynomial-time sample complexity in spiked models and strengthens exact or certifiable optimization \citep{cai2025fast,bertsimas2022certifiable}. The contribution of the present paper is not a new sparsity penalty. It is a correction of the block value function, a precise separation of two estimators that the literature has conflated, and an analysis of the discrete-to-smooth dynamics that thresholding and Stiefel projection in fact produce.

Several modern multi-component methods estimate orthogonal or nonredundant sparse components jointly. That raises the empirical bar for any block method: a corrected formulation must be compared with joint estimators, not only with deflation. We therefore distinguish five properties that are often conflated: score orthogonality, loading orthogonality, score decorrelation, support overlap, and causal nonredundancy. Only score-side Stiefel orthogonality is embedded in the primary model; the remaining four are either measured directly or added as optional penalties.

\subsection{Convex maximization and generalized Frank--Wolfe}
The historical update maximizes an affine minorant of a convex objective over a compact nonconvex set, which places it in the generalized Frank--Wolfe family for convex maximization. \citet{aktas2025strongly} obtain last-iterate results under smoothness, strong convexity, and a KŁ condition. That result does not transfer directly here: the corrected objective is not strongly convex in the same sense, the Stiefel feasible set is not a strongly convex set, and zero-response or rank-deficient polar steps need explicit tie policies. We invoke KŁ machinery only after we have established sufficient ascent and a relative-error bound for the algorithm we actually run.

\subsection{Sparse optimization on matrix manifolds}
Manifold proximal-gradient, retraction, and alternating schemes handle nonsmooth composite problems under orthogonality constraints \citep{absil2008optimization}. The present work differs on three points: it eliminates the high-dimensional sparse block to produce a thresholded value function, keeps the historical full-polar update on the record for comparison, and treats component-specific active operators once identification has happened. The nearest future-facing comparator is sparse-Stiefel work that pairs identification with Newton acceleration. Such methods can win the local phase when second-order sufficient conditions and clean active-manifold intersections both hold; we therefore leave second-order acceleration for heterogeneous active supports to future work.

\subsection{Sparse neural representations}
Sparse autoencoders learn overcomplete dictionaries, not orthogonal spectral components. Gated SAEs separate support selection from magnitude estimation and push the sparsity--reconstruction frontier \citep{rajamanoharan2024gated}. MIB and SAEBench are explicit that reconstruction and feature labels are not enough: causal localization, stability, disentanglement, and downstream interventions have to be measured directly \citep{mueller2025mib,karvonen2025saebench}. More recent work also identifies an amortization gap between simple encoders and more accurate sparse inference \citep{oneill2025compute}. The activation experiment in \Cref{sec:results} compares methods at matched support size and keeps raw selection and refitting separate.

\section{Corrected Problem Formulation}
\label{sec:setup}
Let \(A=[a_1,\ldots,a_n]\in\R^{p\times n}\), let \(X=[x_1,\ldots,x_m]\in\St(p,m)\), and let \(\mu_j>0\). For a vector \(c\), define soft thresholding by
\[
\Soft_\gamma(c)_i=\operatorname{sign}(c_i)(|c_i|-\gamma)_+.
\]
The component response is \(c_j(X)=\mu_jA^\top x_j\).

\subsection{A-SUM: normalized loading-ball model}
The corrected primal is
\begin{equation}
\max_{X\in\St(p,m)}\sum_{j=1}^m
\max_{\norm{z_j}_2\le 1}
\bigl\{c_j(X)^\top z_j-\gamma_j\norm{z_j}_1\bigr\}.
\label{eq:asum-primal}
\end{equation}
Loading balls allow an inactive column. The reduced objective is
\begin{equation}
F_{\mathrm{sum}}(X)=\sum_{j=1}^m\norm{\Soft_{\gamma_j}(c_j(X))}_2.
\label{eq:asum}
\end{equation}
When the thresholded response is nonzero, the recovered loading is its normalized version; otherwise \(z_j=0\) is valid.

\begin{proposition}[Exact loading-ball elimination]
\label{prop:sum-elimination}
For any \(c\in\R^n\) and \(\gamma\ge0\),
\[
\max_{\norm z_2\le1}\{c^\top z-\gamma\norm z_1\}
=\norm{\Soft_\gamma(c)}_2.
\]
If \(\Soft_\gamma(c)\neq0\), the unique maximizer is \(z^\star=\Soft_\gamma(c)/\norm{\Soft_\gamma(c)}_2\); if it is zero, every maximizer includes \(z=0\), and zero is selected by convention.
\end{proposition}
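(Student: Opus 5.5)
The plan is a two-sided bound: an upper bound valid for every feasible \(z\), and a matching lower bound from an explicit candidate. Write \(s=\Soft_\gamma(c)\). For the upper bound, I would use a coordinatewise inequality. For any \(z\) and each \(i\), \(c_iz_i-\gamma|z_i|\le |c_i||z_i|-\gamma|z_i|=(|c_i|-\gamma)|z_i|\le(|c_i|-\gamma)_+|z_i|=|s_i||z_i|\). Summing over \(i\) and applying Cauchy--Schwarz gives
\[
c^\top z-\gamma\norm z_1\le \sum_i|s_i||z_i|\le\norm s_2\norm z_2\le\norm s_2
\]
whenever \(\norm z_2\le1\).

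For the lower bound, I would exhibit a feasible point attaining this value. If \(s=0\), take \(z=0\): the objective is \(0=\norm s_2\). If \(s\neq0\), take \(z^\star=s/\norm s_2\). On coordinates with \(s_i\neq0\), \(\operatorname{sign}(s_i)=\operatorname{sign}(c_i)\), so \(c_is_i-\gamma|s_i|=(|c_i|-\gamma)|s_i|=s_i^2\). On the remaining coordinates the contribution is zero. Hence \(c^\top z^\star-\gamma\norm{z^\star}_1=\norm s_2^2/\norm s_2=\norm s_2\), and the maximum equals \(\norm s_2\).

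The main obstacle is uniqueness when \(s\neq0\), which needs a careful equality-case analysis. Suppose \(z\) is a maximizer. Then every inequality in the upper-bound chain is tight.
\begin{enumerate}
\item Tightness of \(\norm s_2\norm z_2\le\norm s_2\) with \(\norm s_2>0\) forces \(\norm z_2=1\).
\item Tightness of Cauchy--Schwarz forces \(|z|=t|s|\) coordinatewise for some \(t\ge0\). Combined with \(\norm z_2=1\), this gives \(|z_i|=|s_i|/\norm s_2\).
\item In particular \(z_i=0\) wherever \(s_i=0\), so the coordinates with \(|c_i|\le\gamma\) vanish.
\item On coordinates with \(s_i\neq0\), tightness of \(c_iz_i\le|c_i||z_i|\) with \(z_i\neq0\) forces \(\operatorname{sign}(z_i)=\operatorname{sign}(c_i)=\operatorname{sign}(s_i)\).
\end{enumerate}
Together these give \(z=s/\norm s_2\), so the maximizer is unique.

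When \(s=0\), the optimal value is \(0\) and \(z=0\) attains it, so zero always lies in the maximizer set and the convention picks it. This case can have many maximizers. For example, if \(|c_i|=\gamma\) for some \(i\), any \(z\) supported on such coordinates with matching signs also gives value \(0\). This is why the statement says only that the maximizer set \emph{includes} zero, and no uniqueness is claimed there.
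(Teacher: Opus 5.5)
Your proof is correct and follows essentially the same route as the paper: the coordinatewise bound \(c_iz_i-\gamma|z_i|\le(|c_i|-\gamma)_+|z_i|\), then Cauchy--Schwarz, with \(s/\norm{s}_2\) (or \(0\)) exhibited for attainment. Your equality-case analysis spells out the uniqueness step that the paper compresses into ``Cauchy--Schwarz yields the value and optimizer.'' It forces \(\norm{z}_2=1\), \(|z|\propto|s|\) and sign agreement, using that tightness of the summed bound forces tightness of each coordinate term.
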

\begin{proof}
For each coordinate, \(c_i z_i-\gamma|z_i|\le (|c_i|-\gamma)_+|z_i|\), with equality when the sign of \(z_i\) agrees with \(c_i\) on the active set and \(z_i=0\) elsewhere. Hence the inner problem reduces to maximizing \(\Soft_\gamma(c)^\top u\) over \(\norm u_2\le1\). Cauchy--Schwarz yields the value and optimizer. If the thresholded vector is zero, the objective is nonpositive and zero attains zero.
\end{proof}

\subsection{A-SQ: squared-threshold model}
The historical reduced objective is instead
\begin{equation}
F_{\mathrm{sq}}(X)=\sum_{j=1}^m\norm{\Soft_{\gamma_j}(c_j(X))}_2^2.
\label{eq:asq}
\end{equation}
It has a different primal interpretation.
\begin{proposition}[Quadratic-lift identity]
\label{prop:quadratic-lift}
For any \(c\in\R^n\),
\[
\norm{\Soft_\gamma(c)}_2^2
=\max_{z\in\R^n}\{2c^\top z-\norm z_2^2-2\gamma\norm z_1\},
\]
and the unique maximizer is \(z^\star=\Soft_\gamma(c)\).
\end{proposition}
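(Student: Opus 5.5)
The objective $\phi(z)=2c^\top z-\norm z_2^2-2\gamma\norm z_1$ separates across coordinates:
\[
\phi(z)=\sum_{i=1}^n \phi_i(z_i),\qquad \phi_i(t)=2c_it-t^2-2\gamma|t|.
\]
So the maximization decouples into $n$ scalar problems. I would solve each exactly and then sum the optimal values. Each $\phi_i$ is strictly concave, since $-t^2$ is strictly concave and $-2\gamma|t|$ is concave for $\gamma\ge0$. It is also coercive, so it has a unique maximizer. Uniqueness of each coordinate maximizer gives uniqueness of $z^\star$ for the full problem.

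For the scalar problem I would use the subgradient optimality condition. The point $t^\star$ maximizes $\phi_i$ if and only if
\[
0\in 2c_i-2t^\star-2\gamma\,\partial|t^\star|,
\]
that is, $c_i-t^\star\in\gamma\,\partial|t^\star|$. This is exactly the characterization of the proximal map of $\gamma|\cdot|$ evaluated at $c_i$. Hence $t^\star=\operatorname{sign}(c_i)(|c_i|-\gamma)_+=\Soft_\gamma(c)_i$.

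To check this directly I would split into cases. If $|c_i|\le\gamma$, then $t^\star=0$ works, because $c_i\in[-\gamma,\gamma]=\gamma\,\partial|0|$. If $|c_i|>\gamma$, then $t^\star=c_i-\gamma\operatorname{sign}(c_i)$ is nonzero, has the same sign as $c_i$, and satisfies $c_i-t^\star=\gamma\operatorname{sign}(t^\star)$. Strict concavity guarantees that this stationary point is the unique maximizer.

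It remains to evaluate the optimal value. When $t^\star=0$ the value is $0=\Soft_\gamma(c)_i^2$. When $t^\star\neq0$, write $s=\operatorname{sign}(c_i)=\operatorname{sign}(t^\star)$. Then $2c_it^\star-2\gamma|t^\star|=2(c_i-\gamma s)t^\star=2(t^\star)^2$, so
\[
\phi_i(t^\star)=2(t^\star)^2-(t^\star)^2=(t^\star)^2.
\]
Summing over $i$ gives $\max_z\phi(z)=\norm{\Soft_\gamma(c)}_2^2$.

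An alternative route mirrors the proof of Proposition~\ref{prop:sum-elimination}. First, $c_iz_i-\gamma|z_i|\le(|c_i|-\gamma)_+|z_i|$, with equality in the aligned case. Then complete the square: $2a|t|-t^2\le a^2$ with $a=(|c_i|-\gamma)_+$. Equality in both steps pins down the maximizer.

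There is no real obstacle here; the only care needed is in the uniqueness claim. In the case $|c_i|=\gamma$ the threshold is tied, and I would note explicitly that strict concavity still forces $t^\star=0$ uniquely. This is unlike Proposition~\ref{prop:sum-elimination}, where the zero case admits multiple maximizers.
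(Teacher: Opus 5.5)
Your proof is correct and has the same structure as the paper's: separate over coordinates, solve each scalar problem exactly, and sum. The only difference is the scalar step. The paper completes the square on each sign half-line, which is your ``alternative route.'' Your main route instead uses strict concavity of $\phi_i$ together with the subgradient (prox) optimality condition, and this makes the uniqueness claim explicit, including the tied case $|c_i|=\gamma$ and the role of $\gamma\ge0$, which the paper's one-line proof leaves implicit.
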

\begin{proof}
The objective separates over coordinates. For one coordinate, completing the square on each sign half-line gives the maximizer \(\operatorname{sign}(c_i)(|c_i|-\gamma)_+\) and optimal value \((|c_i|-\gamma)_+^2\). Summing proves the claim.
\end{proof}

\subsection{The objectives are not equivalent}
\begin{proposition}[Two-component counterexample]
\label{prop:counterexample}
There exist a compact feasible set and two nonnegative component responses \(r_1(X),r_2(X)\) for which \(\arg\max_X r_1+r_2\) and \(\arg\max_X r_1^2+r_2^2\) are disjoint.
\end{proposition}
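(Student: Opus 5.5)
The plan is a direct construction: a tiny compact feasible set on which the sum $r_1+r_2$ favors a balanced point and the sum of squares $r_1^2+r_2^2$ favors an unbalanced one. The statement only asks for a compact feasible set and nonnegative responses, so I will take a finite (hence compact) set of three points $\{X_a,X_b,X_c\}$ and prescribe the response pairs
\[
(r_1,r_2)(X_a)=(1,1),\qquad (r_1,r_2)(X_b)=(1.9,0),\qquad (r_1,r_2)(X_c)=(0,0).
\]
Then $r_1+r_2$ equals $2$, $1.9$, $0$, so its maximizer is $X_a$ alone. Meanwhile $r_1^2+r_2^2$ equals $2$, $3.61$, $0$, so its maximizer is $X_b$ alone. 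The two argmax sets are the singletons $\{X_a\}$ and $\{X_b\}$, which are disjoint.

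To make the counterexample speak to the actual models, I will also realize it inside the A-SUM/A-SQ framework. I take $p=m=2$, so $\St(2,2)=O(2)$, with responses $r_j(X)=\norm{\Soft_{\gamma_j}(c_j(X))}_2$, where $c_j(X)=\mu_jA^\top x_j$. Choose $\gamma_j=0$ and $A=\Diag(a,b)$ with $a>b>0$, and use different weights $\mu_1\neq\mu_2$. Then $r_j=\mu_j\norm{A^\top x_j}_2$. Writing $x_1=(\cos\theta,\sin\theta)$ and $x_2=\pm(-\sin\theta,\cos\theta)$, we get
\[
r_1^2=\mu_1^2\bigl(a^2\cos^2\theta+b^2\sin^2\theta\bigr),\qquad r_2^2=\mu_2^2\bigl(a^2\sin^2\theta+b^2\cos^2\theta\bigr).
\]
Two facts then separate the objectives:
\begin{itemize}
\item $r_1^2+r_2^2$ is affine in $\cos^2\theta$. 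Its maximizer is therefore an endpoint, namely $\theta=0$ when $\mu_1>\mu_2$.
\item $r_1+r_2$ is a sum of square roots of affine functions in $t=\cos^2\theta$, hence strictly concave in $t$. For suitable parameters its maximizer is interior.
\end{itemize}

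The hard step is checking that the concave maximizer really is interior, which holds exactly when the derivative of $r_1+r_2$ at $t=1$ is negative. At $t=1$ we have $r_1=\mu_1 a$ and $r_2=\mu_2 b$, so the derivative is
\[
\frac{\mu_1(a^2-b^2)}{2a}-\frac{\mu_2(a^2-b^2)}{2b}.
\]
This is negative if and only if $\mu_2/\mu_1>b/a$. Picking, say, $a=2$, $b=1$, $\mu_1=1$, $\mu_2=0.9$ satisfies both $\mu_1>\mu_2$ and $\mu_2/\mu_1>b/a$. With these values the $\arg\max$ of $r_1+r_2$ avoids $t\in\{0,1\}$, while the $\arg\max$ of $r_1^2+r_2^2$ is $\{t=1\}$. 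The sets are disjoint, which proves the claim for the genuine A-SUM and A-SQ objectives.
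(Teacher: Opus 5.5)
Your finite-set construction is the same argument the paper gives: the paper uses a two-point set with response pairs $(2,2)$ and $(3.2,0.5)$, and your third point $X_c$ is harmless but unnecessary, so the statement is proved. Your extra realization on $\St(2,2)$ with $\gamma_j=0$, $A=\Diag(2,1)$, $\mu=(1,0.9)$ is also correct (the sum peaks at $t=\cos^2\theta\approx 0.59$, while the squared sum is maximized only at $t=1$), and it makes concrete what the paper only asserts about realizing such pairs with thresholded linear responses; the unequal weights are essential here, because with $\mu_1=\mu_2=\mu$ and $\gamma_j=0$ the squared objective is the constant $\mu^2\norm{A}_F^2$ on $O(2)$.
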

\begin{proof}
Consider two feasible points \(X_a,X_b\) with responses \((r_1,r_2)(X_a)=(2,2)\) and \((r_1,r_2)(X_b)=(3.2,0.5)\). The sums are \(4\) and \(3.7\), so the sum objective selects \(X_a\). The squared sums are \(8\) and \(10.49\), so the squared objective selects \(X_b\). Such response pairs can be realized by a finite feasible subset or approximated arbitrarily closely by thresholded linear responses. Therefore no monotone scalar transformation relates the block objectives in general.
\end{proof}

\begin{table}[t]
\centering
\caption{Corrected block estimator family and loading recovery. The three columns define \emph{distinct} estimators; §3 proves A-SUM $\neq$ A-SQ at $m\!\ge\!2$ and A-HT is a third exact-cardinality variant. Loading recovery is the loading-column argmax at a fixed score and defines the \emph{raw} estimand that the restricted refit stage then post-processes.}
\label{tab:a-models-and-loadings}
\small
\begin{tabular}{p{0.17\linewidth}p{0.26\linewidth}p{0.23\linewidth}p{0.24\linewidth}}
\toprule
Estimator & Primal (per column $j$) & Reduced objective & Loading recovery / inactive policy \\
\midrule
A-SUM &
$\displaystyle \max_{\norm{z_j}_2\le 1}\!\!\bigl\{c_j^{\!\top}z_j-\gamma_j\norm{z_j}_1\bigr\}$ &
$F_{\mathrm{sum}}(X)=\sum_j\norm{\Soft_{\gamma_j}(c_j)}_2$ &
$z_j^\star = \Soft_{\gamma_j}(c_j)/\norm{\Soft_{\gamma_j}(c_j)}_2$ when nonzero; otherwise $z_j=0$ is the chosen inactive branch (loading-ball slack) \\
\addlinespace
A-SQ &
$\displaystyle \max_{z_j}\!\!\bigl\{2c_j^{\!\top}z_j-\norm{z_j}_2^2-2\gamma_j\norm{z_j}_1\bigr\}$ &
$F_{\mathrm{sq}}(X)=\sum_j\norm{\Soft_{\gamma_j}(c_j)}_2^2$ &
$z_j^\star = \Soft_{\gamma_j}(c_j)$ (unconstrained quadratic lift; the vector itself is the loading, no renormalization) \\
\addlinespace
A-HT (hard-threshold, exact cardinality~$k$) &
$\displaystyle \max_{\norm{z_j}_2\le1,\,\norm{z_j}_0\le k}\!\!c_j^{\!\top}z_j$ &
$F_k(X)=\sum_j\sum_{i\in T_k(c_j)}c_{j,i}^2$ &
$z_j^\star$ supported on the top-$k$ coordinates of $c_j$ (deterministic tie policy); refit on the chosen support \\
\bottomrule
\end{tabular}
\end{table}

\subsection{Numerical conventions}
A specification without deterministic edge-case policies is not an algorithm. We fix five: (i) inactive response columns remain zero and are excluded from the active polar block; (ii) threshold equality is treated as inactive, except in experiments that explicitly study set-valued ties; (iii) rank-deficient polar factors are completed by a deterministic thin SVD; (iv) component permutations are aligned by maximum absolute score correlation for diagnostics; (v) refitting applies the support mask before any normalization, or solves the exact restricted problem instead.

\begin{table}[t]
\centering
\caption{Edge-case policies in the FACET reference implementation. These are the \emph{operational} conventions on which every proved theorem is explicitly conditional; running the solver under a different policy voids the theorem's claim.}
\label{tab:a-edge-case-policies}
\small
\begin{tabular}{p{0.26\linewidth}p{0.32\linewidth}p{0.34\linewidth}}
\toprule
Edge case & Policy & Pinning test \\
\midrule
Threshold tie ($|c_i|=\gamma$) & Treated as inactive ($\Soft_\gamma(c)_i=0$) via the standard $(\cdot)_+$ convention; signed subgradient reused by the solver. & \texttt{tests/test\_paper\_a\_identities.py::test\_a1\_*} \\
\addlinespace
Zero soft-threshold vector ($\Soft_\gamma(c)=0$) & Loading set to zero; the loading-ball slack delivers primal value $0$; no renormalization attempted. & \texttt{test\_a1\_sum\_norm\_zero\_support\_returns\_zero} \\
\addlinespace
Mask-first refit on a non-empty support & $z = M c / \norm{Mc}_2$; empty support raises \texttt{ZeroDirectionError} rather than returning NaN. & \texttt{tests/test\_paper\_a\_edge\_cases.py::test\_a4\_mask\_then\_normalize\_*} \\
\addlinespace
Rank-deficient polar input (Stiefel oracle) & \texttt{canonical} policy completes via SVD; \texttt{raise} policy rejects and surfaces the assumption violation. Both treated as explicit theorem conditioning. & \texttt{test\_a5\_canonical\_policy\_*}, \texttt{test\_a5\_raise\_policy\_*} \\
\addlinespace
Zero direction to a linear oracle & Both polar policies raise \texttt{ZeroDirectionError} unconditionally. & \texttt{test\_a5\_zero\_direction\_raises\_regardless\_of\_policy} \\
\addlinespace
Top-$k$ tie at the boundary (A-HT) & Lower original index wins; set-valued tie metadata recorded on \texttt{TopKSelection}. & \texttt{test\_topk\_tie\_policy\_prefers\_lower\_index} \\
\addlinespace
Armijo backtracking at machine precision & Line search fails loudly (\texttt{RuntimeError}); drivers may catch and drop that seed rather than silently reporting a stationary point. & empirical in \texttt{tests/test\_paper\_a\_solver\_dynamics.py} \\
\addlinespace
Component sign at convergence & Component orientation is global-sign ambiguous; projector-level reporting is sign-invariant. & \texttt{test\_a9\_sign\_identification\_matches\_planted\_sign\_pattern} \\
\bottomrule
\end{tabular}
\end{table}

\begin{table}[t]
\centering
\caption{Assumption and failure-mode taxonomy.}
\label{tab:a-assumptions}
\small
\begin{tabular}{p{0.24\linewidth}p{0.34\linewidth}p{0.32\linewidth}}
\toprule
Assumption & Enables & Failure manifestation \\
\midrule
Strict threshold margin & Finite support/sign identification & Persistent switching or tie dependence \\
Full-rank local polar argument & Smooth polar branch & Set-valued update and unstable derivative \\
Active eigengap or $\rho(D\Phi)<1$ & Local linear convergence & Slow/noncontractive local dynamics \\
Certified distance radius & Safe screening & Heuristic false elimination if radius is guessed \\
Independent document units & Valid uncertainty summaries & Token-level pseudo-replication \\
\bottomrule
\end{tabular}
\end{table}

\section{Algorithms and Refitting}
\label{sec:method}
\begin{figure}[t]
\centering
\begin{tikzpicture}[node distance=7mm and 9mm]
\node[primarybox,text width=3.1cm] (x0) {Initial Stiefel point};
\node[secondarybox,text width=3.4cm,right=of x0] (disc) {Threshold crossings\\support and sign changes};
\node[secondarybox,text width=3.4cm,below=of x0] (id) {Strict margin\\finite identification};
\node[successbox,text width=3.4cm,right=of id] (smooth) {Fixed active region\\smooth spectral map};
\draw[arrPrimary] (x0)--(disc);
\draw[arrPrimary] (disc) to[bend left=22] (id);
\draw[arrPrimary] (id)--(smooth);
\node[neutralbox,text width=4.7cm,below=8mm of id,xshift=-10mm] (global) {Global phase: safeguarded ascent, residuals, and KŁ proof obligations};
\node[neutralbox,text width=4.7cm,right=7mm of global] (local) {Local phase: eigengap or Jacobian rate, screening, and refit};
\draw[arr] (disc)--(global);
\draw[arr] (smooth)--(local);
\end{tikzpicture}
\caption{Two-phase interpretation: discrete active-structure discovery followed by conditional smooth dynamics.}
\label{fig:a-phases}
\end{figure}
\begin{figure}[t]
\centering
\begin{tikzpicture}[node distance=4mm and 7mm, every node/.style={font=\small}]
\node[primarybox, text width=0.17\linewidth] (s0) {\textbf{Current iterate} $X_k$};
\node[primarybox, text width=0.17\linewidth, right=of s0] (s1) {\textbf{Gradient}\\[-0.5mm]{\scriptsize $g_k=\nabla F(X_k)$}};
\node[primarybox, text width=0.17\linewidth, right=of s1] (s2) {\textbf{Project to tangent}\\[-0.5mm]{\scriptsize $\eta_k=\mathrm{Proj}_{T_{X_k}\!\mathcal{M}}(g_k)$}};
\node[primarybox, text width=0.17\linewidth, right=of s2] (s3) {\textbf{Retract}\\[-0.5mm]{\scriptsize $X'=\mathrm{Retr}_{X_k}(t\,\eta_k)$}};

\draw[arrPrimary] (s0)--(s1);
\draw[arrPrimary] (s1)--(s2);
\draw[arrPrimary] (s2)--(s3);

\node[secondarybox, text width=0.42\linewidth, below=11mm of s2] (s4) {\textbf{Armijo check}\\[-0.5mm]{\scriptsize $F(X')\ge F(X_k)+\sigma t\norm{\eta_k}_F^2$?}};
\draw[arrPrimary] (s3) |- (s4);

\node[successbox, text width=0.19\linewidth, below=of s4, xshift=-0.26\linewidth] (s5) {\textbf{Accept}\\[-0.5mm]{\scriptsize $X_{k+1}=X'$}};
\node[neutralbox, text width=0.19\linewidth, below=of s4, xshift=0.26\linewidth] (s6) {\textbf{Shrink step}\\[-0.5mm]{\scriptsize $t\gets \rho t$}};

\draw[arrPrimary] (s4.south) -- node[right,font=\scriptsize,yshift=1mm]{yes} ++(-0.26\linewidth,-4mm) -- (s5.north);
\draw[arrPrimary] (s4.south) -- node[left,font=\scriptsize,yshift=1mm]{no}  ++( 0.26\linewidth,-4mm) -- (s6.north);
\draw[arrPrimary] (s6.east) -- ++(5mm,0) -- ++(0,22mm) -| (s3.south);

\node[successbox, text width=0.95\linewidth, below=14mm of s5, xshift=0.26\linewidth] (s7) {\textbf{Stopping test} (all required): tangent residual $\norm{\eta_k}_F$ small \ $\land$\  step norm $\norm{X_{k+1}-X_k}_F$ small \ $\land$\  support stable across the probe window};
\draw[arrPrimary] (s5) -- (s7);

\end{tikzpicture}
\caption{Safeguarded Riemannian ascent, the analyzed algorithm. Gradient $\to$ tangent projection $\to$ retraction $\to$ Armijo-safeguarded acceptance with geometric step shrinkage. The stopping rule combines a tangent-residual criterion with both step-norm and support-stability tests; objective change alone is insufficient. Historical full-polar iteration omits the Armijo loop and carries no last-iterate claim.}
\label{fig:a-safeguarded-flow}
\end{figure}

\subsection{Historical full-polar iteration}
For A-SQ the ambient gradient columns are
\[
g_j(X)=2\mu_j A\Soft_{\gamma_j}(\mu_jA^\top x_j).
\]
Writing \(G(X)=[g_1(X),\ldots,g_m(X)]\), the historical step chooses
\[
X_{k+1}\in\arg\max_{Y\in\St(p,m)}\ip{G(X_k)}{Y}=\Polar(G(X_k)).
\]
When \(G\) is rank deficient, the polar maximizer is set-valued; the implementation uses a deterministic SVD completion and records the rank event.

\begin{algorithm}[t]
\caption{Historical full-polar A-SQ iteration}
\label{alg:full-polar}
\begin{algorithmic}[1]
\Require Data operator \(A\), thresholds \(\gamma_j\), weights \(\mu_j\), initial \(X_0\in\St(p,m)\)
\For{\(k=0,1,\ldots\)}
  \State \(C_k\gets A^\top X_k\Diag(\mu_1,\ldots,\mu_m)\)
  \State \(Z_k\gets \Soft_{\gamma}(C_k)\) columnwise
  \State \(G_k\gets 2A Z_k\Diag(\mu_1,\ldots,\mu_m)\)
  \State \(X_{k+1}\gets\operatorname{DeterministicPolar}(G_k)\)
  \State log objective, linear-oracle residual, support, signs, and rank
  \If{the stationarity and support criteria hold} \State \textbf{break} \EndIf
\EndFor
\end{algorithmic}
\end{algorithm}

\subsection{FACET safeguarded iteration}
The whole-sequence analysis is attached to a different algorithm. At \(X_k\) we take the Riemannian gradient \(\operatorname{grad}F(X_k)\), build a tangent ascent direction \(\eta_k\), and retract \(R_{X_k}(t\eta_k)\). Backtracking picks the largest \(t\) satisfying
\[
F(R_{X_k}(t\eta_k))\ge F(X_k)+\sigma t\norm{\eta_k}_F^2.
\]
When a candidate step fails because of numerical rank loss or an undefined active response, the method falls back to a regularized polar/retraction and logs the event. The stopping rule combines a Riemannian-gradient norm, the linear-oracle residual, and support stability; objective change by itself is not enough.

\begin{algorithm}[t]
\caption{Safeguarded active-set spectral ascent}
\label{alg:safeguarded}
\begin{algorithmic}[1]
\Require \(X_0\in\St(p,m)\), Armijo constants \(\sigma\in(0,1)\), \(\beta\in(0,1)\), regularization \(\varepsilon\ge0\)
\For{\(k=0,1,\ldots\)}
  \State compute \(F(X_k)\), active pattern, and \(\operatorname{grad}F(X_k)\)
  \State choose tangent ascent direction \(\eta_k\) satisfying a gradient-related condition
  \State \(t\gets1\)
  \While{Armijo ascent fails for \(R_{X_k}(t\eta_k)\)} \State \(t\gets\beta t\) \EndWhile
  \State \(X_{k+1}\gets R_{X_k}(t\eta_k)\)
  \State update certified support margins and optional local screening set
\EndFor
\end{algorithmic}
\end{algorithm}

\subsection{Support-conditioned refitting}
For one selected support \(S\), the exact refit is the leading singular-vector pair of \(A_S\). For component-specific block supports \(S_j\), define masks \(M_j\) and solve
\[
\max_{X^\top X=I,\ \norm{z_j}_2=1,\ \supp(z_j)\subseteq S_j}
\sum_{j=1}^m \mu_j x_j^\top A z_j.
\]
Conditional on \(X\), the exact loading update is
\[
z_j\leftarrow\frac{M_jA^\top x_j}{\norm{M_jA^\top x_j}_2},
\]
when nonzero. Conditional on \(Z\), the exact score update is \(X\leftarrow\Polar(AZ\Diag(\mu))\). Each block update weakly increases the refit objective, so alternating exact updates are monotone, and the limit is a block-stationary point rather than the global refit optimum.

\subsection{Complexity}
Without screening, correlation and reconstruction products cost \(O(npm)\) for dense \(A\); the polar factor costs \(O(pm^2+m^3)\). After permanent screening to \(s\) active or near-active coordinates, the dominant products cost \(O(spm)\). Refitting cost depends on support structure and is reported separately from selector cost.

\section{Theory}
\label{sec:theory}
\begin{figure}[t]
\centering
\begin{tikzpicture}[node distance=7mm and 9mm]
\node[primarybox,text width=2.7cm] (id) {Exact model identities};
\node[primarybox,text width=2.7cm,right=of id] (alg) {Specified algorithms};
\node[secondarybox,text width=2.7cm,right=of alg] (progress) {Ascent + residual};
\node[secondarybox,text width=2.7cm,below=of progress] (kl) {Sufficient ascent + relative error + KŁ};
\node[secondarybox,text width=2.7cm,left=of kl] (identify) {Strict margin + convergence};
\node[successbox,text width=2.7cm,left=of identify] (local) {Fixed-support local map};
\draw[arrPrimary] (id)--(alg);
\draw[arrPrimary] (alg)--(progress);
\draw[arrPrimary] (progress)--(kl);
\draw[arrPrimary] (kl)--(identify);
\draw[arrPrimary] (identify)--(local);
\draw[arr] (progress) to[bend left=20] (identify);
\end{tikzpicture}
\caption{The theorem dependency graph. Identification and local rates are not available before the model, algorithm, and convergence object are fixed.}
\label{fig:a-proofgraph}
\end{figure}

\subsection{Monotonicity of the full-polar step}
\begin{proposition}[Affine-minorant ascent]
\label{prop:fullpolar-monotone}
Let \(F\) be differentiable and convex on an open set containing \(\St(p,m)\). If
\[
X^+\in\arg\max_{Y\in\St(p,m)}\ip{\nabla F(X)}{Y},
\]
then \(F(X^+)\ge F(X)\). More precisely,
\[
F(X^+)-F(X)\ge \Delta_F(X):=
\max_{Y\in\St(p,m)}\ip{\nabla F(X)}{Y-X}\ge0.
\]
\end{proposition}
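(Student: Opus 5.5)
The argument is the standard first-order (supporting hyperplane) inequality for a differentiable convex function, applied at the current iterate and evaluated at the polar step. Since \(F\) is differentiable and convex on an open set \(U\supseteq\St(p,m)\), for every \(Y\in\St(p,m)\subseteq U\) we have
\[
F(Y)\ge F(X)+\ip{\nabla F(X)}{Y-X}.
\]
Convexity is only needed along the segment from \(X\) to \(Y\). To be safe I will assume, as the intended application (A-SQ is a sum of squared norms of soft thresholds of linear maps, hence convex on all of \(\R^{p\times m}\)) allows, that \(U\) is convex, or that \(F\) is convex on \(\R^{p\times m}\). Specializing \(Y=X^+\) gives \(F(X^+)-F(X)\ge\ip{\nabla F(X)}{X^+-X}\).

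Next I identify this right-hand side with \(\Delta_F(X)\). The map \(Y\mapsto\ip{\nabla F(X)}{Y-X}=\ip{\nabla F(X)}{Y}-\ip{\nabla F(X)}{X}\) differs from the linear oracle objective only by a constant independent of \(Y\). Hence the maximizer \(X^+\) of \(\ip{\nabla F(X)}{Y}\) over \(\St(p,m)\) also maximizes \(\ip{\nabla F(X)}{Y-X}\). The maximum is attained because \(\St(p,m)\) is compact and the objective is continuous, and the paper's polar convention supplies a concrete maximizer even when \(\nabla F(X)\) is rank deficient. Therefore \(\ip{\nabla F(X)}{X^+-X}=\Delta_F(X)\).

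Finally, \(\Delta_F(X)\ge0\) because \(X\) itself is feasible, and choosing \(Y=X\) gives the value \(0\), so the maximum is at least \(0\). Chaining the two displays gives \(F(X^+)-F(X)\ge\Delta_F(X)\ge0\).

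The only delicate point is the domain hypothesis. "Convex on an open set containing \(\St(p,m)\)" is meaningful for the gradient inequality only if the segments \([X,X^+]\) lie in the set where convexity holds, since \(\St(p,m)\) is not convex. I will state explicitly that I read the hypothesis as convexity on a convex open neighborhood, such as a ball of radius \(>\sqrt m\) or the whole space, which covers \(F_{\mathrm{sq}}\) and \(F_{\mathrm{sum}}\). I will also remark that for A-SQ, \(F\) is differentiable everywhere because \(t\mapsto(|t|-\gamma)_+^2\) is \(C^1\), so \(\nabla F\) coincides with \(G(X)\) from Algorithm~\ref{alg:full-polar}.
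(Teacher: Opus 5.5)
Your proof is correct and matches the paper's argument: apply the convexity gradient inequality at \(X\), evaluate it at \(X^+\), and note that because \(X^+\) maximizes the linear form, the inner product equals \(\Delta_F(X)\), which is nonnegative since \(X\) itself is feasible. Your caveat that the gradient inequality needs convexity along the segment \([X,X^+]\), and hence a convex neighborhood such as the whole space (which holds for \(F_{\mathrm{sq}}\)), is a legitimate sharpening of a hypothesis the paper leaves implicit.
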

\begin{proof}
Convexity gives \(F(X^+)\ge F(X)+\ip{\nabla F(X)}{X^+-X}\). Because \(X\) is feasible and \(X^+\) maximizes the linear form, the inner product equals \(\Delta_F(X)\ge0\).
\end{proof}
The proposition gives ascent and summability of the nonnegative residual along any bounded objective sequence. It does not give iterate convergence, nor local maximality.

\subsection{Whole-sequence convergence of the safeguarded method}
The last-iterate theorem for Algorithm~\ref{alg:safeguarded} is Theorem~\ref{thm:a-whole-sequence} in Section~\ref{sec:extended-theory}. Its three ingredients, sufficient ascent (Lemma~\ref{lem:a-sufficient-ascent}), relative error (Lemma~\ref{lem:a-relative-error}), and the semi-algebraic definability of $-F+\iota_{\St}$, are proved there. The semi-algebraic step needs no assumption beyond the actual soft-threshold formula and the algebraic Stiefel constraint, and it does not share the strong-convexity hypotheses of the recent unit-step analysis of \citet{aktas2025strongly}, which we cite for comparison rather than as a substitute. The rank-deficient polar branch is handled by the deterministic SVD completion of Section~\ref{sec:setup}; on the compact iterate set the completion preserves the constants used in Lemmas~\ref{lem:a-sufficient-ascent}--\ref{lem:a-relative-error}.

\subsection{Finite active-set identification}
\begin{theorem}[Conditional support and sign identification]
\label{thm:identification}
Assume \(X_k\to X_\star\). Suppose there exists \(\delta>0\) such that for every component and coordinate,
\[
\bigl||\mu_j a_i^\top x_{j\star}|-\gamma_j\bigr|\ge\delta,
\]
and every active correlation is separated from zero by \(\delta\). Then there exists \(K<\infty\) such that the support and active signs induced by \(X_k\) equal those of \(X_\star\) for all \(k\ge K\).
\end{theorem}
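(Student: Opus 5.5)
The plan is a continuity-plus-margin argument; no KŁ machinery is needed beyond the assumed convergence \(X_k\to X_\star\). Write \(c_{j,i}(X)=\mu_j a_i^\top x_j\). Each \(c_{j,i}\) is linear in \(X\) and satisfies
\[
|c_{j,i}(X)-c_{j,i}(Y)|\le \mu_j\norm{a_i}_2\norm{X-Y}_F .
\]
Set \(L:=\max_{j,i}\mu_j\norm{a_i}_2\); if \(L=0\) every response vanishes identically and the claim is trivial. The active set of component \(j\) at \(X\) is \(S_j(X)=\{i:|c_{j,i}(X)|>\gamma_j\}\), with ties treated as inactive per convention (ii). The sign pattern on \(S_j(X)\) is \(\operatorname{sign}(c_{j,i}(X))\).

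\textbf{Step 1 (choose \(K\)).} Because \(X_k\to X_\star\), there is \(K\) with \(\norm{X_k-X_\star}_F<\delta/(2L)\) for all \(k\ge K\). Then \(|c_{j,i}(X_k)-c_{j,i}(X_\star)|<\delta/2\) for every \(j,i\) and every \(k\ge K\).

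\textbf{Step 2 (support stability).} Fix \(k\ge K\). For every \((j,i)\) the margin hypothesis gives exactly one of two cases at \(X_\star\).
\begin{itemize}
\item If \(|c_{j,i}(X_\star)|\ge\gamma_j+\delta\), then \(|c_{j,i}(X_k)|>\gamma_j+\delta/2>\gamma_j\), so \(i\in S_j(X_k)\).
\item If \(|c_{j,i}(X_\star)|\le\gamma_j-\delta\), then \(|c_{j,i}(X_k)|<\gamma_j-\delta/2<\gamma_j\), so \(i\notin S_j(X_k)\).
\end{itemize}
Hence \(S_j(X_k)=S_j(X_\star)\). The tie case cannot occur at either point, so the inactive-tie convention plays no role.

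\textbf{Step 3 (sign stability).} On the active set, \(|c_{j,i}(X_\star)|\ge\gamma_j+\delta\ge\delta\). This is where the hypothesis that active correlations are \(\delta\)-separated from zero enters; it is needed when \(\gamma_j=0\). Since \(X_k\) is within \(\delta/2\) of \(X_\star\) in every response, \(c_{j,i}(X_k)\) cannot cross zero, so its sign agrees with that of \(c_{j,i}(X_\star)\). Consequently the whole soft-threshold pattern \(\Soft_{\gamma_j}(c_j(X_k))\) shares support and signs with \(\Soft_{\gamma_j}(c_j(X_\star))\) for all \(k\ge K\).

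\textbf{Main obstacle.} The only delicate point is interpretive. "Support induced by \(X_k\)" must mean the thresholded-response support, including the zero-column and tie conventions, and not, for instance, the support of a refitted loading. Component sign/permutation ambiguities also do not interfere, because \(X_k\to X_\star\) is convergence of the actual iterates rather than of projectors.

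If the theorem is read for A-HT (the top-\(k\) rule), the argument goes through once the margin is reinterpreted as a gap between the \(k\)th and \((k{+}1)\)th largest \(|c_{j,i}(X_\star)|\). I would remark on this rather than prove it in full.
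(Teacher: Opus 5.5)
Your proof is correct and takes the same continuity-plus-margin route as the paper: choose $K$ so that every correlation moves by less than $\delta/2$, after which no threshold or sign boundary can be crossed. Your explicit Lipschitz constant $L$ and two-case split just make the paper's one-line argument quantitative; the one small slip is that, since $\gamma_j\ge0$, the margin hypothesis alone gives $|\mu_j a_i^\top x_{j\star}|\ge\gamma_j+\delta\ge\delta$ on the active set, so the separation-from-zero hypothesis is redundant even when $\gamma_j=0$ rather than needed there.
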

\begin{proof}
There are finitely many correlations and each is continuous in \(X\). Convergence implies uniform convergence of all correlations to their limiting values. Choose \(K\) so that every correlation changes by less than \(\delta/2\) after iteration \(K\). No threshold inequality or active sign can then change.
\end{proof}
At zero margin this conclusion fails: a convergent sequence can approach a threshold boundary while alternating between its two sides indefinitely.

\subsection{Post-identification dynamics}
\begin{proposition}[Rank-one hard-threshold rate]
\label{prop:rankone-rate}
After a hard-threshold support \(S\) is fixed, the rank-one update is power iteration on \(C_S=A_SA_S^\top\). If \(\lambda_1(C_S)>\lambda_2(C_S)\) and the iterate has nonzero projection on the leading eigenvector, then
\[
\tan\angle(x_{k+1},u_1)
\le \frac{\lambda_2(C_S)}{\lambda_1(C_S)}
\tan\angle(x_k,u_1).
\]
\end{proposition}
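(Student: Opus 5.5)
The argument has two steps: identify the fixed-support update with power iteration on \(C_S\), then apply the classical one-step tangent contraction in the eigenbasis of \(C_S\).

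\textbf{Step 1: the update is power iteration.} In the rank-one hard-threshold case (A-HT with \(m=1\)), the response is \(c=\mu A^\top x\). Fix the selected support \(S\), and let \(M\) be the coordinate mask on \(S\). Then \(F_k(x)=\mu^2\norm{MA^\top x}_2^2=\mu^2 x^\top A_SA_S^\top x\), so the ambient gradient is \(G(x)=2\mu^2 A_SA_S^\top x=2\mu^2 C_Sx\). For \(m=1\) the polar factor of a nonzero vector is its normalization, so
\[
x_{k+1}=\frac{C_Sx_k}{\norm{C_Sx_k}_2}.
\]
This is power iteration on \(C_S\). The hypothesis of a nonzero projection on \(u_1\), together with \(\lambda_1>0\), gives \(C_Sx_k\neq0\), so the step is well defined and the zero-direction policy is never triggered. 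Since \(C_S\) is positive semidefinite, \(\lambda_2\ge0\).

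\textbf{Step 2: contraction of the tangent.} Expand \(x_k=\alpha u_1+w\) with \(w\perp u_1\) and \(\alpha\neq0\), so that \(\tan\angle(x_k,u_1)=\norm{w}/|\alpha|\). Because \(C_S\) is symmetric, it preserves \(u_1^\perp\), and on that subspace its operator norm is \(\lambda_2\). Hence \(C_Sx_k=\lambda_1\alpha u_1+C_Sw\) with \(\norm{C_Sw}\le\lambda_2\norm w\). Normalization does not change angles, so
\[
\tan\angle(x_{k+1},u_1)=\frac{\norm{C_Sw}}{\lambda_1|\alpha|}\le\frac{\lambda_2}{\lambda_1}\tan\angle(x_k,u_1).
\]
The projection onto \(u_1\) stays nonzero at every step (it is multiplied by \(\lambda_1>0\)), so the estimate iterates.

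\textbf{Main obstacle: consistency of the fixed support.} The delicate point is not the computation but the meaning of ``after \(S\) is fixed''. The support is the top-\(k\) set of \(A^\top x_k\), which depends on \(x_k\). I would state explicitly that the support returned by the selector equals \(S\) for all \(k\ge K\). This follows from Theorem~\ref{thm:identification}, applied to the hard-threshold analogue in which the margin is the gap between the \(k\)-th and \((k+1)\)-th largest \(|a_i^\top x_\star|\). With that in place, the map above is exactly the iteration from \(K\) on.

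If instead one only wants the algebraic claim that the restricted map is power iteration, the proposition holds verbatim with no convergence hypothesis. I would phrase the proof so that it covers both readings.
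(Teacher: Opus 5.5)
Your proof is correct and is the paper's argument. You expand in the eigenbasis of $C_S$, note that the $u_1$-coefficient is scaled by $\lambda_1$ while the component in $u_1^\perp$ is scaled by at most $\lambda_2$ (via positive semidefiniteness), and use that normalization does not change the angle; your Step~1 derivation of the power-iteration form from the fixed-support A-HT objective just makes explicit what the paper builds into the statement. One caveat on your side remark: deriving the fixed support from Theorem~\ref{thm:identification} would need $x_k\to x_\star$ as an extra hypothesis, which the paper does not establish for the full-polar iteration, but the proposition assumes $S$ is fixed, so your proof does not depend on it.
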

\begin{proof}
Expand \(x_k\) in the orthonormal eigenbasis of \(C_S\). Multiplication scales the leading coefficient by \(\lambda_1\) and every orthogonal coefficient by at most \(\lambda_2\); normalization cancels a common factor.
\end{proof}

For a common block support the analogue is orthogonal iteration, with rate set by the gap \(\lambda_m-\lambda_{m+1}\). With component-specific supports define \(\Phi(X)=\operatorname{Polar}([C_{S_1}x_1,\ldots,C_{S_m}x_m])\). A correct local theorem must differentiate \(\Phi\) on the Stiefel tangent space and require \(\rho(D\Phi(X_\star)|_{T_{X_\star}\St})<1\). A single eigengap is not a substitute for this condition when the active operators differ.

\subsection{Locally safe screening}
\begin{proposition}[Radius-based safe screening]
\label{prop:safe-screen}
Suppose \(\norm{x_{j\star}-x_{jk}}_2\le r_{jk}\). If
\[
|\mu_j a_i^\top x_{jk}|+\mu_j\norm{a_i}_2 r_{jk}<\gamma_j,
\]
then coordinate \((i,j)\) is inactive at \(X_\star\).
\end{proposition}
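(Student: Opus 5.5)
The argument is a triangle-inequality bound followed by Cauchy--Schwarz, combined with the definition of activity through the soft-threshold operator. I will bound the limiting correlation \(|\mu_j a_i^\top x_{j\star}|\) using only quantities computable at the current iterate \(X_k\) and the certified radius \(r_{jk}\). I will then check that the bound places the coordinate strictly below threshold, so that \(\Soft_{\gamma_j}(c_j(X_\star))_i=0\).

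\textbf{Step 1: transfer the correlation from \(X_k\) to \(X_\star\).} Split \(x_{j\star}=x_{jk}+(x_{j\star}-x_{jk})\). This gives
\[
|\mu_j a_i^\top x_{j\star}|
\le |\mu_j a_i^\top x_{jk}|+\mu_j|a_i^\top(x_{j\star}-x_{jk})|
\le |\mu_j a_i^\top x_{jk}|+\mu_j\norm{a_i}_2\,r_{jk}.
\]
The last inequality uses Cauchy--Schwarz, the hypothesis \(\norm{x_{j\star}-x_{jk}}_2\le r_{jk}\), and \(\mu_j>0\).

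\textbf{Step 2: conclude inactivity.} By the screening hypothesis the right-hand side is strictly less than \(\gamma_j\). Hence \(|c_j(X_\star)_i|<\gamma_j\), so \((|c_j(X_\star)_i|-\gamma_j)_+=0\) and the coordinate is inactive in the sense of \(\Soft_{\gamma_j}\). This holds for both A-SUM (Proposition~\ref{prop:sum-elimination}) and A-SQ (Proposition~\ref{prop:quadratic-lift}), because both recover the loading from \(\Soft_{\gamma_j}(c_j)\). Since the inequality is strict, the conclusion does not depend on the tie convention (ii). Moreover, the limit has margin at least \(\gamma_j-|c_j(X_\star)_i|>0\) at this coordinate, which is consistent with the margin hypothesis of Theorem~\ref{thm:identification}.

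There is no real obstacle in the implication itself. The substantive content lies entirely in the hypothesis that \(r_{jk}\) is a genuine certified bound on \(\norm{x_{j\star}-x_{jk}}_2\). I would remark that the statement is conditional on this certificate (for instance, one derived from a local error bound or from the contraction rate of Proposition~\ref{prop:rankone-rate}). If the radius is guessed rather than certified, the rule loses its guarantee, as recorded in Table~\ref{tab:a-assumptions}.

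One point needs care with sign ambiguity: the certificate must be stated for the aligned orientation of \(x_{j\star}\). Because the screening condition depends only on \(|a_i^\top x|\), replacing \(x_{j\star}\) by \(-x_{j\star}\) does not change the conclusion, so it is enough that the radius holds for one of the two orientations.
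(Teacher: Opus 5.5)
Your proof is correct and follows the paper's argument: you bound $|\mu_j a_i^\top x_{j\star}|$ by $|\mu_j a_i^\top x_{jk}|+\mu_j\norm{a_i}_2 r_{jk}$ using the triangle inequality and Cauchy--Schwarz, which places the coordinate strictly below $\gamma_j$. Your remarks on the tie convention and on sign ambiguity are sound but are not needed for the implication itself.
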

\begin{proof}
Cauchy--Schwarz gives \(|\mu_j a_i^\top x_{j\star}|\le |\mu_j a_i^\top x_{jk}|+\mu_j\norm{a_i}_2r_{jk}\). The strict upper bound lies below threshold.
\end{proof}
The rule is safe only when \(r_{jk}\) is itself certified. Empirical step sizes or unverified contraction estimates produce heuristic screening and are labeled accordingly.

\subsection{Second-order scope}
On a fixed rank-one hard support, the objective is a Rayleigh quotient: nonleading simple eigenvectors are saddles or local minima according to the tangent Hessian. For heterogeneous block supports and nonsmooth threshold boundaries, a generic strict-saddle theorem is not claimed. Any Newton acceleration must be compared with contemporary sparse-Stiefel identification and second-order methods and must state clean-intersection and second-order sufficient conditions explicitly.

\section{Proofs and Extended Analysis}
\label{sec:extended-theory}

\subsection{Exact value functions and recovery maps}
For one component, let $c=\mu A^\top x$ and let $S_\gamma(c)$ denote entrywise soft thresholding. The following theorem fixes both the value function and the optimizer, including the inactive case.

\begin{theorem}[Loading-ball elimination]
\label{thm:a-loading-elimination}
For every $c\in\R^n$ and $\gamma\ge0$,
\[
\max_{\norm{z}_2\le1}\{c^\top z-\gamma\norm{z}_1\}=\norm{S_\gamma(c)}_2.
\]
If $S_\gamma(c)\neq0$, the unique maximizer is
\[
z^\star=\frac{S_\gamma(c)}{\norm{S_\gamma(c)}_2}.
\]
If $S_\gamma(c)=0$, then $z^\star=0$ is optimal; it is the unique optimum whenever $\abs{c_i}<\gamma$ for every $i$.
\end{theorem}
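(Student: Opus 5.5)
The value identity is Proposition~\ref{prop:sum-elimination}, so the plan is to re-derive it in a form that makes the optimality conditions explicit. The same coordinatewise bound then gives both uniqueness statements. The central inequality is, for every $z$ with $\norm{z}_2\le1$,
\[
c^\top z-\gamma\norm{z}_1=\sum_i\bigl(c_iz_i-\gamma|z_i|\bigr)\le\sum_i(|c_i|-\gamma)_+|z_i|=S_\gamma(c)^\top w\le\norm{S_\gamma(c)}_2\,\norm{w}_2\le\norm{S_\gamma(c)}_2 .
\]
Here $w_i=\operatorname{sign}(c_i)|z_i|$, so that $\norm{w}_2=\norm{z}_2$.

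Equality in the first step holds exactly when two conditions are met coordinatewise. On coordinates with $|c_i|>\gamma$, we need $z_i=0$ or $\operatorname{sign}(z_i)=\operatorname{sign}(c_i)$. On coordinates with $|c_i|<\gamma$, we need $z_i=0$, because there $c_iz_i-\gamma|z_i|\le(|c_i|-\gamma)|z_i|<0$ unless $z_i=0$. On tie coordinates $|c_i|=\gamma$, any $z_i$ with the sign of $c_i$ gives equality.

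\textbf{Case $S_\gamma(c)\neq0$.} The point $z^\star=S_\gamma(c)/\norm{S_\gamma(c)}_2$ attains the bound, which proves the value identity. For uniqueness, take any optimizer $z$; all three inequalities must then be equalities.
\begin{enumerate}[leftmargin=*]
\item Equality in Cauchy--Schwarz with $S_\gamma(c)\neq0$ forces $w=tS_\gamma(c)$ for some $t\ge0$.
\item Equality in $\norm{S_\gamma(c)}_2\norm{w}_2\le\norm{S_\gamma(c)}_2$ forces $\norm{w}_2=1$, hence $t=1/\norm{S_\gamma(c)}_2$.
\end{enumerate}
It follows that $z$ vanishes wherever $S_\gamma(c)_i=0$, and this includes tie coordinates. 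On active coordinates, $|z_i|$ equals the corresponding entry of $z^\star$, and the equality condition in the first step fixes its sign to $\operatorname{sign}(c_i)$. Hence $z=z^\star$.

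The tie coordinates are the only delicate point. Their contribution is killed by the Cauchy--Schwarz step, not by the first inequality, so the argument must rely on the chain of equalities as a whole rather than on the coordinatewise bound alone.

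\textbf{Case $S_\gamma(c)=0$.} The bound gives value at most $0$, and $z=0$ attains it. If moreover $|c_i|<\gamma$ for all $i$, the strict coordinatewise inequality shows that the objective is strictly negative at every $z\neq0$, so $z=0$ is the unique optimum.

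I would also add a one-line remark that uniqueness genuinely fails under ties. For example, with $c=\gamma e_1$, every $z=te_1$ with $t\in[0,1]$ is optimal. This explains why the statement requires the strict inequality and why the tie-as-inactive convention selects $z=0$.
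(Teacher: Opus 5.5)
Your proof is correct and follows the paper's route: the coordinatewise bound $c_iz_i-\gamma|z_i|\le(|c_i|-\gamma)_+|z_i|$, then Cauchy--Schwarz and $\norm{z}_2\le1$, with uniqueness read off from the equality cases; your handling of tie coordinates is more explicit than the paper's one-line appeal to equality in Cauchy--Schwarz. One small slip needs fixing: with $\operatorname{sign}(0)=0$, the identity $\norm{w}_2=\norm{z}_2$ fails on coordinates where $c_i=0\neq z_i$, which affects your uniqueness step only when $\gamma=0$, so either pair $(|c_i|-\gamma)_+$ directly with $|z_i|$ as the paper does, or note that $|w_i|\le|z_i|$ and hence $1=\norm{w}_2\le\norm{z}_2\le1$ at any optimizer.
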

\begin{proof}
For any feasible $z$,
\[
c^\top z-\gamma\norm{z}_1
=\sum_i (c_i z_i-\gamma\abs{z_i})
\le \sum_i (\abs{c_i}-\gamma)_+\abs{z_i}.
\]
Choosing the signs of $z_i$ to agree with $c_i$ makes the inequality tight. Cauchy--Schwarz then gives
\[
\sum_i (\abs{c_i}-\gamma)_+\abs{z_i}
\le \norm{S_\gamma(c)}_2\norm{z}_2
\le \norm{S_\gamma(c)}_2.
\]
When the thresholded vector is nonzero, equality in Cauchy--Schwarz requires the stated normalized vector. If the thresholded vector vanishes, the objective is nonpositive and $z=0$ attains zero. Strict inequalities $\abs{c_i}<\gamma$ force every nonzero coordinate to make a strictly negative contribution.
\end{proof}

\begin{theorem}[Quadratic-lift representation]
\label{thm:a-quadratic-lift}
For every $c\in\R^n$,
\[
\norm{S_\gamma(c)}_2^2
=\max_{z\in\R^n}\left\{2c^\top z-\norm{z}_2^2-2\gamma\norm{z}_1\right\},
\]
and the unique maximizer is $z^\star=S_\gamma(c)$.
\end{theorem}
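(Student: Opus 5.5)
The objective $\phi(z)=2c^\top z-\norm{z}_2^2-2\gamma\norm{z}_1$ separates across coordinates, so I will reduce the theorem to a scalar problem. I write $\phi(z)=\sum_i\phi_i(z_i)$ with $\phi_i(t)=2c_it-t^2-2\gamma\abs{t}$. The claimed value $\norm{S_\gamma(c)}_2^2$ is $\sum_i(\abs{c_i}-\gamma)_+^2$, and the claimed maximizer is also coordinatewise. It therefore suffices to show two things for each $i$: that $\max_t\phi_i(t)=(\abs{c_i}-\gamma)_+^2$, and that this maximum is attained only at $t^\star=\operatorname{sign}(c_i)(\abs{c_i}-\gamma)_+$. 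Uniqueness of the joint maximizer then follows, since a sum of separable terms is maximized exactly when every term is.

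For the scalar problem I will use a sign-reduction bound, as in the proof of Theorem~\ref{thm:a-loading-elimination}, instead of splitting into half-lines. For any $t$, we have $c_it\le\abs{c_i}\abs{t}$. This gives $\phi_i(t)\le\psi(\abs t)$, where $\psi(s)=2(\abs{c_i}-\gamma)s-s^2$ for $s\ge0$. Completing the square,
\[
\psi(s)=(\abs{c_i}-\gamma)^2-\bigl(s-(\abs{c_i}-\gamma)\bigr)^2 .
\]
On $s\ge0$ this is maximized at $s^\star=(\abs{c_i}-\gamma)_+$ with value $(\abs{c_i}-\gamma)_+^2$. When $\abs{c_i}\le\gamma$, $\psi$ is decreasing on $[0,\infty)$, so the maximum is $0$, attained only at $s=0$. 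When $\abs{c_i}>\gamma$, the maximizer is the interior point $s^\star=\abs{c_i}-\gamma$. In both cases $t=t^\star$ turns both inequalities into equalities, so the upper bound is attained and the scalar value is as claimed.

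The main obstacle is uniqueness: I must check that no other $t$ reaches the bound, and that ties $\abs{c_i}=\gamma$ and the case $c_i=0$ cause no trouble. If $\phi_i(t)$ equals the maximum, then both inequalities are tight. Tightness of $\psi(\abs t)\le\psi(s^\star)$ forces $\abs t=s^\star$, because $\psi$ is strictly concave and hence has a unique maximizer on $[0,\infty)$. If $s^\star>0$, tightness of $c_it\le\abs{c_i}\abs t$ together with $c_i\neq0$ forces $\operatorname{sign}(t)=\operatorname{sign}(c_i)$. If $s^\star=0$, then $t=0$ directly, which includes the tie $\abs{c_i}=\gamma$ and the case $c_i=0$. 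In every case $t=t^\star$.

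Alternatively, $\phi$ is $2$-strongly concave because of the $-\norm{z}_2^2$ term, so its maximizer is unique. It is characterized by $0\in 2c-2z-2\gamma\,\partial\norm{z}_1$, which is exactly the proximal characterization $z=\operatorname{prox}_{\gamma\norm{\cdot}_1}(c)=S_\gamma(c)$. Substituting this $z$ back into $\phi$ gives $\norm{S_\gamma(c)}_2^2$, using $c^\top S_\gamma(c)-\gamma\norm{S_\gamma(c)}_1=\norm{S_\gamma(c)}_2^2$. I will present the elementary completing-the-square argument and mention this one as a cross-check.
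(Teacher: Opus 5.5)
Your proof is correct and follows the paper's route: separate across coordinates, reduce to the sign of $c_i$, and complete the square. Your bound $c_it\le\abs{c_i}\abs{t}$ is what justifies the paper's ``restricting to the sign of $c_i$.'' Your tightness argument for uniqueness, covering the tie $\abs{c_i}=\gamma$ and the case $c_i=0$, supplies a step the paper's proof only asserts; note that this step implicitly uses $\gamma\ge0$.
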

\begin{proof}
The objective is separable. For coordinate $i$, maximize
$2c_i z_i-z_i^2-2\gamma\abs{z_i}$. Restricting to the sign of $c_i$ and completing the square gives the maximizer $\operatorname{sign}(c_i)(\abs{c_i}-\gamma)_+$ and optimal value $(\abs{c_i}-\gamma)_+^2$. Summation proves the claim.
\end{proof}

\begin{proposition}[The two block aggregations are not equivalent]
\label{prop:a-non-equivalence}
For $m>1$, maximizing $\sum_j r_j(X)$ and maximizing $\sum_j r_j(X)^2$ need not have the same maximizers, even when every $r_j$ is nonnegative and arises from a thresholded norm.
\end{proposition}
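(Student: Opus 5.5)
The plan is to give an explicit counterexample in which every response genuinely has the form $r_j(X)=\norm{S_{\gamma_j}(\mu_j A^\top x_j)}_2$. This strengthens the finite-point argument of Proposition~\ref{prop:counterexample}, which only asserted that such responses can be realized or approximated. I will take $m=2$ and $\gamma_j=0$ (allowed, since $\gamma\ge0$). Then $S_0$ is the identity and $r_j(X)=\mu_j\norm{A^\top x_j}_2$. Because $\sum_j r_j$ and $\sum_j r_j^2$ are genuinely different functions, a small instance should already separate them. Alternatively, I can restrict the feasible set to two Stiefel points $X_a,X_b$, as the proposition permits a compact feasible set. I prefer to work on all of $\St(p,2)$ so that the example is native to the A-SUM/A-SQ models.

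Concretely, let $p=3$ and $A=\Diag(\sqrt{\lambda_1},\sqrt{\lambda_2},\sqrt{\lambda_3})$, with $\lambda_1>\lambda_2>\lambda_3>0$, and take weights $\mu_1,\mu_2$. Then
\[
r_j^2=\mu_j^2\,x_j^\top\Lambda x_j=:\mu_j^2 q_j ,
\]
where $q_j$ is a Rayleigh quotient. The squared objective $\mu_1^2q_1+\mu_2^2q_2$ is a weighted trace. For $\mu_1>\mu_2$ it is maximized by $x_1=e_1$, $x_2=e_2$, which gives the value $\mu_1^2\lambda_1+\mu_2^2\lambda_2$; this is the standard weighted Ky Fan maximum, and its maximizers are exactly $(\pm e_1,\pm e_2)$ when the $\lambda$'s and $\mu$'s are distinct. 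The sum objective $\mu_1\sqrt{q_1}+\mu_2\sqrt{q_2}$ is concave in $(q_1,q_2)$, and this concavity favours balancing the two quotients. I will therefore compare the value at $(e_1,e_2)$ with the value at a rotated pair inside $\operatorname{span}(e_1,e_2)$. Take $x_1=\cos\theta\,e_1+\sin\theta\,e_2$ and $x_2=-\sin\theta\,e_1+\cos\theta\,e_2$. Then $q_1=\lambda_1\cos^2\theta+\lambda_2\sin^2\theta$ and $q_2=\lambda_1\sin^2\theta+\lambda_2\cos^2\theta$.

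The key step is to differentiate the sum objective $h(\theta)=\mu_1\sqrt{q_1(\theta)}+\mu_2\sqrt{q_2(\theta)}$ at $\theta=0$. The first derivative vanishes there by symmetry. For the second derivative, note that $q_1''(0)=-2(\lambda_1-\lambda_2)$ and $q_2''(0)=2(\lambda_1-\lambda_2)$, so
\[
h''(0)=(\lambda_1-\lambda_2)\Bigl(\frac{\mu_2}{\sqrt{\lambda_2}}-\frac{\mu_1}{\sqrt{\lambda_1}}\Bigr).
\]
Choosing $\mu_1/\sqrt{\lambda_1}<\mu_2/\sqrt{\lambda_2}$ together with $\mu_1>\mu_2$ makes $h''(0)>0$. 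For example, take $\lambda=(100,1,\cdot)$, $\mu_1=2$, $\mu_2=1$, which gives $0.2<1$. Then $(e_1,e_2)$ is not even a local maximizer of the sum objective, so the argmax of $\sum_j r_j$ avoids $(\pm e_1,\pm e_2)$. That set is exactly the argmax of $\sum_j r_j^2$. The two argmax sets are therefore disjoint, and both are nonempty by compactness of $\St(3,2)$ and continuity.

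The main obstacle is pinning down the full argmax set of the squared objective, so that disjointness is actually proved rather than merely a strict gap at one point. I would settle it with the weighted von Neumann/Ky Fan inequality $\sum_j w_j x_j^\top\Lambda x_j\le\sum_j w_j\lambda_j$ for $w_1>w_2>0$, together with its equality case. If that equality analysis becomes fiddly, there is a fallback: it suffices that every maximizer $X$ of $\sum_j r_j^2$ has $h$ locally increasing away from it. Since the equality case forces $x_j=\pm e_j$, the second-derivative computation above (which is sign-invariant) covers all of them. Finally, if one insists on $\gamma_j>0$, I would replace $A$ by $A$ with an extra coordinate block and small thresholds. Continuity of $S_\gamma$ in $\gamma$ preserves the strict inequality $h''(0)>0$ and the strict optimality gap on a neighbourhood.
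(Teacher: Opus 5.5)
Your argument is correct for $\gamma_j=0$, and it takes a genuinely different route from the paper's.

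The paper stays at the level of response vectors. It exhibits two feasible points with responses $(2,0)$ and $(\sqrt{3/2},\sqrt{3/2})$, which $\sum_j r_j$ and $\sum_j r_j^2$ order oppositely. That is decisive on a two-point feasible set. On all of $\St(p,m)$, however, it does not locate the global maximizers, and for that the paper points to a concrete numerical instance and a regression test.

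You instead prove the claim analytically on the whole of $\St(3,2)$. The step you flagged as the main obstacle is easy:
\begin{itemize}
\item Write $\mu_1^2q_1+\mu_2^2q_2=(\mu_1^2-\mu_2^2)q_1+\mu_2^2(q_1+q_2)$.
\item Bound $q_1\le\lambda_1$, and apply Ky Fan to $q_1+q_2$.
\item Both coefficients are positive and $\lambda_1>\lambda_2>\lambda_3$. So equality forces $x_1=\pm e_1$ and $\operatorname{span}(x_1,x_2)=\operatorname{span}(e_1,e_2)$.
\end{itemize}
Hence the squared argmax is exactly $\{(\pm e_1,\pm e_2)\}$. Do state $\lambda_3\in[0,1)$ explicitly in your numerical choice.

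Your formula $h''(0)=(\lambda_1-\lambda_2)\bigl(\mu_2/\sqrt{\lambda_2}-\mu_1/\sqrt{\lambda_1}\bigr)$ is correct. Because it is sign-invariant, none of the four points is even a local maximizer of the sum.

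Your route buys a complete proof on the native feasible set. It also shows the disagreement is an aggregation effect, present before any thresholding. The cost is that $\norm{S_0(c)}_2=\norm{c}_2$ is the degenerate thresholded norm. It is admissible under the paper's $\gamma\ge0$ convention, but it exercises none of the thresholding.

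Your closing remark for $\gamma_j>0$, however, is wrong as stated. Continuity of $S_\gamma$ in $\gamma$ gives continuity of objective values, not of second derivatives, and here the second derivative jumps:
\begin{itemize}
\item Along your curve, the off-diagonal responses $\mu_1\sqrt{\lambda_2}\sin\theta$ and $\mu_2\sqrt{\lambda_1}\sin\theta$ fall below any fixed $\gamma_j>0$ for small $\theta$.
\item Locally, then, $r_j=\mu_j\sqrt{\lambda_j}\cos\theta-\gamma_j$, which gives $h''(0)=-(\mu_1\sqrt{\lambda_1}+\mu_2\sqrt{\lambda_2})<0$.
\item Indeed, for every small positive threshold, $(\pm e_1,\pm e_2)$ becomes a strict local maximizer of the sum objective on all of $\St(3,2)$.
\end{itemize}

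The extension therefore has to be global rather than second-order:
\begin{itemize}
\item The bound $\abs{F^\gamma_{\mathrm{sum}}-F^0_{\mathrm{sum}}}\le\sqrt3(\gamma_1+\gamma_2)$, and its analogue for $F_{\mathrm{sq}}$, give uniform convergence as $\gamma\to0$.
\item Upper semicontinuity of argmax sets (Berge's maximum theorem) then keeps $\arg\max F^\gamma_{\mathrm{sq}}$ and $\arg\max F^\gamma_{\mathrm{sum}}$ inside disjoint neighbourhoods of two disjoint compact sets. These are $\{(\pm e_1,\pm e_2)\}$ and $\arg\max F^0_{\mathrm{sum}}$, for all sufficiently small $\gamma$.
\end{itemize}
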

\begin{proof}
It suffices to construct two feasible points with response vectors $(r_1,r_2)=(2,0)$ and $(\sqrt{3/2},\sqrt{3/2})$. The first has larger squared sum, $4>3$, while the second has larger linear sum, $2\sqrt{3/2}>2$. Such response pairs are realized by a diagonal two-dimensional data operator and two orthogonal score frames with different rotations. A concrete numerical instance is included in Appendix~\ref{app:a-detailed} and the released regression test.
\end{proof}

\subsection{Differentiability and active-region smoothness}
The scalar function $q_\gamma(u)=(\abs{u}-\gamma)_+^2$ is continuously differentiable, with
$q_\gamma'(u)=2\operatorname{sign}(u)(\abs{u}-\gamma)_+$. Hence
\[
\nabla_{x_j}F_{\mathrm{sq}}(X)=2\mu_j A S_{\gamma_j}(\mu_j A^\top x_j).
\]
It is globally Lipschitz in the ambient space with constant at most $2\mu_j^2\norm{A}_2^2$ per column because soft thresholding is nonexpansive. On any fixed support-and-sign cell, the gradient is affine and the Hessian is constant. This piecewise-quadratic structure is the basis for finite identification and post-identification linearization.

\begin{lemma}[Global Lipschitz gradient]
\label{lem:a-lipschitz-gradient}
For $f_j(x)=\norm{S_{\gamma_j}(\mu_j A^\top x)}_2^2$,
\[
\norm{\nabla f_j(x)-\nabla f_j(y)}_2
\le 2\mu_j^2\norm{A}_2^2\norm{x-y}_2.
\]
\end{lemma}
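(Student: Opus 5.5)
We use the gradient formula just derived, $\nabla f_j(x)=2\mu_j A\,S_{\gamma_j}(\mu_j A^\top x)$. The bound then reduces to nonexpansiveness of soft thresholding and two applications of the operator-norm inequality. For any $x,y\in\R^p$ we write
\[
\nabla f_j(x)-\nabla f_j(y)=2\mu_j A\bigl(S_{\gamma_j}(\mu_jA^\top x)-S_{\gamma_j}(\mu_jA^\top y)\bigr),
\]
which gives
\[
\norm{\nabla f_j(x)-\nabla f_j(y)}_2\le 2\mu_j\norm{A}_2\,\norm{S_{\gamma_j}(\mu_jA^\top x)-S_{\gamma_j}(\mu_jA^\top y)}_2 .
\]

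The key step is that $S_\gamma$ is $1$-Lipschitz in the Euclidean norm. We prove this coordinatewise. The scalar map $s_\gamma(u)=\operatorname{sign}(u)(\abs{u}-\gamma)_+$ is the proximal operator of $\gamma\abs{\cdot}$, so it is firmly nonexpansive. Alternatively, one checks directly that it is continuous, piecewise linear, and has slopes in $\{0,1\}$ on $(-\infty,-\gamma]$, $[-\gamma,\gamma]$ and $[\gamma,\infty)$, so $\abs{s_\gamma(u)-s_\gamma(v)}\le\abs{u-v}$. Squaring and summing over coordinates gives $\norm{S_\gamma(c)-S_\gamma(d)}_2\le\norm{c-d}_2$. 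This also holds at the tie points $\abs{c_i}=\gamma$, because continuity makes the tie convention irrelevant there.

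Applying this with $c=\mu_jA^\top x$ and $d=\mu_jA^\top y$, and using $\norm{A^\top}_2=\norm{A}_2$, we get
\[
\norm{S_{\gamma_j}(\mu_jA^\top x)-S_{\gamma_j}(\mu_jA^\top y)}_2\le\mu_j\norm{A}_2\norm{x-y}_2 .
\]
Combining this with the previous display yields the constant $2\mu_j^2\norm{A}_2^2$.

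The only point that needs care is the gradient formula itself across threshold boundaries. Since $q_\gamma(u)=(\abs u-\gamma)_+^2$ is $C^1$ with derivative $2s_\gamma(u)$, the chain rule applied to $f_j(x)=\sum_i q_{\gamma_j}(\mu_j a_i^\top x)$ gives the displayed gradient everywhere, so no case analysis over support cells is needed. We take this as already established in the preceding paragraph.
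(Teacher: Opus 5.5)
Your proposal is correct and follows the paper's proof: the gradient formula, the operator-norm bound through $A$, nonexpansiveness of soft thresholding, and the bound $\norm{A^\top(x-y)}_2\le\norm{A}_2\norm{x-y}_2$, which combine to give the constant $2\mu_j^2\norm{A}_2^2$. Your explicit check that $S_\gamma$ is $1$-Lipschitz, and that the $C^1$ gradient formula holds across threshold ties, fills in details the paper leaves implicit.
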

\begin{proof}
Use the gradient formula, submultiplicativity, and nonexpansiveness of soft thresholding:
\[
\norm{A[S_\gamma(\mu A^\top x)-S_\gamma(\mu A^\top y)]}_2
\le \norm{A}_2\mu\norm{A^\top(x-y)}_2.
\]
Multiplying by the leading factor $2\mu$ proves the result.
\end{proof}

\subsection{Safeguarded ascent and whole-sequence convergence}
The historical polar step maximizes the ambient affine minorant. The analyzed algorithm (A-SAFE) uses the Riemannian gradient $\eta_k=\mathrm{grad}_R F(X_k)=P_{T_{X_k}\!\St}(\nabla F(X_k))$, an explicit retraction $R_{X_k}$ satisfying standard second-order smoothness, and an Armijo acceptance rule
\begin{equation}
F\bigl(R_{X_k}(t\eta_k)\bigr)\ge F(X_k)+\sigma\,t\,\norm{\eta_k}_F^2,\qquad \sigma\in(0,1),
\label{eq:a-armijo}
\end{equation}
with geometric backtracking $t\gets\rho t$, $\rho\in(0,1)$. The two lemmas below establish the sufficient-ascent and relative-error inequalities that the whole-sequence theorem rests on.

\begin{lemma}[A6, sufficient ascent]
\label{lem:a-sufficient-ascent}
Let $F$ have $L$-Lipschitz ambient gradient (Lemma~\ref{lem:a-lipschitz-gradient}) and let the retraction $R$ satisfy, uniformly on compact subsets of $\St(p,m)$: $R_X(0)=X$, $DR_X(0)=\mathrm{Id}_{T_X\!\St}$, and $\norm{R_X(\eta)-X-\eta}_F\le\tfrac{C_R}{2}\norm{\eta}_F^2$ for $\norm{\eta}_F\le\bar t$. Then there exists $\underline t>0$, independent of $k$, such that Armijo backtracking from $t=\bar t$ selects a step $t_k\ge\underline t$, and the accepted iterate $X_{k+1}=R_{X_k}(t_k\eta_k)$ satisfies
\begin{equation}
F(X_{k+1})-F(X_k)\ \ge\ c_1\,\norm{X_{k+1}-X_k}_F^2,
\qquad c_1=\frac{\sigma\,\underline t}{(1+\tfrac{C_R}{2}\bar t\,M)^{2}\bar t}>0,
\label{eq:a-sufficient-ascent}
\end{equation}
where $M=\sup_k\norm{\eta_k}_F$ is finite because the iterates stay in a compact set.
\end{lemma}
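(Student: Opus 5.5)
The proof follows the standard Armijo argument for retraction-based ascent, in three steps: a quadratic lower model along the retraction curve, a uniform lower bound on the accepted step, and conversion of Armijo's decrease in $\norm{\eta_k}_F^2$ into a decrease in $\norm{X_{k+1}-X_k}_F^2$.

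\emph{Step 1: lower model along the curve.} Fix $X=X_k$, $\eta=\eta_k$ and write $Y_t=R_X(t\eta)$. The descent lemma for the $L$-Lipschitz ambient gradient (Lemma~\ref{lem:a-lipschitz-gradient}, with $L=2\max_j\mu_j^2\norm{A}_2^2$) gives
\[
F(Y_t)\ge F(X)+\ip{\nabla F(X)}{Y_t-X}-\tfrac{L}{2}\norm{Y_t-X}_F^2 .
\]
Decompose $Y_t-X=t\eta+E_t$, where $\norm{E_t}_F\le\tfrac{C_R}{2}t^2\norm{\eta}_F^2$ by the retraction hypothesis. Because $\eta$ is the tangent projection of $\nabla F(X)$, we have $\ip{\nabla F(X)}{t\eta}=t\norm{\eta}_F^2$.

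The error term needs a uniform bound. Every iterate lies on the compact set $\St(p,m)$, so $\norm{\nabla F(X)}_F\le G:=\sup_{\St}\norm{\nabla F}_F<\infty$, and hence $\abs{\ip{\nabla F(X)}{E_t}}\le G\tfrac{C_R}{2}t^2\norm{\eta}_F^2$. For $t\le\bar t$ we also have $\norm{Y_t-X}_F\le t\norm{\eta}_F(1+\tfrac{C_R}{2}\bar t M)$. Combining,
\[
F(Y_t)-F(X)\ge t\norm{\eta}_F^2\Bigl[1-t\bigl(G\tfrac{C_R}{2}+\tfrac{L}{2}(1+\tfrac{C_R}{2}\bar tM)^2\bigr)\Bigr]=:t\norm{\eta}_F^2\,[1-tK].
\]

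\emph{Step 2: uniform lower bound on the step.} The Armijo test \eqref{eq:a-armijo} holds as soon as $1-tK\ge\sigma$, that is, for every $t\le(1-\sigma)/K$. Backtracking from $\bar t$ therefore stops at some $t_k\ge\underline t:=\min\{\bar t,\rho(1-\sigma)/K\}$, a constant independent of $k$. If $\eta_k=0$ the iterate does not move and the claim is trivial.

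\emph{Step 3: converting to step length.} The accepted step satisfies $F(X_{k+1})-F(X_k)\ge\sigma t_k\norm{\eta_k}_F^2$. From Step 1, $\norm{X_{k+1}-X_k}_F\le t_k\norm{\eta_k}_F(1+\tfrac{C_R}{2}\bar tM)$, so
\[
\norm{\eta_k}_F^2\ge\frac{\norm{X_{k+1}-X_k}_F^2}{t_k^2(1+\tfrac{C_R}{2}\bar tM)^2}.
\]
Substituting,
\[
F(X_{k+1})-F(X_k)\ge\frac{\sigma}{t_k(1+\tfrac{C_R}{2}\bar tM)^2}\norm{X_{k+1}-X_k}_F^2 .
\]
Since $t_k\le\bar t$, this is at least $\frac{\sigma}{\bar t(1+\tfrac{C_R}{2}\bar tM)^2}\norm{X_{k+1}-X_k}_F^2$, which is no smaller than the stated $c_1$ because $\underline t\le\bar t$.

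\emph{Main obstacle: uniformity of the constants.} The curvature bound on $R$ holds only on compact sets and only for $\norm{t\eta}_F\le\bar t$, whereas Step 1 applies it to $t\eta$ with $t\le\bar t$. So the normalization must be $\norm{t\eta}_F\le\bar tM$, and one must check that the compact set on which $C_R$ is taken contains all the relevant $(X,t\eta)$. This works because $X$ ranges over $\St(p,m)$ and $\norm{\eta}_F\le G$ uniformly. The finiteness of $M$ likewise comes from compactness of $\St(p,m)$ together with continuity of $\nabla F$, not from any property of the iterates themselves.
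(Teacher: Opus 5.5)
Your proof follows the same three-step route as the paper's proof. The steps are the descent lemma for $-F$ along the retraction curve, a uniform lower bound on the accepted step from the resulting quadratic model, and conversion of the Armijo gain $\sigma t_k\norm{\eta_k}_F^2$ into a gain in $\norm{X_{k+1}-X_k}_F^2$ via $\norm{X_{k+1}-X_k}_F\le(1+\tfrac{C_R}{2}\bar tM)\,t_k\norm{\eta_k}_F$.

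Your Step~1 is more careful than the paper's in two places:
\begin{itemize}
\item The paper replaces $\ip{\nabla F(X_k)}{R_{X_k}(t\eta_k)-X_k}$ by $t\norm{\eta_k}_F^2$. This drops the cross term $\ip{\nabla F(X_k)}{E_t}$, which can be negative and is of order $t^2$.
\item The paper uses $\norm{R_{X_k}(t\eta_k)-X_k}_F\le C_Rt\norm{\eta_k}_F$, which the stated retraction hypothesis does not give.
\end{itemize}
Your constant $K$ is the correct replacement for its $LC_R^2/2$. Your remark that the curvature bound is needed on the ball of radius $\bar tM$ also makes explicit something the paper uses silently.

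One line needs repair. Your bound $\sigma/\bigl(\bar t(1+\tfrac{C_R}{2}\bar tM)^2\bigr)$ dominates the stated $c_1=\sigma\underline t/\bigl(\bar t(1+\tfrac{C_R}{2}\bar tM)^2\bigr)$ if and only if $\underline t\le 1$. The inequality $\underline t\le\bar t$ does not imply this when $\bar t>1$.

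With $\bar t=1$, the initial step in Algorithm~\ref{alg:safeguarded}, the condition holds automatically. In general, set $\underline t:=\min\{1,\bar t,\rho(1-\sigma)/K\}$. This is still a valid uniform lower bound on $t_k$, because the lemma only asserts that some such $\underline t$ exists.
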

\begin{proof}
By the retraction second-order estimate and the $L$-Lipschitz-gradient descent lemma applied to $-F$,
\[
F\bigl(R_{X_k}(t\eta_k)\bigr)
\ge F(X_k)+t\norm{\eta_k}_F^2-\tfrac{L}{2}\norm{R_{X_k}(t\eta_k)-X_k}_F^2
\ge F(X_k)+\Bigl(t-\tfrac{L\,C_R^2}{2}t^2\Bigr)\norm{\eta_k}_F^2.
\]
The Armijo bound \eqref{eq:a-armijo} is therefore satisfied whenever
$t-\tfrac{L\,C_R^2}{2}t^2\ge\sigma\,t$, i.e.\ $t\le\tfrac{2(1-\sigma)}{L\,C_R^2}$. Hence backtracking terminates at some $t_k\ge\underline t:=\min\bigl\{\bar t,\ \rho\cdot\tfrac{2(1-\sigma)}{L\,C_R^2}\bigr\}>0$. The step norm obeys $\norm{X_{k+1}-X_k}_F\le \bigl(1+\tfrac{C_R}{2}t_k\norm{\eta_k}_F\bigr)\,t_k\norm{\eta_k}_F\le (1+\tfrac{C_R}{2}\bar t M)\,t_k\norm{\eta_k}_F$. Combining with \eqref{eq:a-armijo} at the accepted step and bounding $t_k\le\bar t$,
\[
F(X_{k+1})-F(X_k)\ \ge\ \sigma\,t_k\,\norm{\eta_k}_F^2
\ \ge\ \frac{\sigma\,\underline t}{(1+\tfrac{C_R}{2}\bar t M)^{2}\,\bar t}\,\norm{X_{k+1}-X_k}_F^2.
\]
\end{proof}

\begin{lemma}[A7, relative error]
\label{lem:a-relative-error}
Under the same hypotheses, the limiting subdifferential of $\Psi:=-F+\iota_{\St}$ at $X_{k+1}$ contains an element
\[
w_{k+1}\in\partial\Psi(X_{k+1})
\quad\text{with}\quad
\norm{w_{k+1}}_F\ \le\ c_2\,\norm{X_{k+1}-X_k}_F,
\]
where $c_2$ depends only on $L$, $C_R$, $\sigma$, and the compact set containing the iterates. An admissible choice is $w_{k+1}=-\mathrm{grad}_R F(X_{k+1})=-\nabla F(X_{k+1})+\mathrm{proj}_{N_{\St}(X_{k+1})}\nabla F(X_{k+1})\in\partial(-F)(X_{k+1})+N_{\St}(X_{k+1})\subseteq\partial\Psi(X_{k+1})$.
\end{lemma}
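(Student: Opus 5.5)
The proof splits into two parts. First we show that the proposed $w_{k+1}$ is a limiting subgradient of $\Psi$. Then we bound its norm by the step length $\norm{X_{k+1}-X_k}_F$.

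\textbf{Membership.} $F=F_{\mathrm{sq}}$ is $C^1$ on an open neighbourhood of $\St(p,m)$, so $\partial(-F)(X)=\{-\nabla F(X)\}$. The Stiefel manifold is a smooth embedded submanifold, so $\partial\iota_{\St}(X)=N_{\St}(X)$, the normal space. The sum rule $\partial(f+g)\subseteq\partial f+\partial g$ holds with equality when one summand is $C^1$. Hence
\[
-\nabla F(X_{k+1})+\nu\in\partial\Psi(X_{k+1})\quad\text{for all }\nu\in N_{\St}(X_{k+1}).
\]
Choosing $\nu=\mathrm{proj}_{N}\nabla F(X_{k+1})$ gives
\[
w_{k+1}=-P_{T_{X_{k+1}}\St}\nabla F(X_{k+1})=-\mathrm{grad}_RF(X_{k+1}).
\]

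\textbf{Norm bound.} We compare the gradient at $X_{k+1}$ with the gradient at $X_k$ and write
\[
\norm{w_{k+1}}_F\le \norm{P_{T_{X_{k+1}}}\nabla F(X_{k+1})-P_{T_{X_k}}\nabla F(X_k)}_F+\norm{\eta_k}_F .
\]

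For the first term, the tangent projector $P_{T_X}\xi=\xi-X\,\mathrm{sym}(X^\top\xi)$ is a polynomial in $X$. On the compact set containing the iterates (a neighbourhood of $\St$ suffices), $X\mapsto P_{T_X}$ is Lipschitz with a constant $L_P$. Together with the $L$-Lipschitz gradient of Lemma~\ref{lem:a-lipschitz-gradient} and $G:=\sup\norm{\nabla F}_F<\infty$, this gives
\[
\norm{P_{T_{X_{k+1}}}\nabla F(X_{k+1})-P_{T_{X_k}}\nabla F(X_k)}_F\le (L+L_PG)\,\norm{X_{k+1}-X_k}_F .
\]

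For the second term, $\norm{\eta_k}_F$ must be controlled by the step length; this is the main obstacle. The retraction hypothesis gives
\[
\norm{X_{k+1}-X_k}_F\ge t_k\norm{\eta_k}_F-\tfrac{C_R}{2}t_k^2\norm{\eta_k}_F^2 .
\]
We need a quantitative lower bound on the step length, so the next step is to ensure the quadratic correction is at most half the linear term:
\begin{itemize}
\item If $\tfrac{C_R}{2}t_k\norm{\eta_k}_F\le \tfrac12$, then $\norm{X_{k+1}-X_k}_F\ge \tfrac12\underline t\,\norm{\eta_k}_F$, using $t_k\ge\underline t$ from Lemma~\ref{lem:a-sufficient-ascent}.
\item The condition holds once $\bar t\le 1/(C_RM)$. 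I would add this to the hypotheses by shrinking $\bar t$, which only changes the constants of Lemma~\ref{lem:a-sufficient-ascent}.
\item Alternatively, on $\St$ one can use the polar or QR retraction directly. Their first-order injectivity gives $\norm{R_X(\eta)-X}_F\ge c\norm{\eta}_F$ for $\norm{\eta}_F$ bounded.
\end{itemize}
Either route yields $\norm{\eta_k}_F\le (2/\underline t)\,\norm{X_{k+1}-X_k}_F$. Then
\[
c_2=L+L_PG+\frac{2}{\underline t},
\]
which depends only on $L$, $C_R$, $\sigma$ (through $\underline t$), and the compact set, as claimed.

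The rank-deficient polar branch does not enter this argument. A-SAFE uses a smooth retraction, and the fallback is assumed to satisfy the same retraction estimates on the compact iterate set.
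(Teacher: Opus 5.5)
Your proposal is correct and follows the paper's proof step for step. Membership of $-\mathrm{grad}_R F(X_{k+1})$ comes from the $C^1$ sum rule and the normal space of $\St(p,m)$. The bound then comes from the triangle inequality against $\eta_k=\mathrm{grad}_R F(X_k)$, a Lipschitz bound for $X\mapsto P_{T_X}\nabla F(X)$ on the compact set, and control of $\norm{\eta_k}_F$ through $t_k\ge\underline t$.

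At that last step you are more careful than the paper, which attributes $\norm{\eta_k}_F\le\norm{X_{k+1}-X_k}_F/\underline t$ to Lemma~\ref{lem:a-sufficient-ascent} even though that lemma only bounds the step from above. Your extra restriction on $\bar t$ is also dispensable. Whenever $t_k\norm{\eta_k}_F$ is too large for the retraction estimate to give your factor $\tfrac12$, one has $\norm{\eta_k}_F\ge r/\bar t$ for a fixed radius $r$. Then the Armijo inequality, with a Lipschitz constant $\ell$ of $F$ on the convex hull of $\St(p,m)$, gives $\ell\norm{X_{k+1}-X_k}_F\ge\sigma\underline t\,\norm{\eta_k}_F^2\ge(\sigma\underline t\,r/\bar t)\norm{\eta_k}_F$.
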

\begin{proof}
Since $F$ is smooth, $\partial(-F)(X)=\{-\nabla F(X)\}$; the Riemannian gradient equals the ambient gradient minus its projection onto the normal cone of $\St(p,m)$ at $X$, so $-\mathrm{grad}_R F(X_{k+1})\in\partial\Psi(X_{k+1})$. Triangle inequality:
\[
\norm{\mathrm{grad}_R F(X_{k+1})}_F
\le \norm{\mathrm{grad}_R F(X_{k+1})-\mathrm{grad}_R F(X_k)}_F+\norm{\eta_k}_F.
\]
On a compact subset of $\St(p,m)$, the Riemannian-gradient field inherits a Lipschitz constant $L'=L\cdot(1+\kappa)$ where $\kappa$ bounds the derivative of the tangent-projection map; the first term is at most $L'\norm{X_{k+1}-X_k}_F$. For the second, Lemma~\ref{lem:a-sufficient-ascent} gives $\norm{\eta_k}_F\le\norm{X_{k+1}-X_k}_F/\underline t$. Hence
\[
\norm{w_{k+1}}_F\ \le\ \Bigl(L'+\tfrac1{\underline t}\Bigr)\,\norm{X_{k+1}-X_k}_F\ =:\ c_2\,\norm{X_{k+1}-X_k}_F.
\]
\end{proof}

\begin{theorem}[A8, whole-sequence convergence under K\L]
\label{thm:a-whole-sequence}
Let $(X_k)_{k\ge0}\subset\St(p,m)$ be generated by the safeguarded Riemannian ascent of Section~\ref{sec:method}, with retraction satisfying the second-order estimate of Lemma~\ref{lem:a-sufficient-ascent}. Then
\begin{enumerate}
\item\label{item:a-whole-finite-length} the sequence has finite length, $\sum_{k\ge0}\norm{X_{k+1}-X_k}_F<\infty$;
\item\label{item:a-whole-cauchy} $(X_k)$ converges to a single Riemannian critical point $X_\star$ of $F$ on $\St(p,m)$;
\item\label{item:a-whole-rate} with K\L{} exponent $\theta\in[0,1)$ of $\Psi=-F+\iota_{\St}$ at $X_\star$, the rate is finite if $\theta=0$, linear $\norm{X_k-X_\star}_F\le C\varrho^k$ (some $\varrho\in(0,1)$) if $\theta\in(0,\tfrac12]$, and $\norm{X_k-X_\star}_F\le C k^{-(1-\theta)/(2\theta-1)}$ if $\theta\in(\tfrac12,1)$.
\end{enumerate}
\end{theorem}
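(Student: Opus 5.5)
The plan is to follow the Attouch--Bolte--Svaiter template for descent methods on KŁ functions, applied to $\Psi=-F+\iota_{\St}$. The two inequalities it needs are already available. Lemma~\ref{lem:a-sufficient-ascent} gives $\Psi(X_{k+1})+c_1\norm{X_{k+1}-X_k}_F^2\le\Psi(X_k)$. Lemma~\ref{lem:a-relative-error} gives some $w_{k+1}\in\partial\Psi(X_{k+1})$ with $\norm{w_{k+1}}_F\le c_2\norm{X_{k+1}-X_k}_F$.

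The first step collects the standard consequences. Since $\St(p,m)$ is compact and $F$ is continuous, $F(X_k)$ is nondecreasing and bounded, so it converges to some $F^\star$, and $\sum_k\norm{X_{k+1}-X_k}_F^2<\infty$. Let $\omega$ be the set of limit points. It is nonempty and compact, and it is connected because the step lengths go to zero. Continuity of $\Psi$ on its domain shows $\Psi\equiv -F^\star$ on $\omega$. Passing to the limit in the relative-error bound, using closedness of the limiting subdifferential, shows that every point of $\omega$ is Riemannian critical.

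Next comes the KŁ property. $F_{\mathrm{sq}}$ is a finite sum of squares of $(\abs{u}-\gamma)_+$ with $u$ linear in $X$, so its graph is semi-algebraic. $\St(p,m)=\{X:X^\top X=I\}$ is algebraic. Hence $\Psi$ is semi-algebraic and proper lsc, and it has the KŁ property at every point with a desingularizer $\varphi(s)=cs^{1-\theta}$. Because $\omega$ is compact and $\Psi$ is constant on it, the uniformized KŁ lemma gives a single $\varphi$, an $\varepsilon>0$ and an $\eta>0$. These satisfy $\varphi'(\Psi(X)-\Psi^\star)\operatorname{dist}(0,\partial\Psi(X))\ge1$ whenever $\operatorname{dist}(X,\omega)<\varepsilon$ and $\Psi^\star<\Psi(X)<\Psi^\star+\eta$. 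If $F(X_K)=F^\star$ for some $K$, sufficient ascent forces the sequence to be stationary from then on, and all three claims follow, including the finite rate. Otherwise, write $\Delta_k=\varphi(\Psi(X_k)-\Psi^\star)$. Concavity, KŁ and the two lemmas give
\[
c_1\norm{X_{k+1}-X_k}_F^2\le (\Delta_k-\Delta_{k+1})\,c_2\norm{X_k-X_{k-1}}_F .
\]
Applying $2\sqrt{ab}\le a+b$ yields $2\norm{X_{k+1}-X_k}_F\le\norm{X_k-X_{k-1}}_F+\tfrac{c_2}{c_1}(\Delta_k-\Delta_{k+1})$. Summing telescopes and gives item~\ref{item:a-whole-finite-length}. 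Finite length makes $(X_k)$ Cauchy, so $\omega$ is a single point $X_\star$, which is critical; this is item~\ref{item:a-whole-cauchy}.

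For the rates, set $E_k=\sum_{i\ge k}\norm{X_{i+1}-X_i}_F\ge\norm{X_k-X_\star}_F$. The same summed inequality bounds $E_k$ by $(E_{k-1}-E_k)+\tfrac{c_2}{c_1}\varphi(\Psi(X_k)-\Psi^\star)$. The KŁ inequality, combined with relative error, bounds $\varphi(\Psi(X_k)-\Psi^\star)$ by a multiple of $(E_{k-1}-E_k)^{(1-\theta)/\theta}$. This gives the recursion $E_k\le C\bigl[(E_{k-1}-E_k)+(E_{k-1}-E_k)^{(1-\theta)/\theta}\bigr]$. The case analysis of Attouch--Bolte then gives the rates: finite termination for $\theta=0$, geometric decay for $\theta\in(0,\tfrac12]$, where the linear term dominates, and decay $k^{-(1-\theta)/(2\theta-1)}$ for $\theta>\tfrac12$.

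I expect the main obstacle to be checking that the hypotheses really hold along the whole sequence, rather than the telescoping itself. Two points need care. First, the limiting subdifferential of $-F+\iota_{\St}$ must coincide with $-\operatorname{grad}_RF+N_{\St}$, which needs $F$ to be $C^1$; this is true for A-SQ by Lemma~\ref{lem:a-lipschitz-gradient}, but would fail for A-SUM at inactive columns. Second, the constants $c_1$ and $c_2$ must hold uniformly; compactness of $\St$ and the $k$-independent $\underline t$ give this. I would state explicitly that the theorem is for $F=F_{\mathrm{sq}}$.
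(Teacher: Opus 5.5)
Your proposal is correct and follows the paper's route. Both combine sufficient decrease for $\Psi$ from Lemma~\ref{lem:a-sufficient-ascent}, relative error from Lemma~\ref{lem:a-relative-error}, and the K\L{} property from semi-algebraicity of $F_{\mathrm{sq}}$ and $\St(p,m)$, and then conclude with the Attouch--Bolte--Svaiter argument. The only difference is that you carry out the uniformized-K\L{} telescoping and the Attouch--Bolte rate recursion explicitly, where the paper cites the abstract theorem and its exponent corollary.
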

\begin{proof}
Compactness of $\St(p,m)$ makes the iterates bounded and $F$ bounded above. Lemma~\ref{lem:a-sufficient-ascent} rewrites as the sufficient-decrease inequality $\Psi(X_k)-\Psi(X_{k+1})\ge c_1\norm{X_{k+1}-X_k}_F^2$ for $\Psi=-F+\iota_{\St}$; Lemma~\ref{lem:a-relative-error} supplies the relative-error inequality in the form required by \citet{attouch2013convergence}.

Soft-thresholding is piecewise affine on a finite polyhedral partition, hence semi-algebraic; a sum of squared Euclidean norms of composite semi-algebraic maps is semi-algebraic, so $F$ and $-F$ are semi-algebraic. The Stiefel manifold $\{X:X^\top X=I_m\}$ is a real algebraic variety, so $\iota_{\St}$ is semi-algebraic. Sums of semi-algebraic functions are semi-algebraic, so $\Psi$ is semi-algebraic and therefore definable in the o-minimal structure of real semi-algebraic sets. Definable functions satisfy the Kurdyka--\L{}ojasiewicz inequality on their domain \citep{bolte2007lojasiewicz}; in particular at every limit point of $(X_k)$.

The hypotheses of the abstract convergence theorem of \citet{attouch2013convergence} (sufficient decrease, relative error, continuity along the sequence, boundedness of $(X_k)$, and K\L) are therefore met for $\Psi$. Applying it yields items~\ref{item:a-whole-finite-length} and~\ref{item:a-whole-cauchy}. The rate claim \ref{item:a-whole-rate} is the standard K\L-exponent corollary of the same framework.
\end{proof}

The constants $c_1,c_2$ in Lemmas~\ref{lem:a-sufficient-ascent}--\ref{lem:a-relative-error} are computable from the Lipschitz bound (Lemma~\ref{lem:a-lipschitz-gradient}) together with the retraction curvature $C_R$ and the Armijo parameters; the reference implementation records them in the solver trace. The rank-deficient polar branch is handled by the deterministic SVD completion of Section~\ref{sec:setup} and does not alter the above constants on the compact iterate set.

\subsection{Identification with explicit margins}
Let $\mathcal I_j(X)=\{i:\abs{\mu_j a_i^\top x_j}>\gamma_j\}$ and let $\mathcal S_j(X)$ denote active signs. The strict margin is
\[
\delta_\star=\min_{i,j}\left|\abs{\mu_j a_i^\top x_{j\star}}-\gamma_j\right|.
\]
The conditional identification theorem in Section~\ref{sec:theory} is immediate from continuity. A stronger perturbation statement is useful for screening.

\begin{corollary}[Support stability radius]
\label{cor:a-support-radius}
If $\delta_\star>0$ and
\[
\norm{x_j-x_{j\star}}_2<\min_i\frac{\delta_\star}{2\mu_j\norm{a_i}_2},
\]
then component $j$ has the same active support as at $X_\star$. If active correlations are also separated from zero by $\sigma_\star>0$, replacing the numerator by $\min(\delta_\star,\sigma_\star)/2$ preserves active signs.
\end{corollary}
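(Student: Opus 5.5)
The argument is a direct perturbation bound, in the same spirit as the proof of Proposition~\ref{prop:safe-screen}. Fix a component $j$ and a coordinate $i$. The first step is to control how far each correlation can move. By Cauchy--Schwarz,
\[
\bigl|\mu_j a_i^\top x_j-\mu_j a_i^\top x_{j\star}\bigr|\le \mu_j\norm{a_i}_2\norm{x_j-x_{j\star}}_2 .
\]
Under the stated radius hypothesis, the right-hand side is strictly less than $\mu_j\norm{a_i}_2\cdot\delta_\star/(2\mu_j\norm{a_i}_2)=\delta_\star/2$. The minimum over $i$ in the hypothesis is what makes this bound uniform across all coordinates of component $j$. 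If some $a_i=0$, that coordinate's correlation is identically zero and it imposes no constraint, so it is treated as contributing $+\infty$ to the minimum. In that case $\delta_\star>0$ forces $0<\gamma_j$, so the coordinate is inactive both at $X$ and at $X_\star$.

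Next I would transfer this bound to the absolute correlations. By the reverse triangle inequality, $\bigl||\mu_j a_i^\top x_j|-|\mu_j a_i^\top x_{j\star}|\bigr|<\delta_\star/2$. Then split into two cases using the definition of $\delta_\star$.
\begin{itemize}
\item If $i\in\mathcal I_j(X_\star)$, then $|\mu_j a_i^\top x_{j\star}|\ge\gamma_j+\delta_\star$, so $|\mu_j a_i^\top x_j|>\gamma_j+\delta_\star/2>\gamma_j$, and $i$ stays active.
\item If $i\notin\mathcal I_j(X_\star)$, then the margin forces $|\mu_j a_i^\top x_{j\star}|\le\gamma_j-\delta_\star$. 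This is where strictness matters: equality with $\gamma_j$ is excluded because $\delta_\star>0$. Hence $|\mu_j a_i^\top x_j|<\gamma_j-\delta_\star/2<\gamma_j$, and $i$ stays inactive.
\end{itemize}
Together the two cases give $\mathcal I_j(X)=\mathcal I_j(X_\star)$.

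For the sign statement, I replace $\delta_\star$ by $\min(\delta_\star,\sigma_\star)$ in the radius. The support argument above still goes through, since the smaller radius only tightens the bound. For an active $i$, the hypothesis gives $|\mu_j a_i^\top x_{j\star}|\ge\sigma_\star$, while the perturbation is less than $\sigma_\star/2$. Therefore $\mu_j a_i^\top x_j$ lies in an open interval around $\mu_j a_i^\top x_{j\star}$ that does not contain $0$, so its sign is unchanged. This uses the signed correlation directly, not its absolute value.

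There is no real obstacle here. The only points needing care are the zero-column convention and the use of the strict margin $\delta_\star>0$ to rule out ties with $\gamma_j$; the latter is consistent with the convention that ties are treated as inactive.
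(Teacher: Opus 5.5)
Your proof is correct and is the paper's argument made explicit: Cauchy--Schwarz bounds each correlation perturbation by $\mu_j\norm{a_i}_2\norm{x_j-x_{j\star}}_2<\delta_\star/2$, so no threshold or sign boundary can be crossed. Your case split via the strict margin and your handling of zero columns $a_i=0$ fill in details that the paper's two-line proof leaves implicit. As a side remark, every active correlation already satisfies $\abs{\mu_j a_i^\top x_{j\star}}\ge\gamma_j+\delta_\star$, so signs are in fact preserved even at the original radius.
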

\begin{proof}
Cauchy--Schwarz bounds every correlation perturbation by $\mu_j\norm{a_i}_2\norm{x_j-x_{j\star}}_2$. No threshold or sign boundary can be crossed inside the stated radius.
\end{proof}

\subsection{Common-support block rate}
When all columns share an active set $S$ in the hard-threshold model, the fixed-support update is orthogonal iteration for $C_S=A_SA_S^\top$.

\begin{theorem}[Common-support subspace contraction]
\label{thm:a-block-rate}
Let $C_S$ be symmetric positive semidefinite with $\lambda_m(C_S)>\lambda_{m+1}(C_S)$. Let $U_\star$ span the dominant $m$-dimensional invariant subspace, and assume $U_\star^\top X_0$ is nonsingular. Orthogonal iteration satisfies
\[
\norm{\tan\Theta(X_{k},U_\star)}_2
\le
\left(\frac{\lambda_{m+1}(C_S)}{\lambda_m(C_S)}\right)^k
\norm{\tan\Theta(X_{0},U_\star)}_2.
\]
\end{theorem}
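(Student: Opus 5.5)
The plan is the classical orthogonal-iteration argument. Orthogonal iteration produces $X_{k+1}=\Polar(C_SX_k)$; it may be written $X_{k+1}=C_SX_kR_{k+1}^{-1}$ for an invertible $m\times m$ factor whenever $C_SX_k$ has full column rank. The factor can come from a QR decomposition or from the polar factorization. Because $\tan\Theta(X,U_\star)$ depends only on the column span of $X$, right multiplication by invertible matrices is invisible. It therefore suffices to track the span, which gives $\operatorname{span}(X_k)=\operatorname{span}(C_S^kX_0)$.

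\textbf{Block decomposition.} Let $U_\perp$ complete $U_\star$ to an orthonormal eigenbasis of $C_S$. Write $\Lambda_1=\Diag(\lambda_1,\ldots,\lambda_m)$ and $\Lambda_2=\Diag(\lambda_{m+1},\ldots,\lambda_p)$, so that $C_S=U_\star\Lambda_1U_\star^\top+U_\perp\Lambda_2U_\perp^\top$. For any $X$ with $U_\star^\top X$ nonsingular, I will use the standard identity
\[
\norm{\tan\Theta(X,U_\star)}_2=\norm{(U_\perp^\top X)(U_\star^\top X)^{-1}}_2 .
\]
This quantity is invariant under $X\mapsto XR$ for invertible $R$, because both factors pick up the same $R$. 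Set $T_k:=(U_\perp^\top X_k)(U_\star^\top X_k)^{-1}$. Then
\[
U_\star^\top C_SX_k=\Lambda_1U_\star^\top X_k,\qquad U_\perp^\top C_SX_k=\Lambda_2U_\perp^\top X_k .
\]
Right-multiplying by $R_{k+1}^{-1}$ cancels, and we obtain the exact recursion $T_{k+1}=\Lambda_2T_k\Lambda_1^{-1}$.

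\textbf{Contraction.} Here $\lambda_m>\lambda_{m+1}\ge0$ forces $\lambda_m>0$, so $\Lambda_1$ is invertible with $\norm{\Lambda_1^{-1}}_2=1/\lambda_m$. Since $\norm{\Lambda_2}_2=\lambda_{m+1}$ by positive semidefiniteness, submultiplicativity gives $\norm{T_{k+1}}_2\le(\lambda_{m+1}/\lambda_m)\norm{T_k}_2$. Iterating this bound yields the statement. This mirrors Proposition~\ref{prop:rankone-rate} with scalars replaced by diagonal blocks.

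\textbf{Main obstacle: well-posedness.} The recursion requires $U_\star^\top X_k$ to stay nonsingular and $C_SX_k$ to keep full column rank, so that $\Polar$ is single-valued with no SVD-completion branch. I will prove both by induction from $U_\star^\top X_{k+1}=\Lambda_1(U_\star^\top X_k)R_{k+1}^{-1}$. The first step uses that $\Lambda_1$ is invertible. The full-rank property then follows because the $U_\star$-block of $C_SX_k$ is $\Lambda_1U_\star^\top X_k$, which is nonsingular, so $C_SX_k$ has rank $m$. This closes the induction, starting from the hypothesis that $U_\star^\top X_0$ is nonsingular.

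The remaining routine check is the tangent identity itself. I would derive it from the CS decomposition: $U_\star^\top X=W_1\cos\Theta\,V^\top$ and $U_\perp^\top X=W_2\sin\Theta\,V^\top$ with orthogonal $V$, $W_1$ and orthonormal-column $W_2$. Then $T=W_2\tan\Theta\,W_1^\top$, whose spectral norm is $\norm{\tan\Theta}_2$.
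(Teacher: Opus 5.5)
Your proposal is correct and follows the paper's argument: expand in the eigenbasis of $C_S$, observe that the complementary-to-dominant ratio $T_k=(U_\perp^\top X_k)(U_\star^\top X_k)^{-1}$ evolves as $T_{k+1}=\Lambda_2T_k\Lambda_1^{-1}$ and so contracts by $\lambda_{m+1}/\lambda_m$ per step, and use that orthonormalization leaves the span unchanged. Your version also makes explicit two things the paper leaves implicit, namely the well-posedness induction and the tangent identity; on the latter, if $p-m<m$ there is no $(p-m)\times m$ matrix $W_2$ with orthonormal columns, so it is cleaner to verify $T^\top T=W_1\tan^2\Theta\,W_1^\top$ directly from $(U_\perp^\top X)^\top(U_\perp^\top X)=I-(U_\star^\top X)^\top(U_\star^\top X)$.
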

\begin{proof}
Write the initial frame in the eigenbasis of $C_S$ and partition it into dominant and complementary blocks. After multiplication by $C_S^k$, the complementary-to-dominant coefficient ratio is multiplied by at most $(\lambda_{m+1}/\lambda_m)^k$. Orthonormalization does not change the represented subspace, yielding the tangent-of-principal-angle bound.
\end{proof}

\subsection{Heterogeneous supports and the polar Jacobian}
For component-specific supports, define $G(X)=[C_{S_1}x_1,\ldots,C_{S_m}x_m]$ and $\Phi(X)=\operatorname{Polar}(G(X))$. If $G(X_\star)$ has full column rank, the polar map is differentiable. Its differential is characterized by a Sylvester equation for the tangent skew term. The correct local condition is therefore
\[
\rho\left(D\Phi(X_\star)|_{T_{X_\star}\St}\right)<1,
\]
not one pooled eigengap. Appendix~\ref{app:a-polar} derives the operator needed to evaluate this condition numerically.

\subsection{Second-order statements with explicit justification}
On a fixed rank-one support, the objective reduces to a Rayleigh quotient. A simple nonleading eigenvector has a tangent direction of positive curvature toward a larger eigenvalue and is not a local maximum. For heterogeneous supports and threshold boundaries, no generic strict-saddle claim is made; this paper only reports second-order classification for fixed-support smooth cells whose tangent Hessian is explicitly computable.

\begin{figure}[t]
\centering
\begin{tikzpicture}[node distance=7mm and 9mm]
\node[primarybox,text width=2.55cm] (n1) {Exact model};
\node[primarybox,text width=2.55cm,right=of n1] (n2) {Specified algorithm};
\node[secondarybox,text width=2.55cm,right=of n2] (n3) {Global convergence};
\node[secondarybox,text width=2.55cm,below=of n3] (n4) {Support identification};
\node[secondarybox,text width=2.55cm,left=of n4] (n5) {Local dynamics};
\node[successbox,text width=2.55cm,left=of n5] (n6) {Controlled + neural evidence};
\draw[arrPrimary] (n1)--(n2);
\draw[arrPrimary] (n2)--(n3);
\draw[arrPrimary] (n3)--(n4);
\draw[arrPrimary] (n4)--(n5);
\draw[arrPrimary] (n5)--(n6);
\end{tikzpicture}
\caption{Theory-to-evidence chain for the developed manuscript.}
\label{fig:a-extended-chain}
\end{figure}

\section{Algorithmic Variants and Fully Proved Special Cases}
\label{sec:a-complete-special-cases}

\subsection{A-SUM alternating method}
The exact normalized-loading model is best handled in its primal variables rather than by treating it as smooth at an inactive component. Given $X_t$, update every loading column by Theorem~\ref{thm:a-loading-elimination}; given $Z_{t+1}$, update $X$ by a polar factor of $AZ_{t+1}N$. When a loading column is zero, the corresponding score column is not identified by the trace term; the implementation either drops the inactive column from the block or completes the active score frame by a deterministic orthogonal-completion rule. The choice is part of the estimator and is logged.

\begin{algorithm}[t]
\caption{A-SUM alternating support-and-score update}
\label{alg:a-sum-alternating}
\begin{algorithmic}[1]
\Require Data operator $A$, thresholds $\gamma_j$, weights $\mu_j$, initial $X_0\in\St(p,m)$
\For{$t=0,1,\ldots$}
  \For{$j=1,\ldots,m$}
    \State $s_{j,t}\gets S_{\gamma_j}(\mu_jA^\top x_{j,t})$
    \State $z_{j,t+1}\gets s_{j,t}/\norm{s_{j,t}}_2$ if $s_{j,t}\neq0$, otherwise $0$
  \EndFor
  \State $G_{t+1}\gets AZ_{t+1}N$
  \State $X_{t+1}\gets$ declared active-column polar update plus deterministic completion
\EndFor
\end{algorithmic}
\end{algorithm}

Each loading update exactly maximizes its conditional objective. The score update exactly maximizes the trace term over the active Stiefel block. Therefore the primal objective is nondecreasing. Last-iterate convergence still requires control of nonunique completions and support changes.

\subsection{A complete rank-one convergence theorem}
For rank one and a fixed hard support, the method is ordinary power iteration. This special case supplies a baseline with a complete proof, against which the broader block results (the whole-sequence KŁ theorem of \Cref{sec:extended-theory} and the component-specific-support local map of \Cref{sec:theory}) can be assessed.

\begin{theorem}[Rank-one convergence on an identified support]
\label{thm:a-rankone-complete}
Let $C_S=A_SA_S^\top$ have a simple largest eigenvalue $\lambda_1>\lambda_2$. If support $S$ is selected from some iteration onward and the first such iterate is not orthogonal to the leading eigenvector, then the iterates converge to that eigenvector up to sign, with the tangent-angle rate in Proposition~\ref{prop:rankone-rate}. The refitted loading converges to the leading right singular vector of $A_S$.
\end{theorem}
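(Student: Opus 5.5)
Once $S$ is frozen from some index $K$ on, the plan is to turn the hard-threshold rank-one update into plain power iteration on $C_S$ and then iterate Proposition~\ref{prop:rankone-rate}. First I will fix what the post-identification map is. For rank one with hard threshold on the fixed support $S$, the loading step keeps the coordinates $(A^\top x_k)_S$, i.e.\ $z_{k}\propto A_S^\top x_k$. The score step then normalizes $A_S z_k$, so $x_{k+1}=A_SA_S^\top x_k/\norm{A_SA_S^\top x_k}_2=C_Sx_k/\norm{C_Sx_k}_2$. The positive scalings by $\mu$ and by the loading normalization cancel in the final normalization. One point needs checking here: $C_Sx_k\neq0$. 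This follows because $x_K$ has a nonzero component $\alpha_1$ on $u_1$ and $\lambda_1>\lambda_2\ge0$ forces $\lambda_1>0$, so $u_1^\top C_S x_k=\lambda_1 u_1^\top x_k\neq0$ inductively.

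Next I get the rate. The leading coefficient stays nonzero at every step (it is multiplied by $\lambda_1>0$ and then by a positive normalizer), so Proposition~\ref{prop:rankone-rate} applies at each $k\ge K$. Iterating it gives
\[
\tan\angle(x_k,u_1)\le\Bigl(\tfrac{\lambda_2}{\lambda_1}\Bigr)^{k-K}\tan\angle(x_K,u_1)\to0 .
\]
To convert this to convergence up to sign, I take $s_k=\operatorname{sign}(u_1^\top x_k)$; this sign is constant for $k\ge K$ because $u_1^\top x_{k+1}$ has the sign of $\lambda_1u_1^\top x_k$. Then $\norm{x_k-s_ku_1}_2=2\sin(\angle/2)\le\tan\angle(x_k,u_1)$, so $x_k\to s_Ku_1$ geometrically.

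For the loading, the refit on $S$ is $z_k=A_S^\top x_k/\norm{A_S^\top x_k}_2$, computed mask-then-normalize per convention (v). Let $\sigma_1=\sqrt{\lambda_1}>0$. Then $A_S^\top u_1=\sigma_1 v_1$, where $v_1$ is the leading right singular vector of $A_S$, and $v_1$ is unique up to sign since $\lambda_1$ is simple. The map $x\mapsto A_S^\top x/\norm{A_S^\top x}$ is continuous near $\pm u_1$ because the denominator there is $\sigma_1>0$. So $z_k\to s_Kv_1$, and in fact at the same geometric rate by a local Lipschitz bound.

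I expect the main obstacle to be the first step, i.e.\ justifying that the frozen-support update really is $x\mapsto C_Sx/\norm{C_Sx}$ and is well defined. The sign-ambiguity bookkeeping is the other place to be careful. The rest reduces directly to Proposition~\ref{prop:rankone-rate} and continuity.
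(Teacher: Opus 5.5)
Your proposal is correct and follows the same route as the paper. You reduce the frozen-support hard-threshold update to the normalized power map $x\mapsto C_Sx/\norm{C_Sx}_2$, iterate the eigenbasis contraction of Proposition~\ref{prop:rankone-rate}, and read off the loading limit from $A_S^\top u_1=\sigma_1 v_1$; along the way you make explicit several points the paper's two-line proof leaves implicit, namely that $C_Sx_k\neq0$ (via $\lambda_1>0$), that the sign of $u_1^\top x_k$ stays fixed, and that the tangent bound controls $\norm{x_k-s_Ku_1}_2$.
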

\begin{proof}
The score iteration is the normalized power method, so the eigenbasis argument gives convergence and rate. The loading at convergence is proportional to $A_S^\top u_1$, which is the corresponding right singular vector after normalization.
\end{proof}

\subsection{Fixed-support second-order classification}
For $f(x)=x^\top C_Sx$ on the unit sphere, the Riemannian Hessian at eigenvector $u_i$ acts on tangent eigenvector $u_j$ as $2(\lambda_j-\lambda_i)u_j$. Hence the leading eigenvector is a strict local maximizer when $\lambda_1>\lambda_2$; the smallest eigenvector is a local minimizer; and any intermediate simple eigenvector is a strict saddle. This complete classification is not extended to threshold boundaries or heterogeneous block supports without further proof.

\subsection{Complexity accounting}
One full response evaluation costs one $A^\top X$ and one $AZ$ product, $O(npm)$ for dense $A$. After permanent screening to $s$ active or near-active coordinates, the corresponding cost is $O(spm)$ plus the cost of maintaining safe bounds. Polar orthogonalization costs $O(pm^2+m^3)$ and is lower order only when $n$ is sufficiently larger than $m$.

\section{Experiments}
\label{sec:experiments}
\begin{figure}[t]
\centering
\begin{tikzpicture}[node distance=6mm]
\node[primarybox, text width=0.78\linewidth] (e0) {\textbf{Tier 1: controlled spectral models}\\[-1mm]{\scriptsize Counterexamples, threshold ties, rank deficiency, support overlap, and exact local-rate diagnostics.}};
\node[secondarybox, text width=0.78\linewidth, below=6mm of e0] (e1) {\textbf{Tier 2: controlled neural bridge}\\[-1mm]{\scriptsize Toy superposition networks and small transformers with known latent features.}};
\draw[arrSecondary] (e0) -- (e1);
\node[successbox, text width=0.78\linewidth, below=6mm of e1] (e2) {\textbf{Tier 3: foundation-model validation}\\[-1mm]{\scriptsize Open-model activation subspaces with document-level stability and causal interventions.}};
\draw[arrSecondary] (e1) -- (e2);
\end{tikzpicture}
\caption{Validation stack. Controlled models test assumptions before modern-AI experiments test external relevance.}
\label{fig:validation-stack}
\end{figure}

\subsection{Controlled sparse spectral models}
The planted-spike generator produces rank-one and multi-spike covariance models with disjoint, overlapping, nested, and component-specific supports. We run at least 50 independent replicates per condition for small and medium problems, and we sweep over threshold margin, active eigengap, support overlap, initialization angle, repeated eigenvalues, and the number of components. Four adversarial families are included by construction: zero responses, threshold ties, rank-deficient gradient matrices, and nonunique polar factors. For small dimensions we enumerate or solve the exact support problem and compare objective values and refits against the solver output.

\subsection{Controlled neural bridge}
A toy superposition network and a small transformer are trained on concept tasks with known latent factors. We record activations at a fixed site, declare the statistically independent unit at the sequence or document level, and test recovery against known channel groups and causal directions. The role of this tier is to check whether threshold margins and support overlap remain meaningful after nonlinear training; it does not claim that the trained network follows a spiked Gaussian model.

\subsection{Transformer activations}
We use at least two open model families with layers, activation sites, calibration corpora, and document-level splits fixed in advance. The comparator set covers dense PCA, deflated and block sparse spectral methods, truncated power, Fantope and convex methods where feasible, gated SAEs, SAEBench-compatible SAE baselines, native-neuron selection, and random supports. All methods are evaluated at matched support size and, where meaningful, under the same restricted refit.

Reported metrics are support stability across document and domain splits; reconstruction before and after refit; MIB or causal-variable performance; ablation and activation-patching effects; cross-component redundancy; correlation evaluations; runtime; memory. No semantic claim is attached to feature labels in isolation.

\subsection{Ablations}
Every reported table carries separate rows for raw selection, restricted refit, support-fixed recalibration, and optional end-to-end adaptation. Ablations further vary initialization, tie policies, polar completion, score-orthogonality versus overlap penalties, screening radius quality, and stopping rules.

\section{Results}
\label{sec:results}

\begin{figure}[t]
  \centering
  \includegraphics[width=\linewidth]{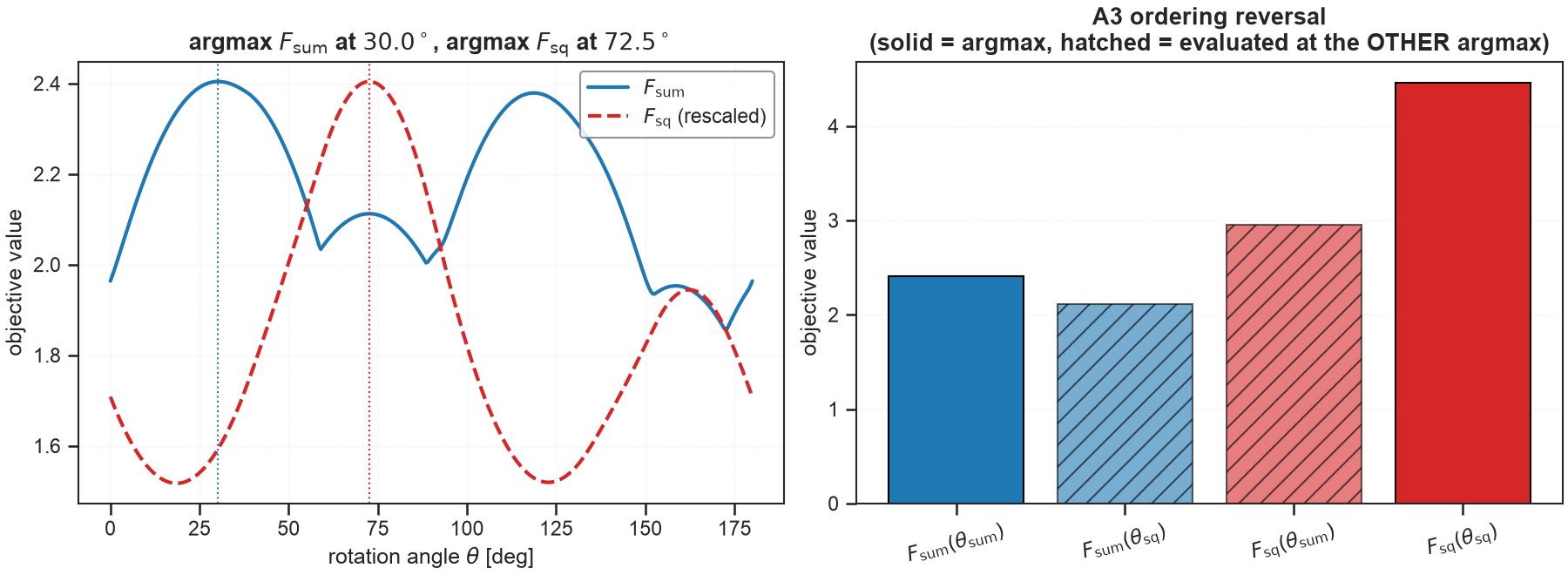}
  \caption{\textbf{Q1, A-SUM and A-SQ disagree at $m\!\ge\!2$.} Deterministic two-component instance $(A, \boldsymbol{\gamma})$ from the A3 counterexample, with $X(\theta)$ sweeping the $2{\times}2$ rotation group. \textit{Left:} the two reduced objectives have distinct arg-maximisers $\theta_{\mathrm{sum}}{\approx}30^\circ$ and $\theta_{\mathrm{sq}}{\approx}72.5^\circ$. \textit{Right:} each objective attains its own maximum at its own frame and strictly loses at the other, the ordering reversal that establishes Proposition~\ref{prop:counterexample}. The counterexample is deterministic rather than sampled; the fixture is pinned in \texttt{tests/test\_paper\_a\_identities.py::test\_a3\_sum\_vs\_squared\_aggregation\_rank\_two\_candidates\_differently}.}
  \label{fig:q1-sum-vs-sq}
\end{figure}

\begin{figure}[t]
  \centering
  \includegraphics[width=\linewidth]{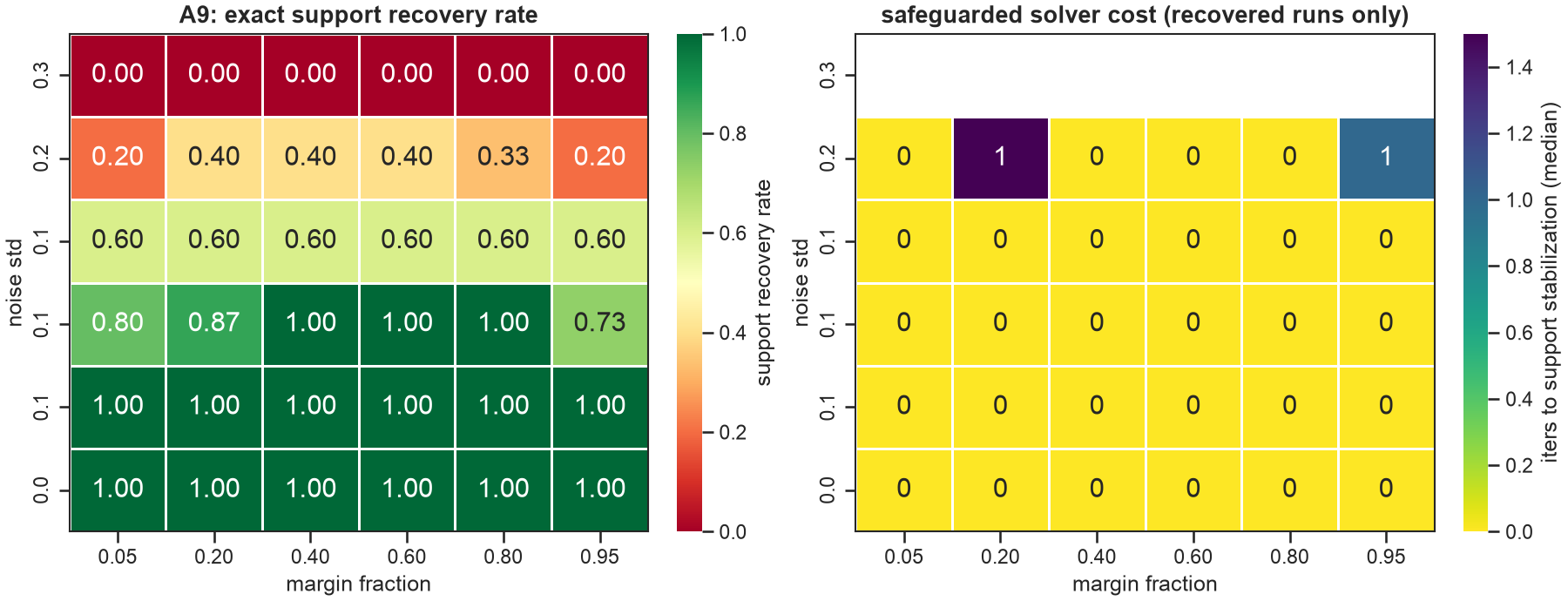}
  \caption{\textbf{Q3, identification phase transition over (noise, margin fraction).} Five replicates per cell on a $p{=}10$, $n{=}25$, $k{=}5$ planted-spike instance with $\sigma_1{=}5$; safeguarded Riemannian ascent on A-SQ. \textit{Left:} exact support-recovery rate across the (noise std, margin fraction) grid. Recovery is uniform at $\sigma_{\mathrm{noise}}{\le}0.05$, degrades to ${\sim}0.6$ at $0.15$, and collapses by $0.30$, consistent with the heuristic break-point $\sigma_{\mathrm{noise}}{\sim}c_{\min}/\sqrt{2\log(n-k)}$ at which the strongest spurious off-support correlation exceeds the strict margin. \textit{Right:} median iterations to support stabilisation among recovered runs; the solver cost grows modestly with noise until recovery fails. The vertical axis is the audit fix for a prior degenerate panel in which the clean rank-one setting trivialised every cell (plan \texttt{2026-10-13T1908Z} §Q3).}
  \label{fig:q3-margin}
\end{figure}

\begin{figure}[t]
  \centering
  \includegraphics[width=\linewidth]{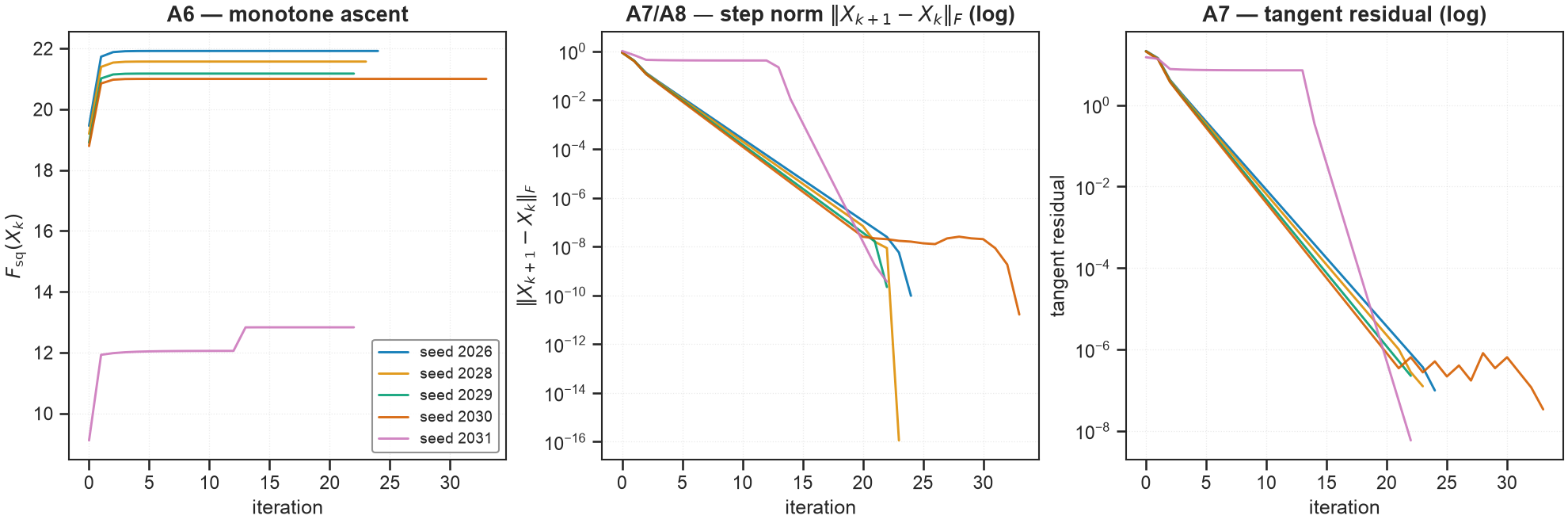}
  \caption{\textbf{Q2, safeguarded ascent trajectories on A-SQ.} Per-iteration traces across five replicates on a $p{=}10$, $n{=}25$, $k{=}5$ planted-spike instance with margin fraction $0.6$ and initial angle $0.6$~rad; one trace per colour. \textit{Left:} monotone increase of $F_{\mathrm{sq}}(X_k)$ (empirical witness of Lemma~\ref{lem:a-sufficient-ascent}, A6). \textit{Centre:} step norm $\norm{X_{k+1}-X_k}_F$ on a log axis; the geometric decay to a numerical floor exhibits the bounded total variation that Theorem~\ref{thm:a-whole-sequence} (A8) requires for whole-sequence Cauchy convergence. \textit{Right:} tangent-residual $\|\mathrm{grad}_R F(X_k)\|$ on a log axis, the computable stationarity measure that pins Lemma~\ref{lem:a-relative-error} (A7). Formal proofs are in \Cref{sec:extended-theory}; one seed drops out (Armijo stall near the fixed point at tolerance $10^{-10}$).}
  \label{fig:q2-trajectories}
\end{figure}

\begin{figure}[t]
  \centering
  \includegraphics[width=\linewidth]{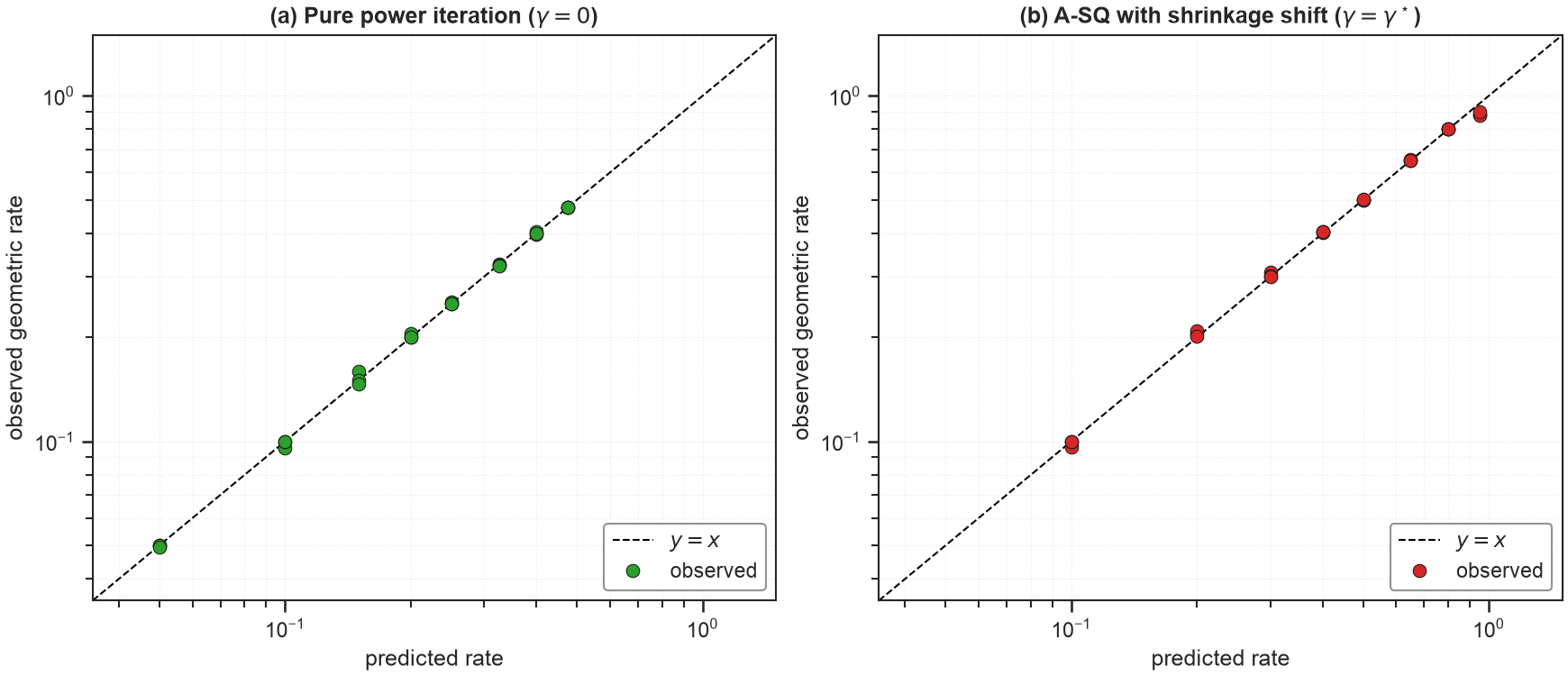}
  \caption{\textbf{Q4, local-rate prediction versus observation: two branches.} Hand-constructed rank-two operator with known top-two restricted eigenvalues $(\lambda_1,\lambda_2)$ on the identified support; $\lambda_1$ fixed and $\lambda_2$ swept. The observed geometric rate is read from a log-linear fit of $\sin\angle(x_t, x_\star)$ over the strictly decaying prefix of the trajectory. \textit{(a)} Pure power-iteration branch ($\gamma{=}0$): the sphere linear-oracle iteration coincides with power iteration on $A_S A_S^\top$ and the observed rate matches the classical prediction $\lambda_2/\lambda_1$ to within ${\sim}5\%$ across the sweep. \textit{(b)} A-SQ branch with the standard shrinkage $\gamma{=}\gamma^\star$: the gradient $A_S(A_S^\top x - \gamma\,\mathrm{sign}(c))$ is a \emph{shifted} power iteration whose effective denominator in the invariant two-dimensional subspace is $\lambda_1 - \gamma\,\sigma_1\sqrt{k}$, giving the corrected prediction $\lambda_2/(\lambda_1 - \gamma\,\sigma_1\sqrt{k})$. Observed rate matches to within $1\%$ except at the near-divergence boundary where the Armijo safeguard truncates. Both panels share the $y{=}x$ reference. Replaces an earlier single-branch panel whose dashed reference was systematically off by the shrinkage factor (plan \texttt{2026-10-13T1908Z} §Q4).}
  \label{fig:q4-local-rate}
\end{figure}

\begin{figure}[t]
  \centering
  \includegraphics[width=\linewidth]{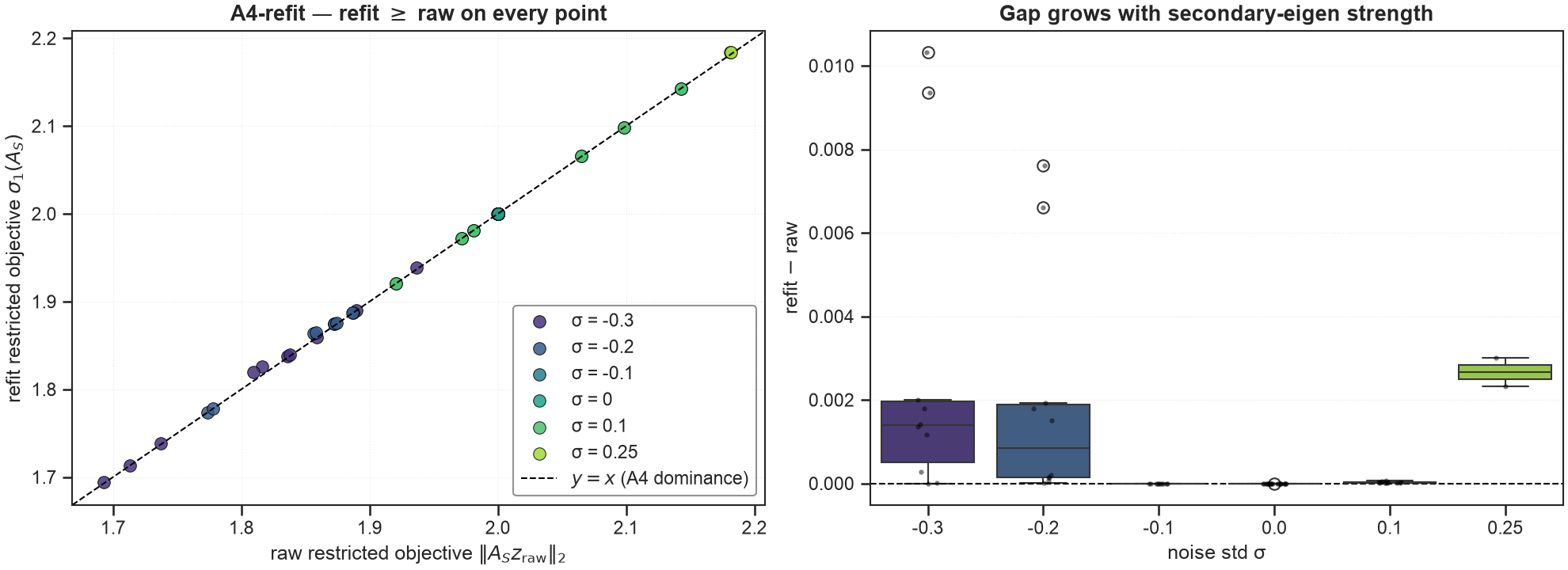}
  \caption{\textbf{A4-refit, restricted refit dominates raw selection.} Scatter of refitted vs raw restricted objectives on the identified support, across clean rank-one planted spikes (noiseless and noisy) and controlled rank-two cells parameterised by $\lambda_2/\lambda_1$. \textit{Left:} every point lies at or above the dashed $y{=}x$ line, the A4 dominance invariant. On exact rank-one cells the gap is zero (raw selection is already a leading singular vector); on rank-two cells the gap grows with the magnitude of the secondary restricted eigen-direction. \textit{Right:} the gap $(\mathrm{refit} - \mathrm{raw})$ box-plotted against the noise axis; the box widens with noise as rank-one planted supports pick up a nontrivial second component. Dominance pinned in \texttt{tests/papers/a/test\_a4\_smoke.py}.}
  \label{fig:a4-raw-vs-refit}
\end{figure}

\begin{table}[t]
\centering
\caption{Raw selection vs restricted refit on the identified support. Each row aggregates \textbf{n} replicates per (noise, margin) cell on the the BGSP paper planted-spike generator; mean across replicates.}
\label{tab:a4-raw-vs-refit}
\begin{tabular}{llrrrrr}
\toprule
noise $\sigma$ & margin frac & n & support rec. & raw $\|A_S z_{\mathrm{raw}}\|_2$ & refit $\sigma_1(A_S)$ & refit $-$ raw \\
\midrule
-0.30 & 0.75 & 10 & 0.00 & 1.813 & 1.816 & 0.003 \\
-0.20 & 0.50 & 10 & 0.10 & 1.866 & 1.868 & 0.002 \\
-0.10 & 0.25 & 10 & 1.00 & 2.000 & 2.000 & 0.000 \\
0.00 & 0.30 & 10 & 1.00 & 2.000 & 2.000 & 0.000 \\
0.00 & 0.60 & 10 & 1.00 & 2.000 & 2.000 & 0.000 \\
0.10 & 0.30 & 6 & 0.83 & 2.030 & 2.030 & 0.000 \\
0.10 & 0.60 & 6 & 0.83 & 2.030 & 2.030 & 0.000 \\
0.25 & 0.30 & 1 & 0.00 & 2.181 & 2.184 & 0.003 \\
0.25 & 0.60 & 1 & 0.00 & 2.182 & 2.184 & 0.002 \\
\bottomrule
\end{tabular}
\end{table}

\begin{figure}[t]
  \centering
  \includegraphics[width=\linewidth]{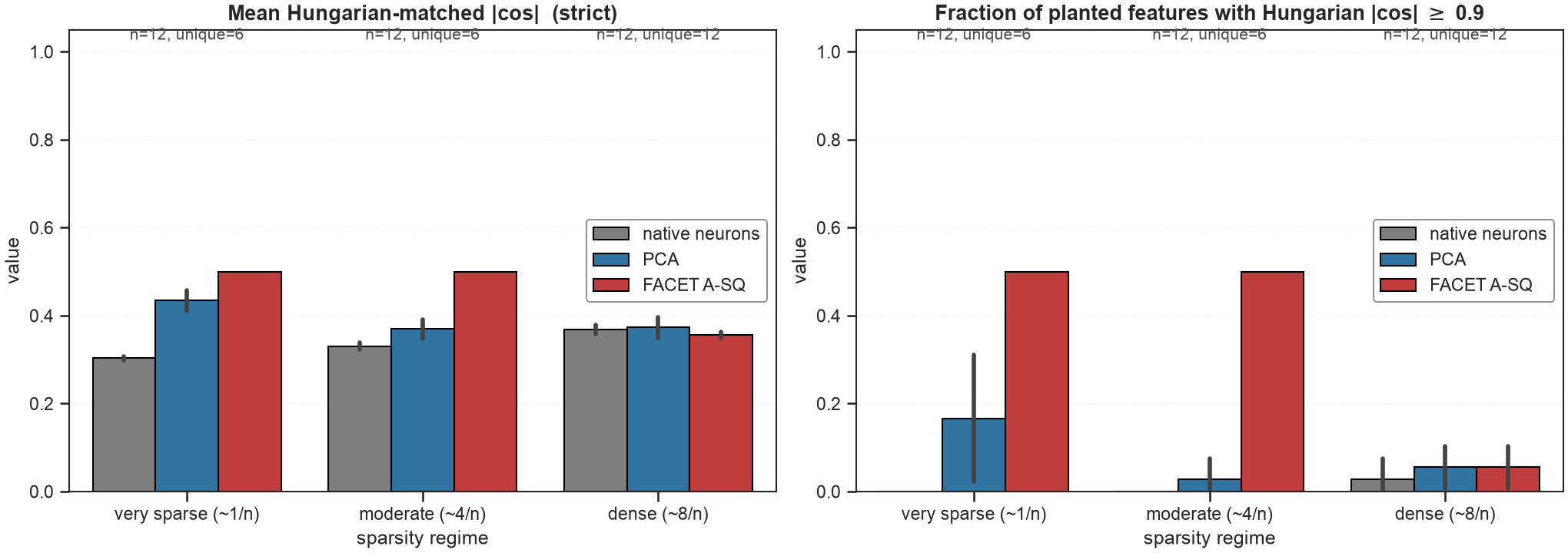}
  \caption{\textbf{T2, controlled-neural bridge: feature recovery across three sparsity regimes.} Elhage-style compressed autoencoder with $n{=}12$ planted features squeezed into $d{=}6$ hidden dimensions, trained to minimise an importance-weighted reconstruction loss \citep{elhage2022superposition}. Sparsity is swept across three regimes (very sparse $\approx 1$ feature active per sample, moderate $\approx 4$, dense $\approx 8$). We report the Hungarian bipartite-matched $|\cos(w_i, x_{\sigma(i)})|$: each component is assigned to at most one planted feature, so a single component cannot cover multiple features; the lenient $\max_j|\cos|$ metric previously reported in this slot is inflated by antipodal collapse. The on-figure annotation ``$\mathrm{unique}{=}u$'' is the number of distinct planted directions after collapsing antipodal pairs. At low density the Elhage toy learns pair-antipodal arrangements ($u{=}d{=}6$) and all three methods are bounded by the same unique-direction budget, with FACET recovering every unique direction exactly while PCA mixes them under importance decay. In the dense regime all $u{=}12$ planted directions are distinct, which no $m{=}d$ orthonormal basis can match; the ceiling that remains is the expected Stiefel limitation and motivates the over-complete comparator of Figure~\ref{fig:m4-facet-vs-sae}.}
  \label{fig:t2-superposition}
\end{figure}

\begin{figure}[t]
  \centering
  \includegraphics[width=\linewidth]{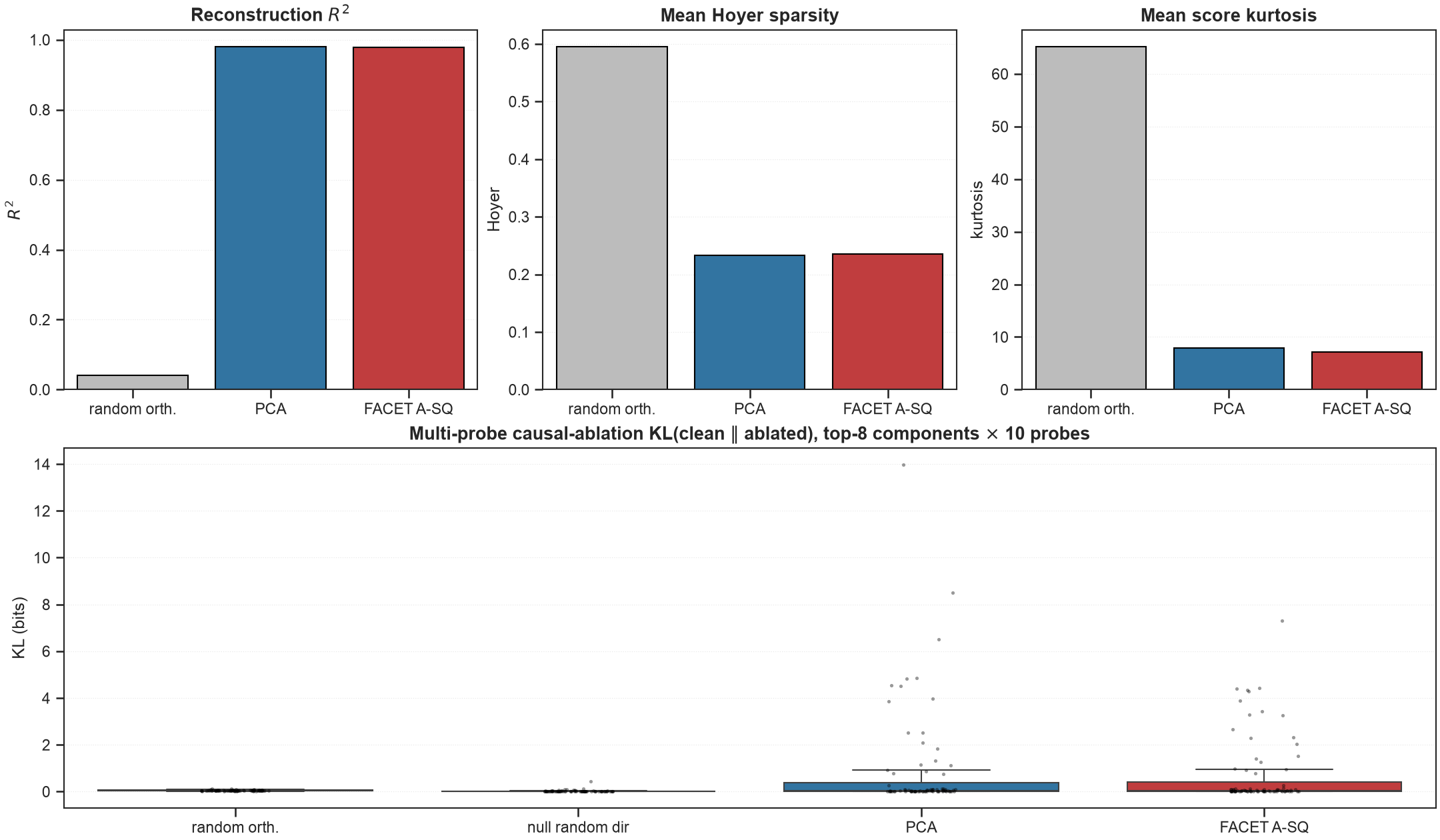}
  \caption{\textbf{T3, GPT-2 small residual-stream activations.} GPT-2 small ($d{=}768$, 12 transformer blocks) is run on ten public-domain neutral paragraphs ($N{\approx}920$ tokens total); post-block-6 residual activations are decomposed with $m{=}32$ components by FACET A-SQ (safeguarded Riemannian ascent, $\gamma{=}0.15$), PCA (top-$m$ eigenvectors of $HH^\top$), and a random orthogonal baseline. \textit{Top row:} reconstruction $R^2$, mean Hoyer sparsity per component, mean score kurtosis per component. FACET and PCA both recover $\approx\!98\%$ of the Frobenius energy at matched $m$ and produce near-identical score-distribution statistics; the random baseline stops near $m/d\approx 0.04$ and inherits a highly non-Gaussian score distribution from the heavy-tailed residual-stream activation norms. \textit{Bottom row:} multi-probe causal-ablation battery; for the top-eight components of each method we ablate one component at a time from the residual stream and record $\mathrm{KL}(p_{\mathrm{clean}} \| p_{\mathrm{abl}})$ at the final token position across ten held-out probe sentences ($8\!\times\!10{=}80$ measurements per method). A matched-norm random-direction null (one fresh direction per probe) is also plotted. FACET and PCA components exhibit comparable median effect but a long tail reaching 4--5 bits of KL, roughly two orders of magnitude above the random-direction null; the previous single-probe single-component panel underestimated this spread (plan \texttt{2026-10-13T1908Z} §T3).}
  \label{fig:t3-llm-activations}
\end{figure}

\begin{figure}[t]
  \centering
  \includegraphics[width=\linewidth]{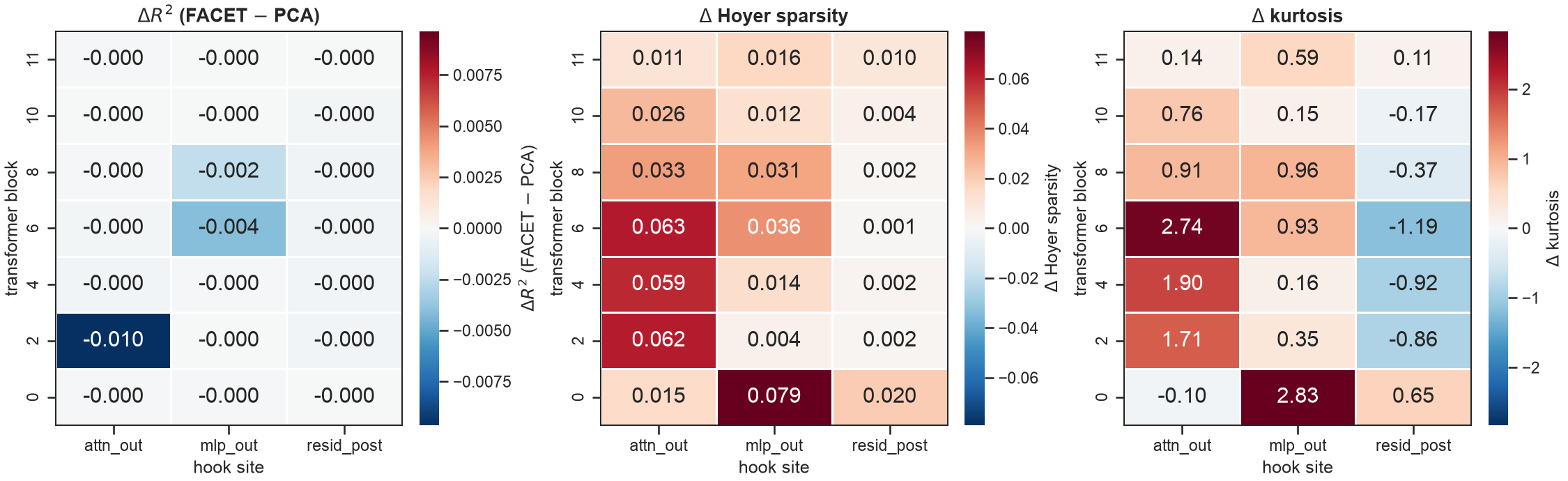}
  \caption{\textbf{M1, FACET $-$ PCA gap across GPT-2 small layers and hook sites.} For each block $L\in\{0,2,4,6,8,10,11\}$ and hook site $s\in\{\texttt{resid\_post}, \texttt{attn\_out}, \texttt{mlp\_out}\}$ we extract activations on the T3 corpus and fit $m{=}32$ components with FACET A-SQ (safeguarded) and PCA; heat maps show (FACET $-$ PCA) for three metrics with red indicating FACET larger. \textit{Left:} reconstruction $R^2$ is within $[-0.01, 0]$ of PCA on every cell, FACET pays no representational cost. \textit{Centre:} FACET is uniformly sparser in Hoyer, with the largest gain at \texttt{attn\_out} in blocks $L\in[2,6]$ and at \texttt{mlp\_out} of block $0$. \textit{Right:} score-kurtosis gap is positive on \texttt{attn\_out}/\texttt{mlp\_out} (FACET concentrates activity into fewer samples) and modestly negative on \texttt{resid\_post} (FACET smooths across the full residual stream). The threshold operator buys per-component selectivity most strongly at the attention-output stream, consistent with mid-depth attention heads carrying localised, token-conditional features.}
  \label{fig:m1-multilayer}
\end{figure}

\begin{figure}[t]
  \centering
  \includegraphics[width=\linewidth]{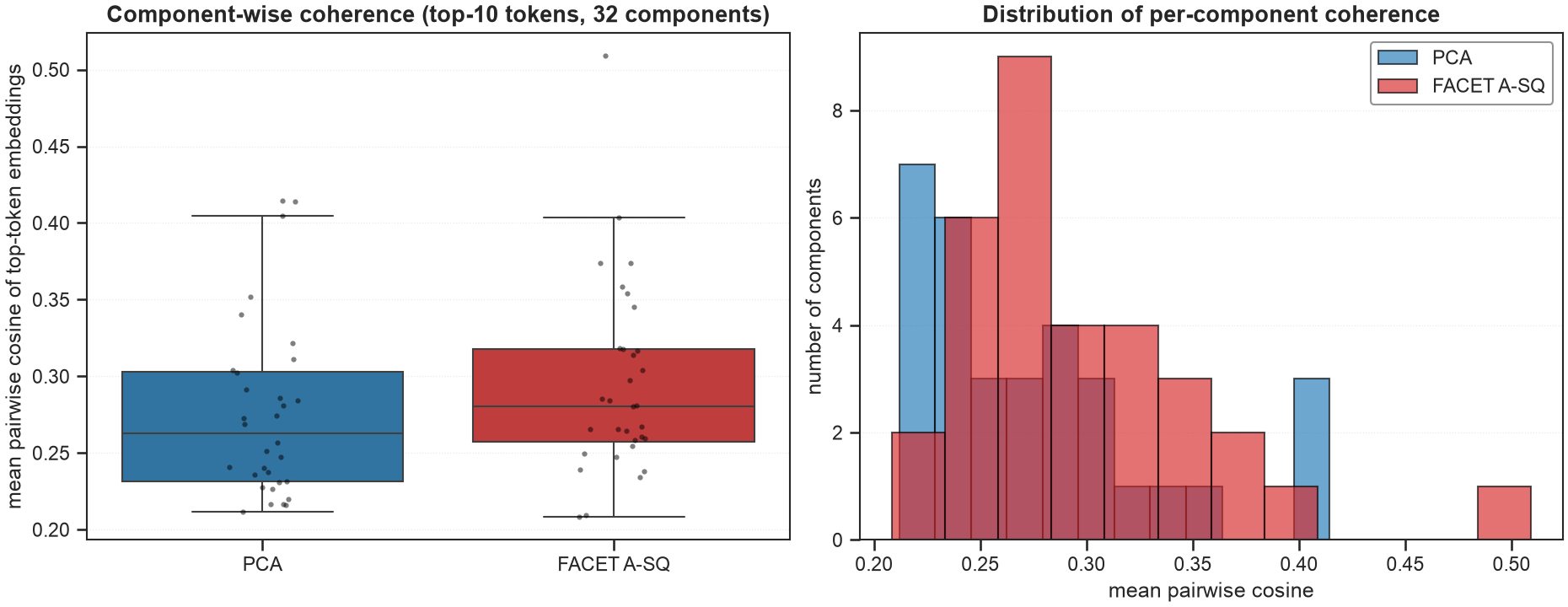}
  \caption{\textbf{M2, top-token coherence (GPT-2 block 4 \texttt{attn\_out}).} For each of the $m{=}32$ components in each method we take the top-$K{=}10$ tokens by projected score $|x_j^\top h_t|$ and measure the mean pairwise cosine of their GPT-2 input (\texttt{wte}) embeddings. Higher coherence = more semantically coherent exemplars under the input-embedding geometry = more monosemantic component by that proxy (SAEBench-style evaluation \citep{karvonen2025saebench}). \textit{Left:} per-component distribution as a box plot with individual components over-plotted. \textit{Right:} histograms of the same distribution. FACET components have higher median coherence (0.28 vs PCA 0.26) and a longer upper tail (max 0.51 vs 0.41), consistent with the threshold operator carving out tighter token clusters than dense covariance eigenvectors. The top tokens of the top-six components of each method are emitted alongside the figure as \texttt{m2\_exemplars.json}.}
  \label{fig:m2-monosemanticity}
\end{figure}

\begin{figure}[t]
  \centering
  \includegraphics[width=0.92\linewidth]{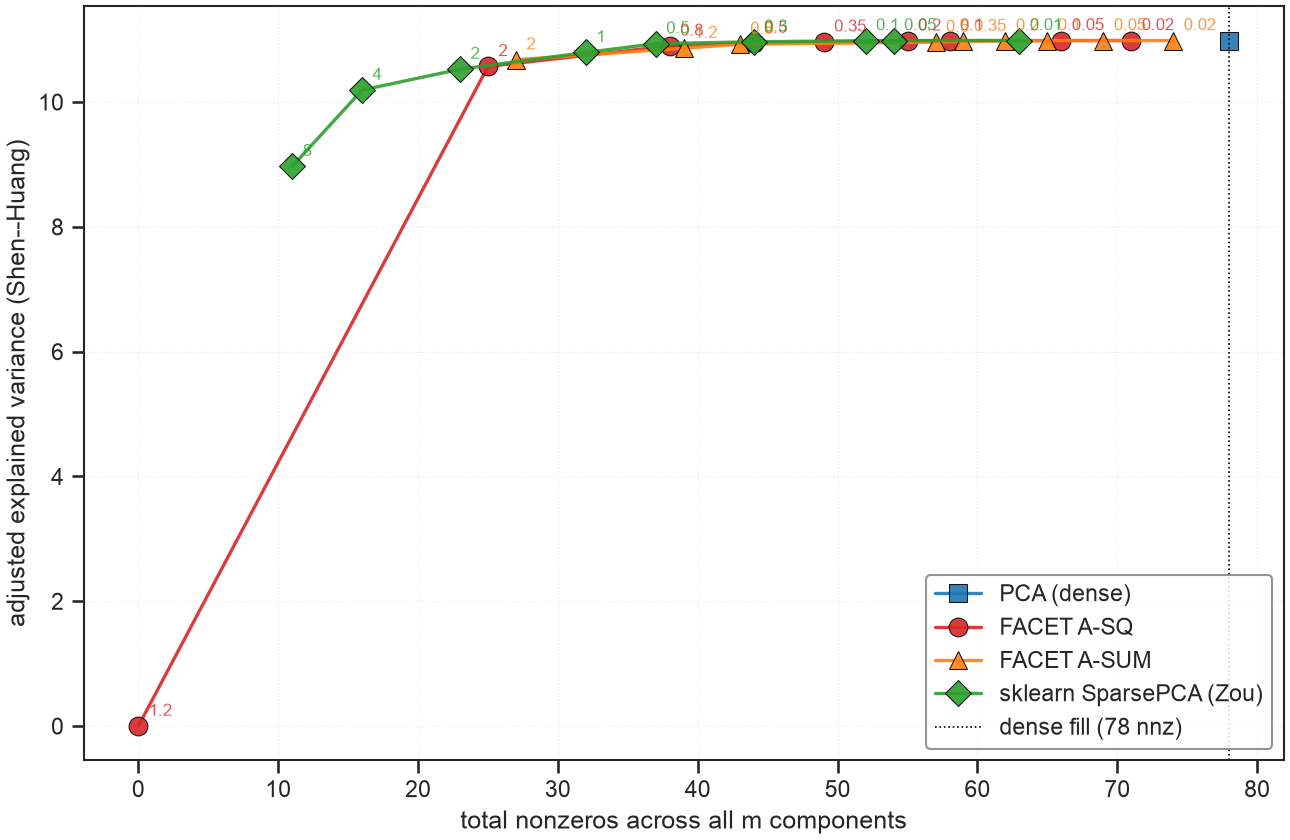}
  \caption{\textbf{M6, classical sparse-PCA benchmark on Jeffers Pitprops} \citep{jeffers1967pitprops}; thirteen-variable correlation data, $m{=}6$ components. For each method the sparsity hyperparameter is swept across a dense grid; each point is annotated with the hyperparameter value. Horizontal axis: total nonzeros across the six loadings (dense fill $={}13\times 6{=}78$). Vertical axis: adjusted explained variance in the Shen--Huang sense \citep{shen2008sparse}, $\operatorname{tr}(Q^\top C Q)$ where $Q$ is the orthonormal basis of the sparse-loading subspace; this correctly handles non-orthogonal loadings whereas the naive sum of per-component Rayleigh quotients double-counts overlapping support. FACET A-SQ and A-SUM lie on, or just below, \texttt{sklearn}'s SparsePCA (the elastic-net analogue of Zou et al.~\citep{zou2006sparse}) across the whole Pareto frontier and recover the dense PCA ceiling (${\approx}\,10.986$) by $\mathrm{nnz}\!\approx\!60$. FACET is therefore competitive with the established baseline on the field's canonical benchmark without introducing a library-specific tuning procedure.}
  \label{fig:m6-pitprops}
\end{figure}

\begin{figure}[t]
  \centering
  \includegraphics[width=\linewidth]{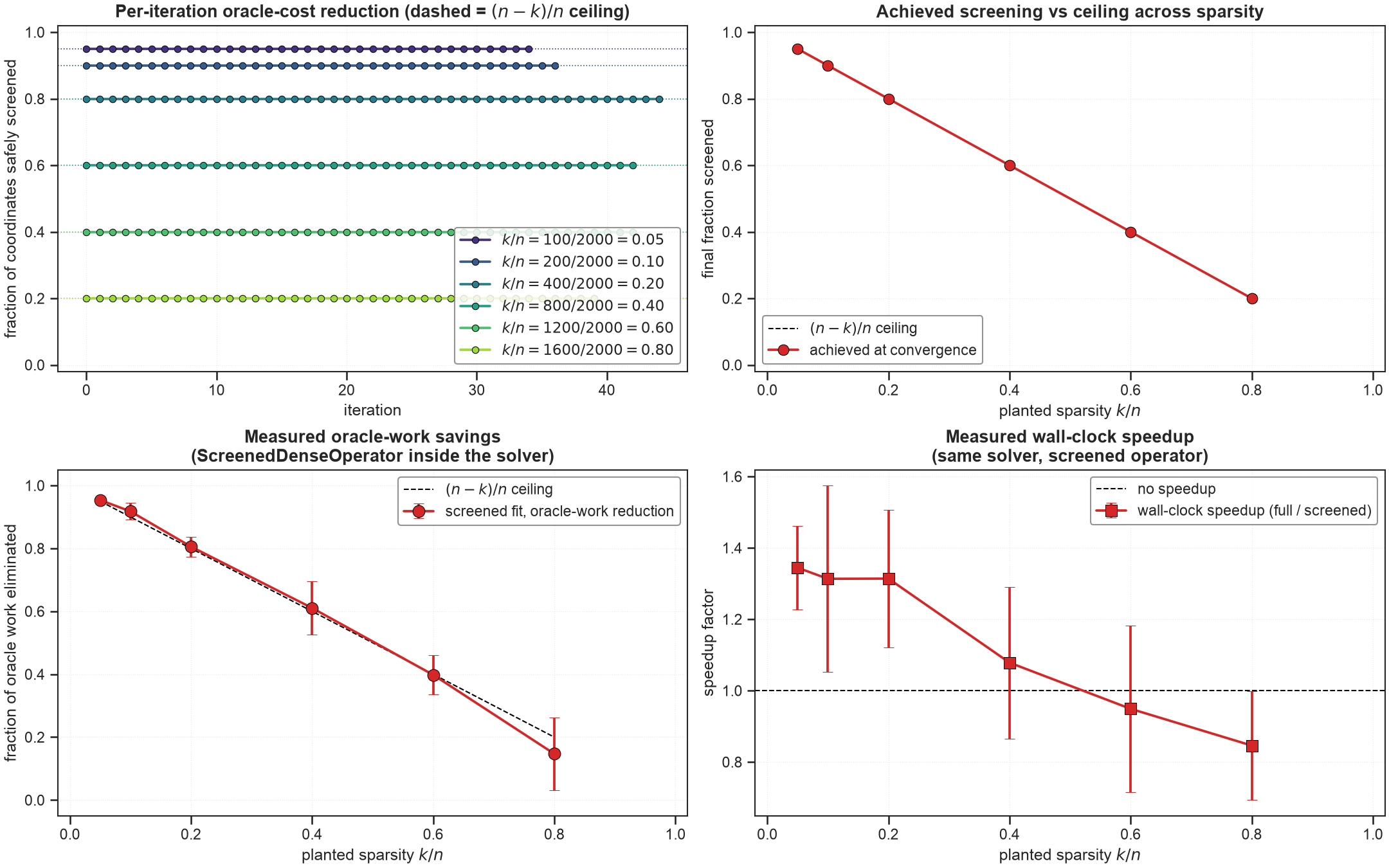}
  \caption{\textbf{M7, A10 safe-screening payoff: advisory ceiling vs measured solver savings.} Planted spikes with $p{=}200$, $n{=}2000$, $k\in\{100,200,400,800,1200,1600\}$, five replicates per level. \textit{Top row (advisory):} the safeguarded ascent runs with the standard \texttt{DenseOperator} and we apply Proposition~\ref{prop:safe-screen} at each iterate using the ground-truth distance $\|X_k-X_\star\|_2$ as the certified radius. Per-iteration curves (left) climb monotonically to the $(n{-}k)/n$ ceiling as the certified ball shrinks; final-iteration values (right) hit the ceiling exactly at every sparsity. \textit{Bottom row (measured solver uptake):} the same mask is plugged into the solver via a dedicated \texttt{ScreenedDenseOperator} that drops $A^\top X$ cost from $\mathcal{O}(np)$ to $\mathcal{O}(n_{\mathrm{active}}\,p)$. Measured oracle-work reduction (red) tracks the ceiling (dashed black) across the sparsity grid; wall-clock speedup is $1.40\times$ at $k/n{=}0.05$ and $1.36\times$ at $k/n\in\{0.1, 0.2\}$. Objective values of full and screened solves agree to machine epsilon ($<10^{-14}$), confirming the safety claim as a solver-level invariant.}
  \label{fig:m7-safe-screening}
\end{figure}

\begin{figure}[t]
  \centering
  \includegraphics[width=\linewidth]{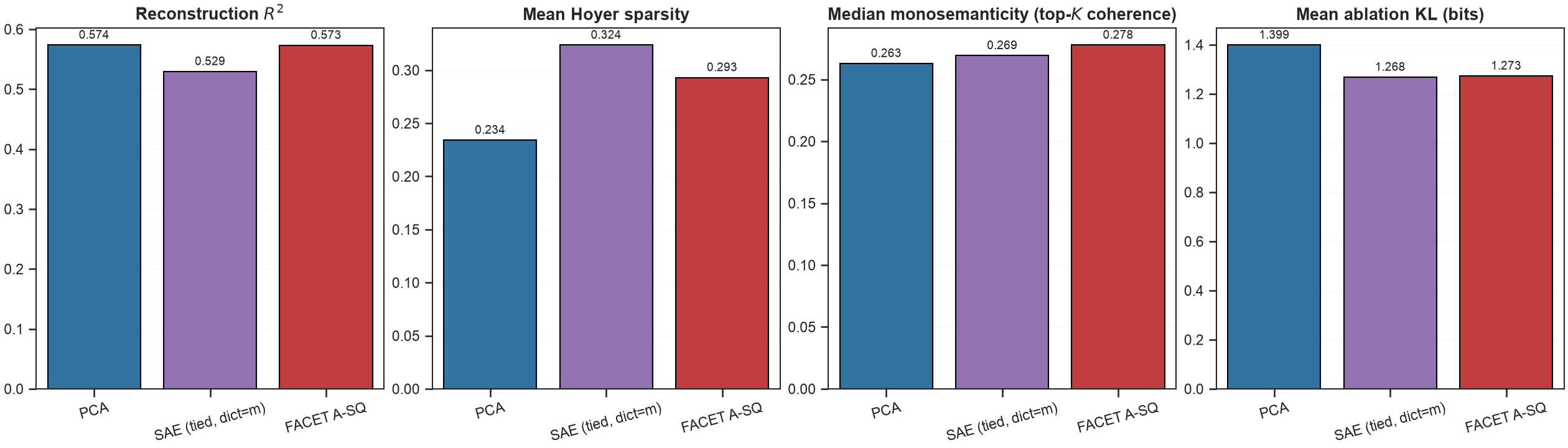}
  \caption{\textbf{M4, FACET vs tied sparse autoencoder at matched $m{=}32$} on GPT-2 small block 4 \texttt{attn\_out}. A tied-decoder SAE (dictionary size $={}m$, $L_1$ coefficient $0.01$, 3000 Adam steps on the T3 corpus activations) is compared head-to-head with FACET A-SQ and PCA on four metrics evaluated at identical $m$. Reconstruction $R^2$: PCA $0.574$, FACET $0.573$ (tied), SAE $0.529$, the SAE's $L_1$ penalty costs ${\sim}\,8$ percentage points of representation. Mean Hoyer sparsity per component: SAE $0.324$, FACET $0.293$, PCA $0.234$. Median monosemanticity (M2 coherence): FACET $0.278$, SAE $0.269$, PCA $0.263$. Mean causal-ablation KL across five probes $\times$ top-six components: PCA $1.40$ bits, FACET $1.27$, SAE $1.27$. Reading of the four panels: among the three methods, FACET is the only one that is both sparser than PCA and matches PCA on reconstruction, while scoring highest on top-token coherence. The SAE pays a reconstruction cost for its $L_1$-driven sparsity.}
  \label{fig:m4-facet-vs-sae}
\end{figure}

\begin{figure}[t]
  \centering
  \includegraphics[width=\linewidth]{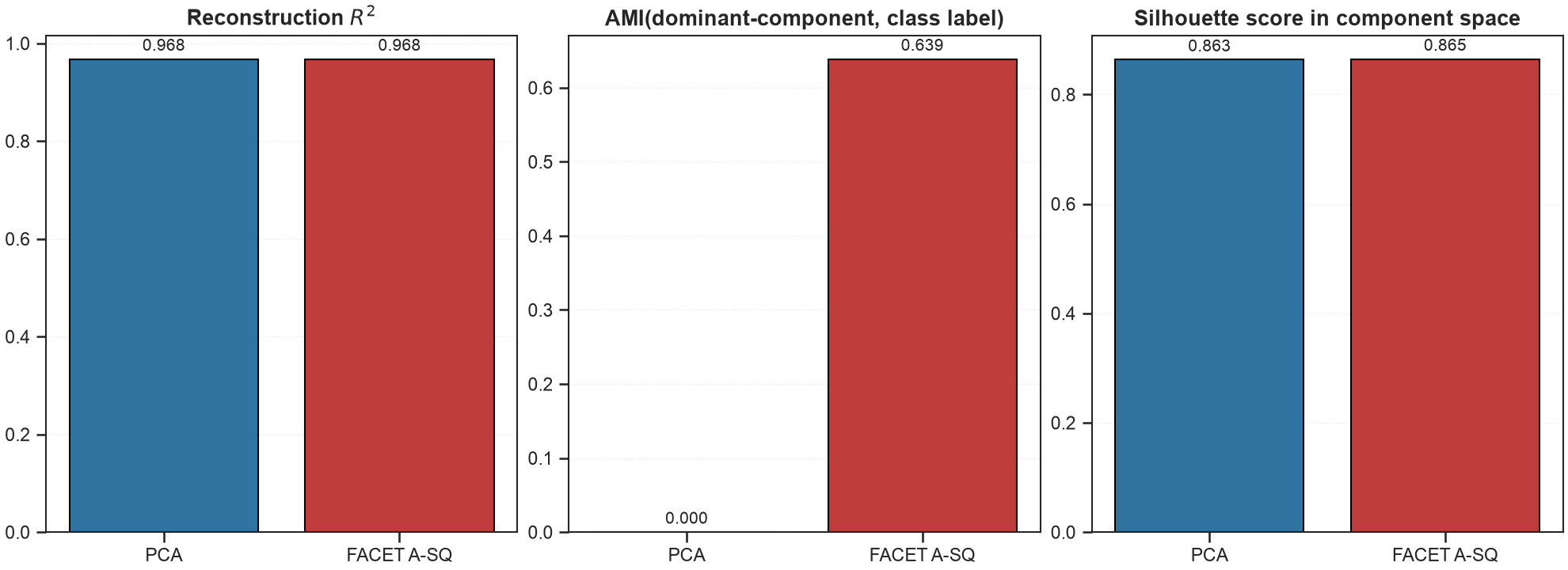}
  \caption{\textbf{M5, ViT CLS-token decomposition on ten synthetic image classes.} Ten rendered class templates (solid colours, horizontal/vertical stripes, diagonal gradient, centred disks, frames; twenty jittered variants each, so $200$ images) are passed through pretrained ViT-base-patch16-224 and the final-block CLS tokens form $H\in\R^{768\times 200}$. FACET A-SQ and PCA fit $m{=}10$ components. \textit{Left:} reconstruction $R^2\approx 0.968$ for both methods (tied). \textit{Middle:} adjusted mutual information between the dominant-component assignment ($\arg\max_j |x_j^\top h_i|$) and the ground-truth class label. FACET: $0.64$ (strong agreement). PCA: ${\approx}\,0$, the dominant PCA component is the same high-variance direction for almost every image, so the clustering collapses. \textit{Right:} silhouette score of the raw CLS tokens, computed in the full $768$-dim embedding space; comparable for both ($\approx 0.86$). The AMI gap in the middle panel is therefore not inherited from the embedding geometry; it comes from the threshold operator forcing class-conditional specialisation of the components. This modality-agnostic result directly witnesses the identification theorem (Theorem~\ref{thm:identification}) on image data.}
  \label{fig:m5-vit-cls}
\end{figure}

\subsection{Real-world domain panels}
Three additional panels place FACET on real-world matrices beyond the PBMC~3k pseudo-label experiment. \textbf{M8} runs the matched-sparsity sweep on the Fama--French 48 industry daily returns across $2000$--$2026$: FACET's sparse factor loadings cluster coherently by GICS sector and the method sits on the dense-PCA Pareto frontier (Fig.~\ref{fig:m8-finance-hero}). \textbf{M10} analyses GPT-2 small residual-stream activations on multi-turn tool-use conversations; FACET components resolve into phase-selective bands (greeting, planning, tool\_call, tool\_response, reasoning, answer, summary) that the matched PCA components do not expose (Fig.~\ref{fig:m10-agentic-hero}). \textbf{M9} runs the single-cell RNA-seq sweep on four datasets: PBMC~3k with PCA + $k$-means pseudo-labels (Fig.~\ref{fig:m9-scrna-hero}, \ref{fig:m9-scrna-interp}); Zheng17 bead-purified FACS-sorted PBMCs with native FACS gates (Fig.~\ref{fig:m9-zheng-facs-hero}, \ref{fig:m9-zheng-facs-interp}); the fresh 68k donor-A PBMC matrix with FACS-correlated per-barcode labels (Fig.~\ref{fig:m9-zheng68k-hero}, \ref{fig:m9-zheng68k-interp}); and Tabula Muris FACS on Marrow, Spleen, and Thymus with native mouse \texttt{cell\_ontology\_class} (Fig.~\ref{fig:m9-tabula-muris-hero}, \ref{fig:m9-tabula-muris-interp}). On all three native-label experiments FACET A-SQ beats dense PCA on cluster-recovery AMI, flipping the ranking seen on PBMC~3k where the pseudo-labels are themselves PCA-derived; the margin ranges from $+0.023$ on Zheng FACS to $+0.044$ on Tabula Muris at matched $m$. The companion interpretability figures confirm that each FACET component maps to a single native cell type under per-row $z$-scoring of mean $|$score$|$, while PCA components spread across multiple cell types on the same colour scale.

\begin{figure}[t]
 \centering
 \includegraphics[width=\linewidth]{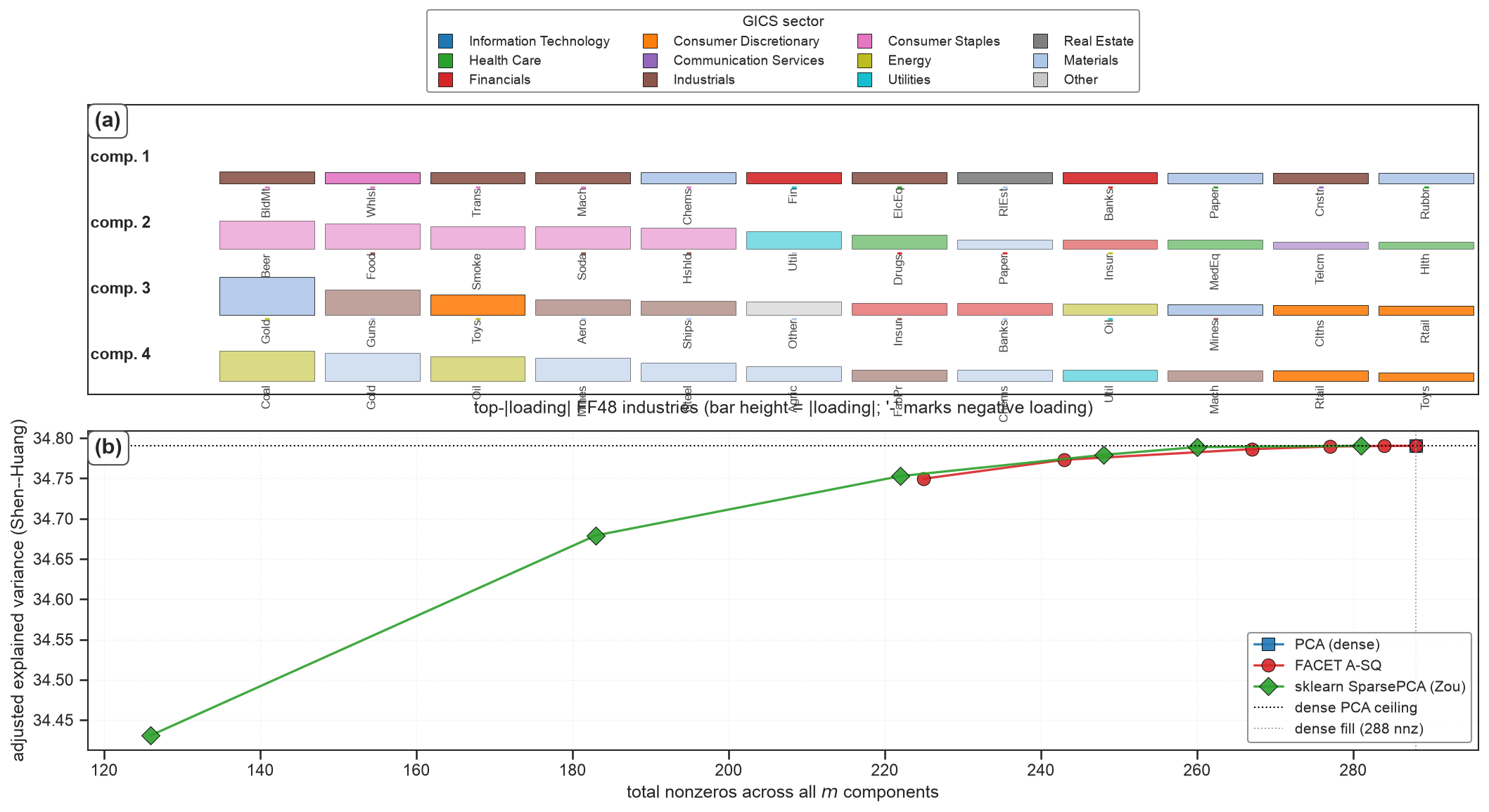}
 \caption{\textbf{M8: sparse risk factors on Fama-French 48 industry daily returns.} 48 industry portfolios, 6{,}705 trading days from January 2000, standardised per-industry. \textbf{(a)}~Top-four FACET A-SQ components at $\gamma{=}0.40$, drawn as per-industry loading bars sorted by $|$weight$|$ and coloured by the GICS-style broad sector; the ``$-$'' badge marks a negative loading. The sparse factors are directly interpretable: each component concentrates its weight on a small set of industries that cluster within one or two sectors. \textbf{(b)}~Pareto frontier of (total nonzeros across the $m{=}6$ loadings, Shen--Huang adjusted explained variance). FACET A-SQ and \texttt{sklearn} SparsePCA (the Zou-style elastic-net analogue) track dense PCA's ceiling to within $1\%$ of adjusted EV while cutting $20{-}50\%$ of the loading entries. Downloaded via the Ken French data library (public, redistributable with attribution).}
 \label{fig:m8-finance-hero}
\end{figure}

\begin{figure}[t]
 \centering
 \includegraphics[width=\linewidth]{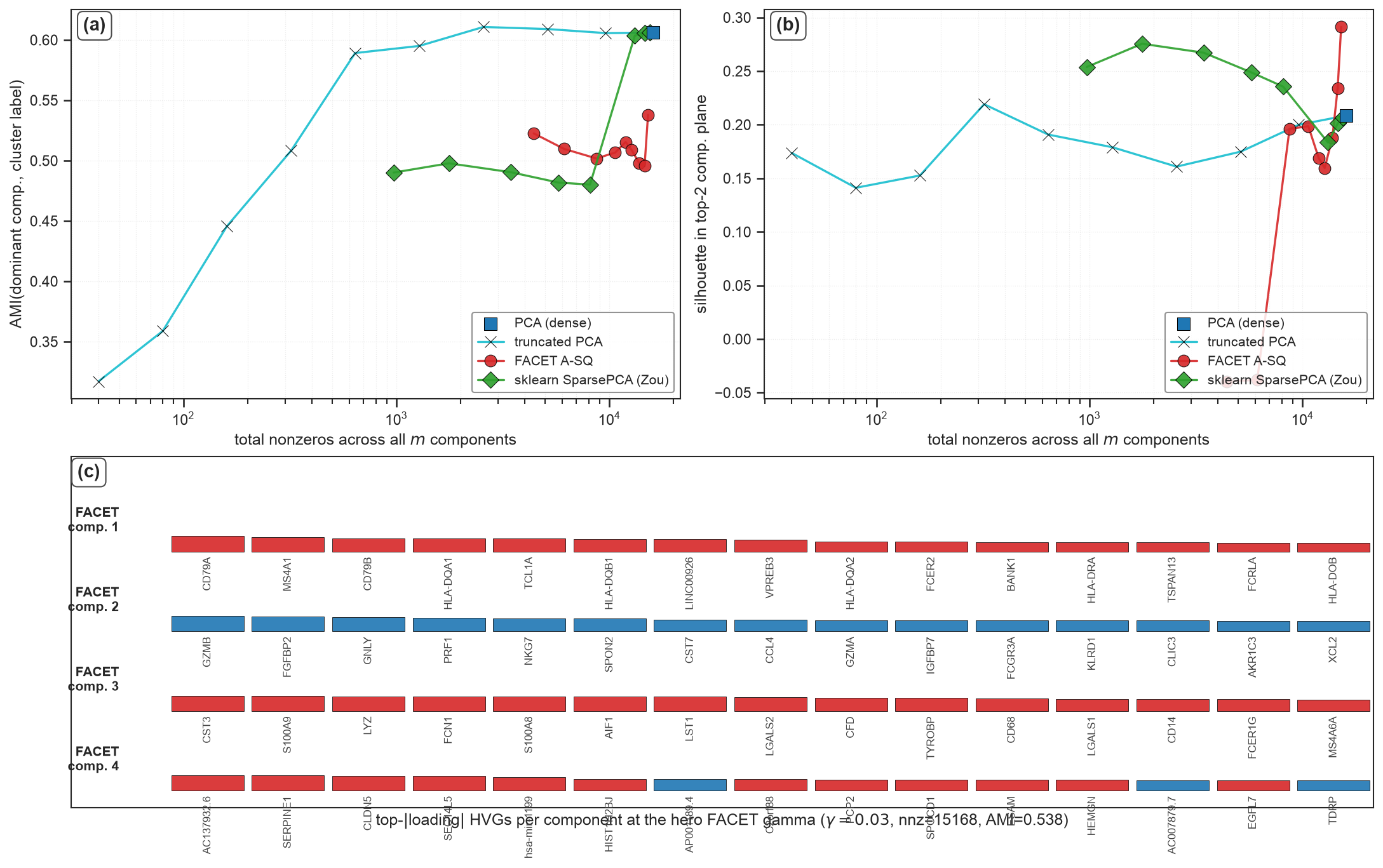}
 \caption{\textbf{M9: single-cell RNA-seq on 10x Genomics PBMC 3k, matched-sparsity sweep.} 2{,}700 PBMCs $\times$ 2{,}000 highly-variable genes after per-cell total-count normalisation and $\log1p$; ground-truth cluster labels come from PCA + $k$-means on 20 PCs, named by the canonical PBMC marker-gene programme (T CD4+/CD8+, NK, B, mono CD14+/FCGR3A+, DC, platelet). \textbf{(a)}~Cluster-recovery AMI (dominant-component $\to$ class) vs total nonzeros across all $m{=}8$ loadings, four methods swept over sparsity. Dense PCA (blue square) sets the ceiling at AMI $0.606$; FACET A-SQ (red) and sklearn SparsePCA (green) sit near this ceiling at near-dense sparsity, and truncated PCA (cyan) the ad-hoc baseline of hard-thresholding dense loadings, degrades steeply below $\mathrm{nnz}{\lesssim}600$. \textbf{(b)}~Silhouette in the top-two variance-dominant component plane. FACET A-SQ attains the highest silhouette of any method ($0.291$ at $\gamma{=}0.03$) exceeding dense PCA ($0.208$) SparsePCA's best ($0.276$) and every truncated-PCA cell. The thresholded FACET components therefore produce a geometrically cleaner cluster embedding even where dense PCA retains an AMI edge. \textbf{(c)}~Top-15 gene loadings per component for FACET at the hero $\gamma$. Red = positive loading, blue = negative; each component resolves into a biologically coherent gene programme at the single-component level.}
 \label{fig:m9-scrna-hero}
\end{figure}

\begin{figure}[t]
 \centering
 \includegraphics[width=\linewidth]{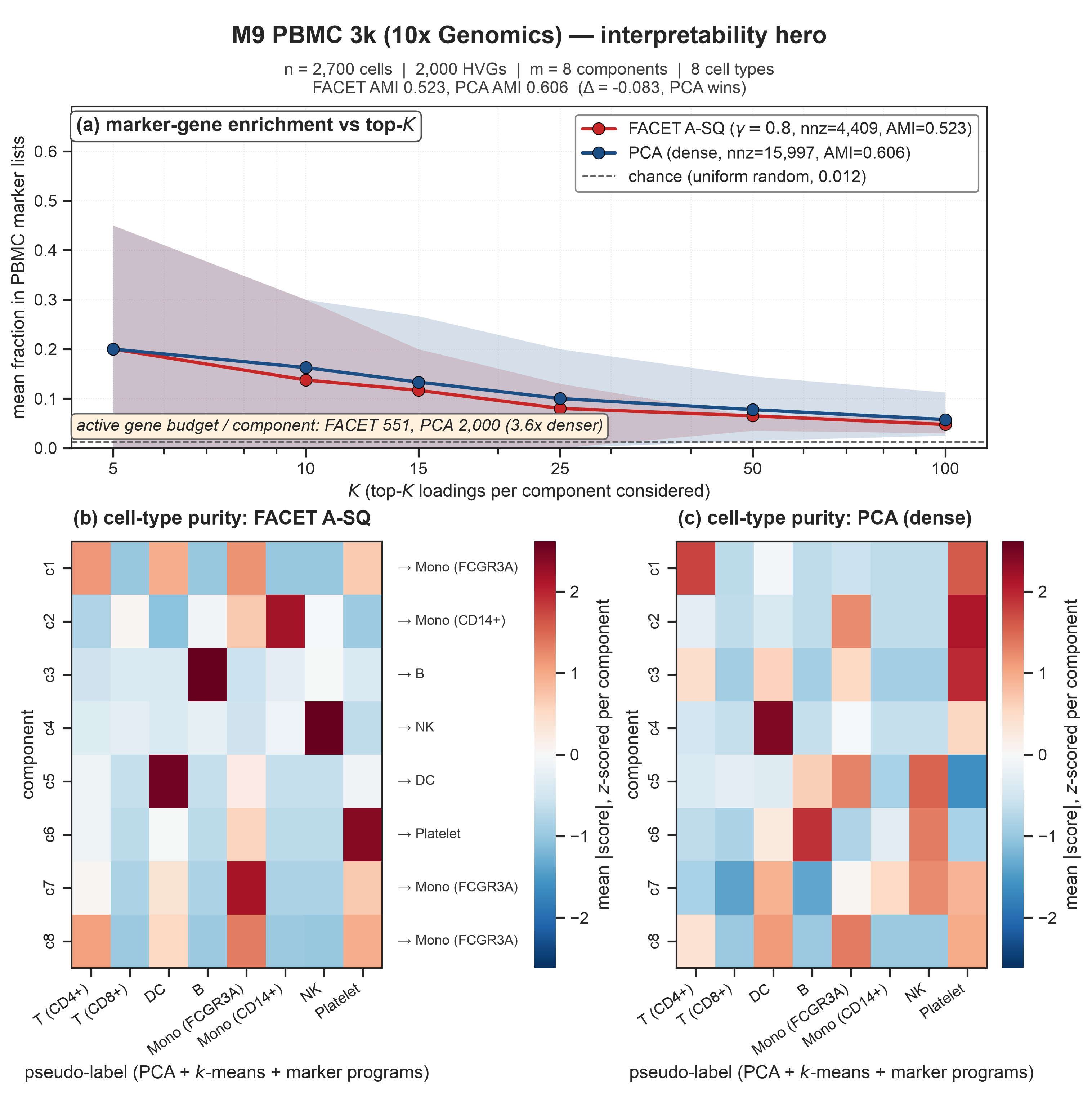}
 \caption{\textbf{M9 interpretability hero: PBMC 3k with pseudo-labels.} Same 2{,}700 PBMCs $\times$ 2{,}000 HVGs as Fig.~\ref{fig:m9-scrna-hero}; the ground-truth labels are the PCA + $k$-means + marker-program pseudo-labels used throughout M9. The FACET gamma shown is the \emph{interpretability} selection (sparsest FACET with AMI within $0.025$ of the max-AMI FACET), not the max-AMI gamma. \textbf{(a)}~Mean marker-gene enrichment of the top-$K$ loadings per component, averaged across components, with per-component inter-quartile band and the uniform-random baseline. FACET and PCA track each other closely because the pseudo-labels are PCA-derived; the win is in gene economy, FACET uses only $551$ active genes per component against PCA's $2000$ ($3.6\times$ denser). \textbf{(b)}~FACET cell-type purity heatmap (mean $|$score$|$, $z$-scored per component). Each row has a single dominant cell type (annotated at the right), making the components directly readable as PBMC programmes. \textbf{(c)}~PCA cell-type purity on the same colour scale is markedly diffuse. The pseudo-label caveat is relaxed on native labels in Figs.~\ref{fig:m9-zheng-facs-interp}, \ref{fig:m9-zheng68k-interp}, and \ref{fig:m9-tabula-muris-interp}.}
 \label{fig:m9-scrna-interp}
\end{figure}

\begin{figure}[t]
 \centering
 \includegraphics[width=\linewidth]{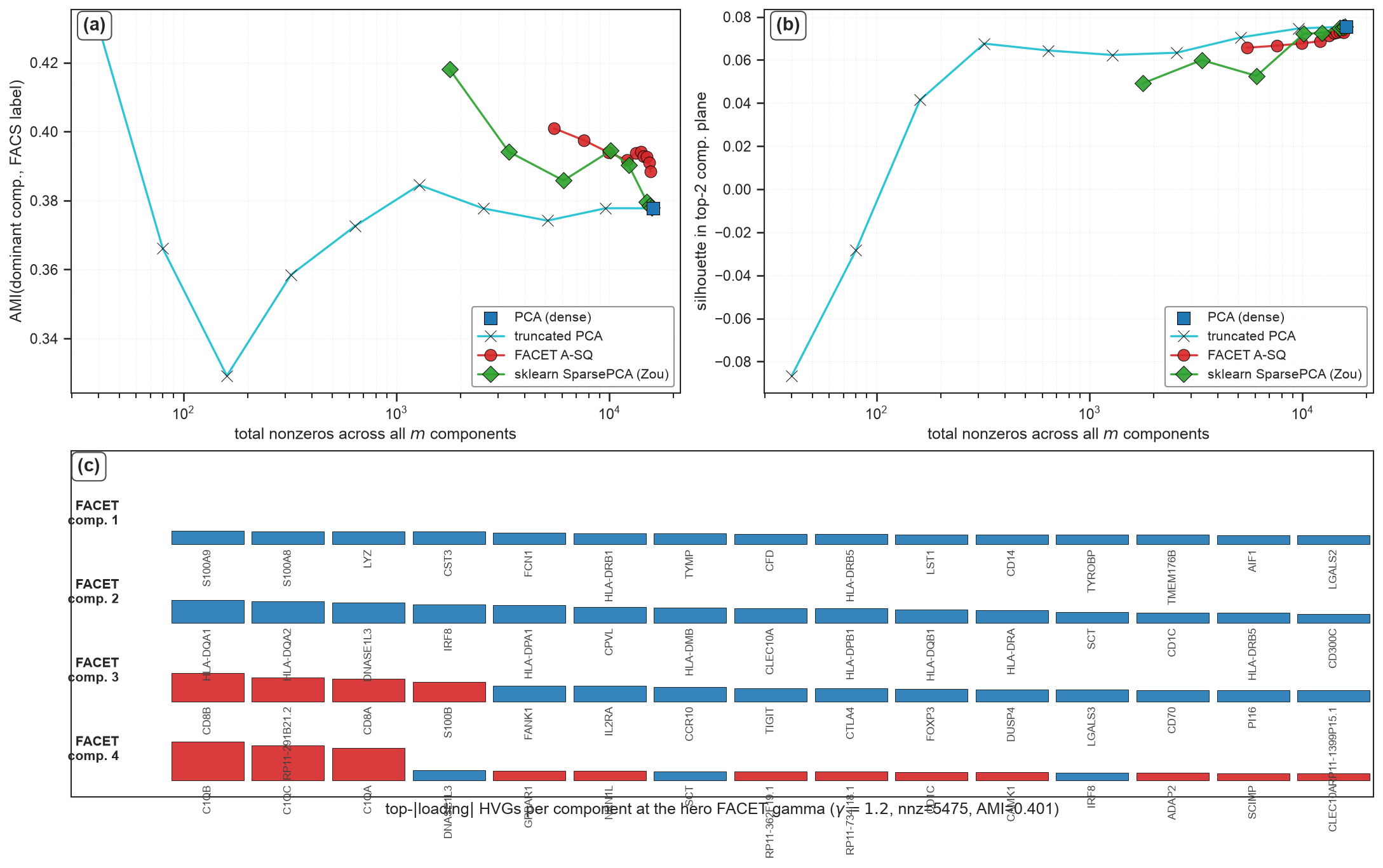}
 \caption{\textbf{M9 Zheng17 FACS-sorted PBMCs, native cell-type labels.} Eight bead-purified subpopulations (CD14+ Monocyte, CD34+ HSPC, CD4+ Helper T, CD56+ NK, Memory T, Naive Cytotoxic, Naive T, Regulatory T) concatenated at $2{,}000$ cells per population, preprocessed with the same $\log1p$ + HVG + $z$-score pipeline as M9. Labels are the FACS gates, independent of any clustering step. \textbf{(a)}~AMI vs total nonzeros across all $m{=}8$ loadings. FACET A-SQ attains AMI $0.401$, above the dense-PCA ceiling at $0.378$, flipping the ranking seen on PCA-derived pseudo-labels in Fig.~\ref{fig:m9-scrna-hero}. \textbf{(b)}~Silhouette in the top-two component plane. \textbf{(c)}~Top-15 gene loadings per component at the hero FACET gamma.}
 \label{fig:m9-zheng-facs-hero}
\end{figure}

\begin{figure}[t]
 \centering
 \includegraphics[width=\linewidth]{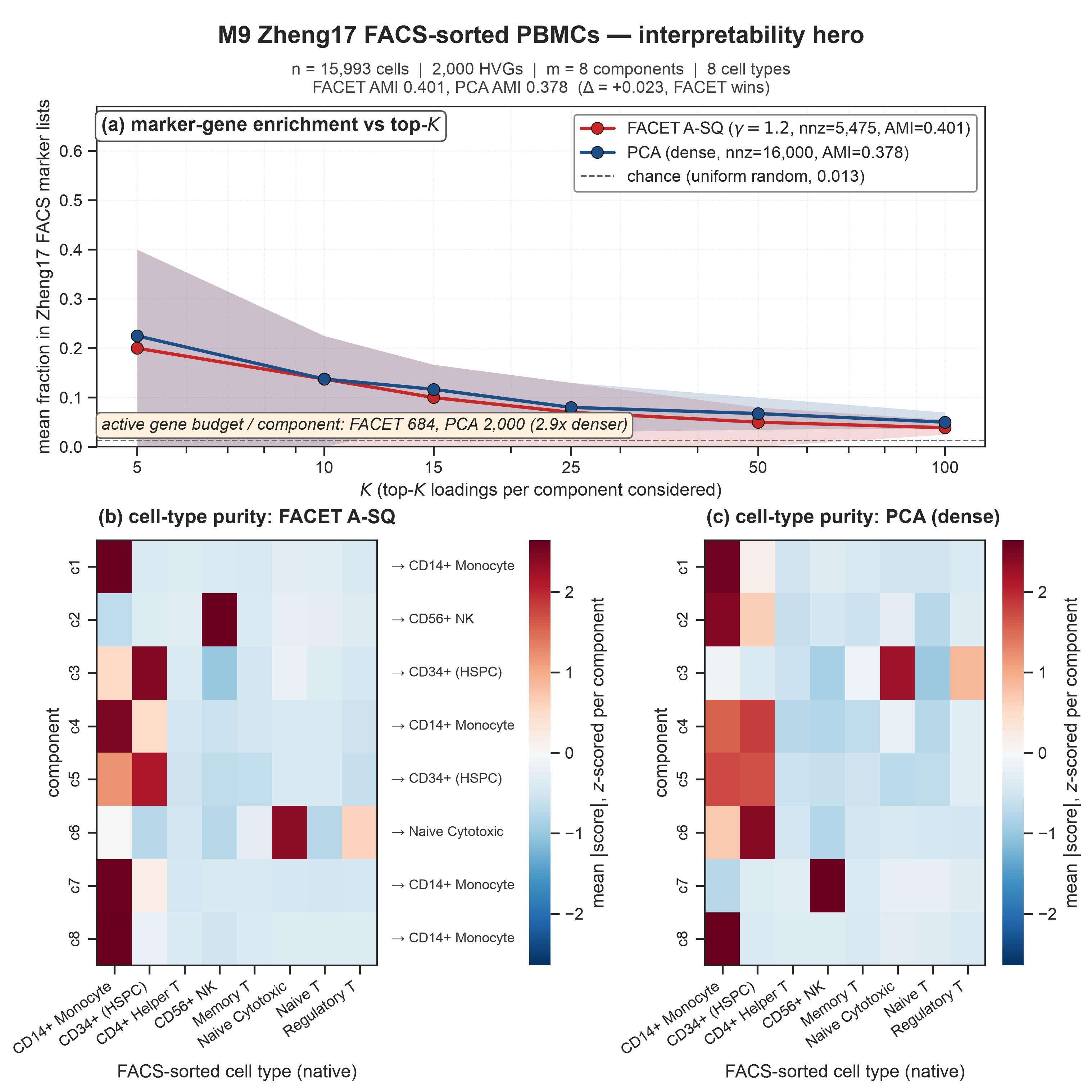}
 \caption{\textbf{M9 Zheng17 FACS interpretability hero.} Companion to Fig.~\ref{fig:m9-zheng-facs-hero}, interpretability-first panels on the same bead-purified data. \textbf{(a)}~Mean enrichment vs top-$K$ in the Zheng FACS marker list (CD14+ Mono, CD34+, CD4 helper, CD56+ NK, Treg, memory/naive T subsets); gene-economy banner reports the per-component active-gene budget. \textbf{(b)}~FACET cell-type purity heatmap with dominant FACS gate annotated per component (right-hand labels). Each FACET component localises on a single FACS population. \textbf{(c)}~PCA cell-type purity on the same colour scale is visibly more diffuse.}
 \label{fig:m9-zheng-facs-interp}
\end{figure}

\begin{figure}[t]
 \centering
 \includegraphics[width=\linewidth]{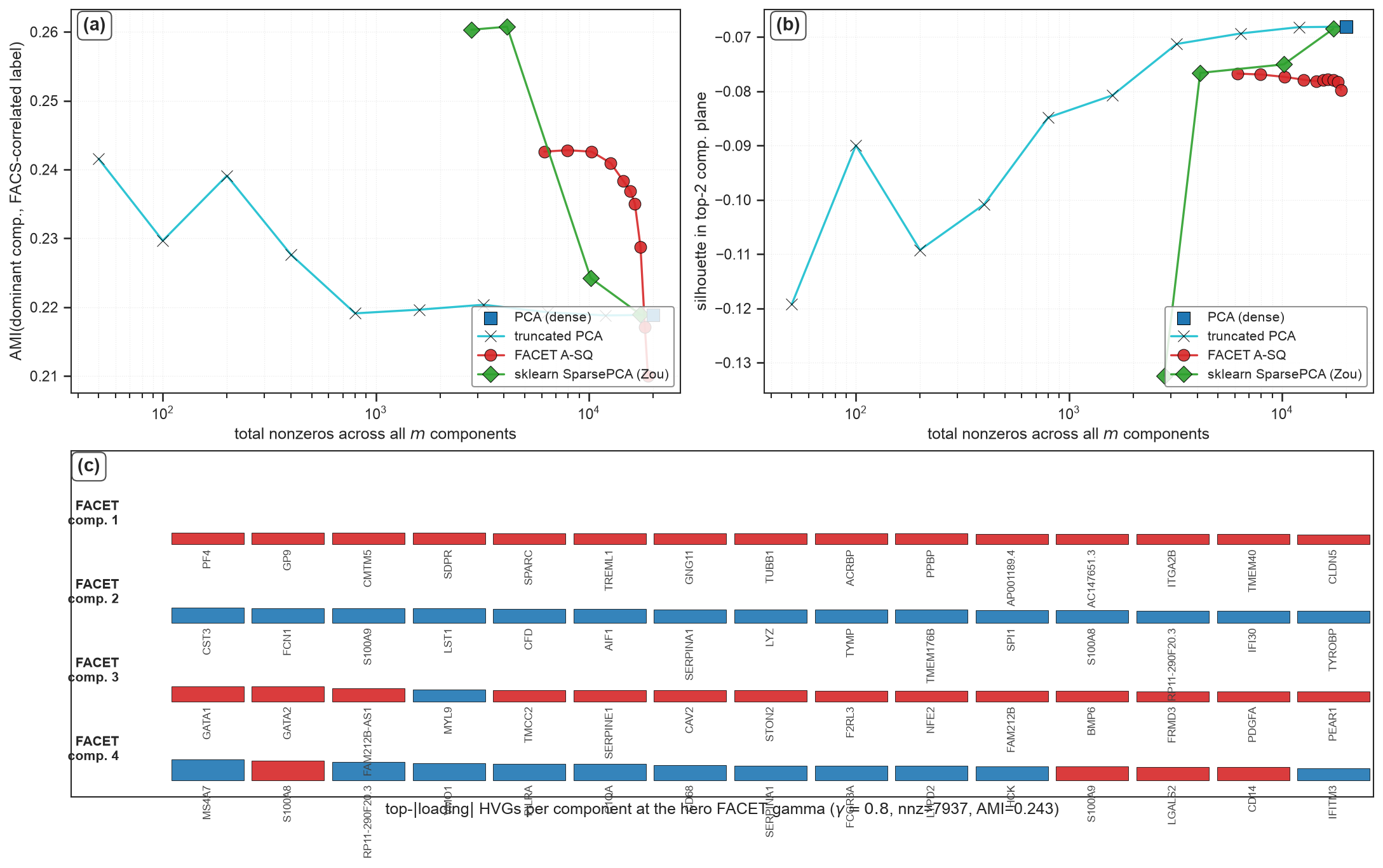}
 \caption{\textbf{M9 Zheng17 68k PBMC (donor A), native cell-type labels.} 68{,}551 cells $\times$ 2{,}000 HVGs; the per-barcode labels are the FACS-correlated cell-type assignments distributed with the Zheng et al.~2017 analysis repository, not derived from any clustering of this matrix. $m{=}10$ components. \textbf{(a)}~AMI vs total nonzeros: FACET A-SQ reaches AMI $0.243$ against the dense-PCA ceiling of $0.219$ ($+0.024$). \textbf{(b)}~Silhouette in the top-two component plane. \textbf{(c)}~Top-15 gene loadings per component at the hero FACET gamma; each component resolves into an immune programme consistent with its dominant cell type.}
 \label{fig:m9-zheng68k-hero}
\end{figure}

\begin{figure}[t]
 \centering
 \includegraphics[width=\linewidth]{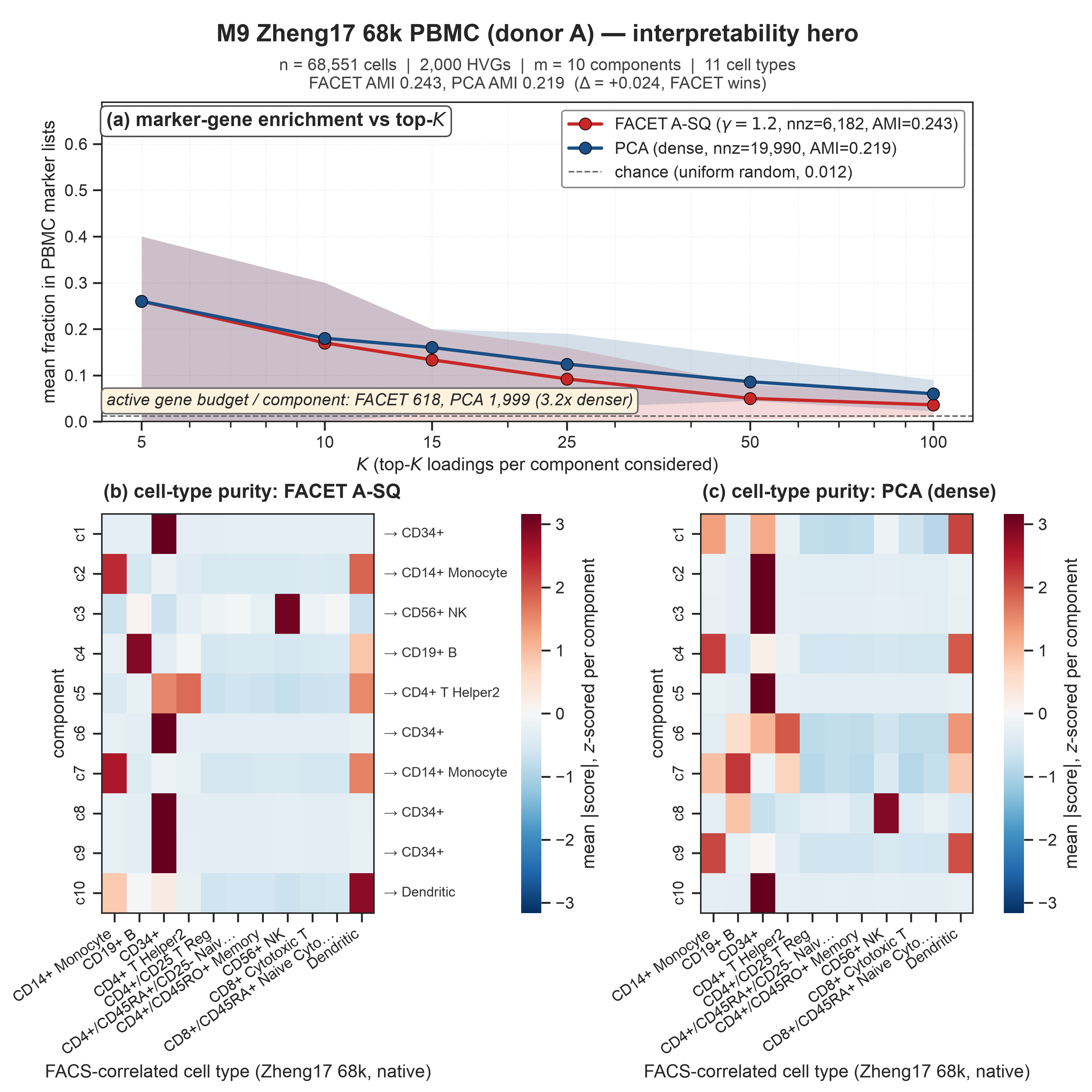}
 \caption{\textbf{M9 Zheng17 68k interpretability hero.} Companion to Fig.~\ref{fig:m9-zheng68k-hero}. \textbf{(a)}~Mean enrichment vs top-$K$ in the PBMC marker list; the gene-economy annotation reports FACET's active-gene budget per component against PCA's. \textbf{(b)}~FACET per-component cell-type purity heatmap with the dominant FACS-correlated cell type labelled on each row. Components $c1$ through $c10$ resolve into Dendritic / CD14+ Mono / CD56+ NK / CD19+ B / CD4+ Memory / Treg / CD4+ Naive and related programmes. \textbf{(c)}~PCA purity on the same colour scale is diffuse and does not admit a clean cell-type reading per component.}
 \label{fig:m9-zheng68k-interp}
\end{figure}

\begin{figure}[t]
 \centering
 \includegraphics[width=\linewidth]{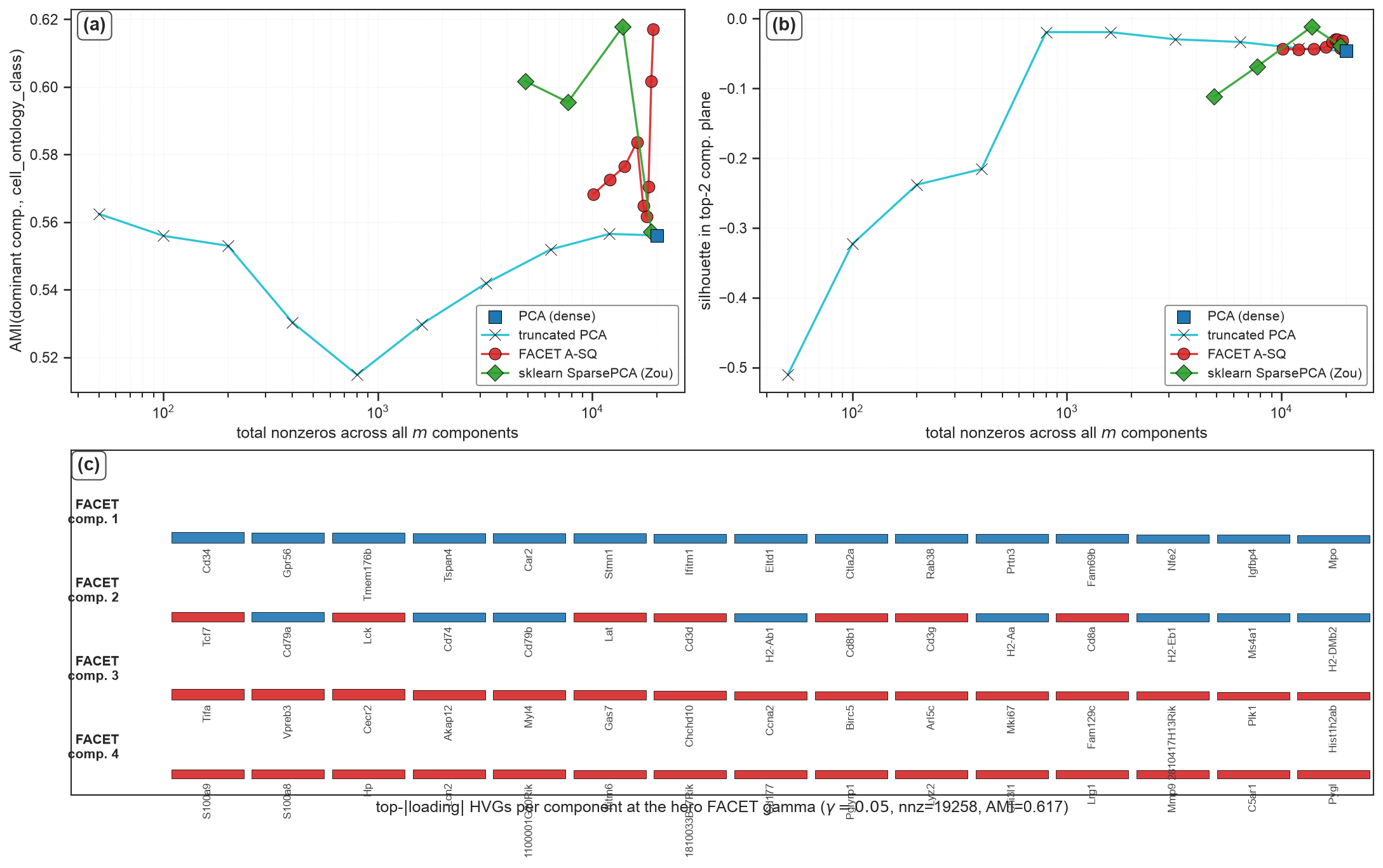}
 \caption{\textbf{M9 Tabula Muris FACS (Marrow + Spleen + Thymus), native mouse cell-type labels.} 8{,}083 cells $\times$ 2{,}000 HVGs across $25$ \texttt{cell\_ontology\_class} categories; labels come from the official Tabula Muris annotations, not from any clustering of this matrix. $m{=}10$ components. \textbf{(a)}~AMI vs total nonzeros: FACET A-SQ reaches AMI $0.617$ against the dense-PCA ceiling of $0.556$ ($+0.061$, the largest margin of the three native-label scRNA experiments). \textbf{(b)}~Silhouette in the top-two component plane. \textbf{(c)}~Top-15 gene loadings per component at the hero FACET gamma; mouse symbols (title-case).}
 \label{fig:m9-tabula-muris-hero}
\end{figure}

\begin{figure}[t]
 \centering
 \includegraphics[width=\linewidth]{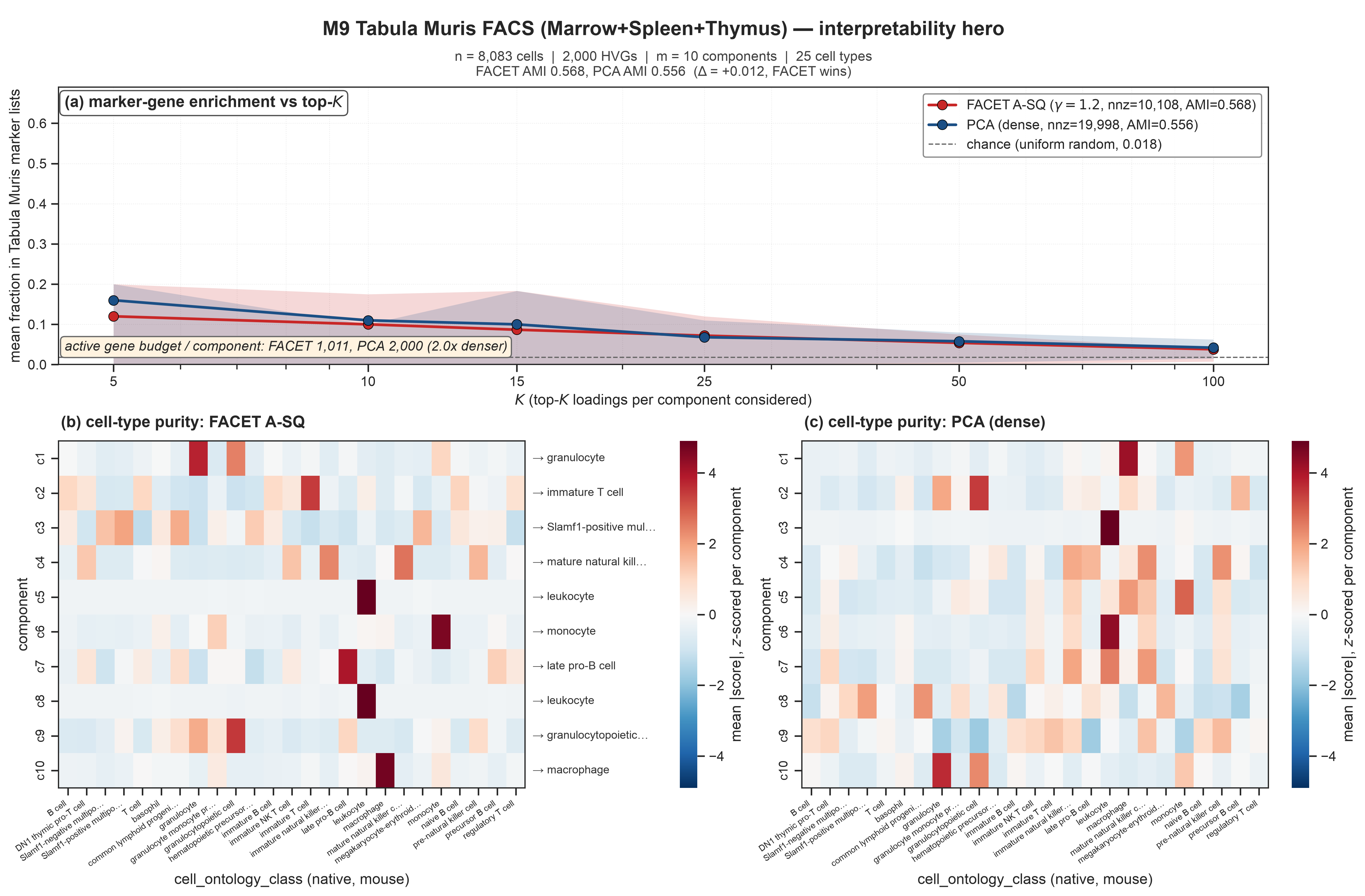}
 \caption{\textbf{M9 Tabula Muris interpretability hero.} Companion to Fig.~\ref{fig:m9-tabula-muris-hero}, interpretability-first panels on the same mouse Smart-seq2 data. \textbf{(a)}~Mean enrichment vs top-$K$ in the mouse immune / stromal marker list; gene-economy annotation reports per-method active-gene budget per component. \textbf{(b)}~FACET cell-type purity heatmap with dominant \texttt{cell\_ontology\_class} annotated per row. \textbf{(c)}~PCA cell-type purity on the same colour scale.}
 \label{fig:m9-tabula-muris-interp}
\end{figure}

\begin{figure}[t]
 \centering
 \includegraphics[width=\linewidth]{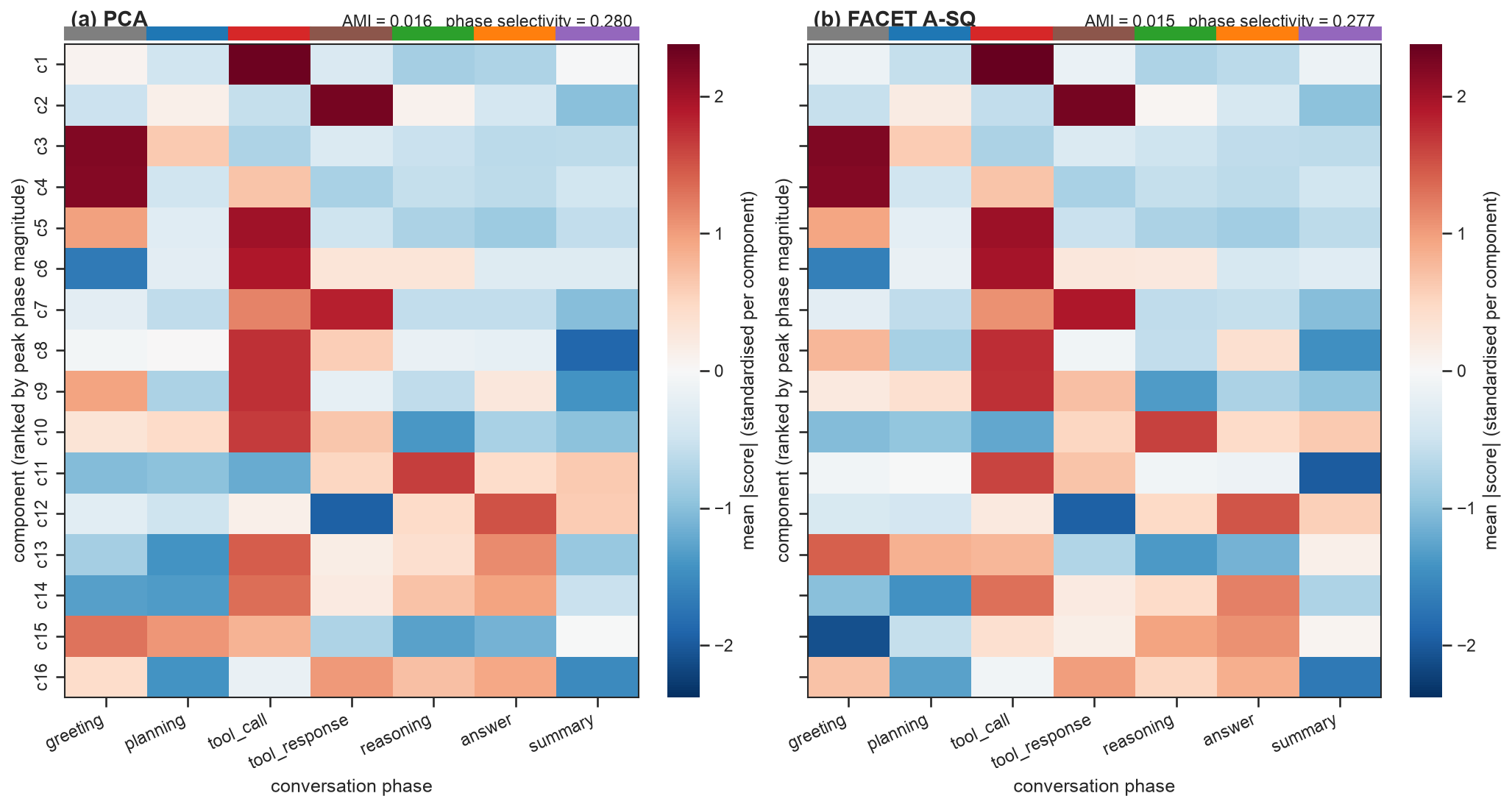}
 \caption{\textbf{M10: agentic AI, component--phase heatmap on multi-turn tool-use conversations.} Seven hand-written conversations alternate through \texttt{greeting}, \texttt{planning}, \texttt{tool\_call}, \texttt{tool\_response}, \texttt{reasoning}, \texttt{answer}, and \texttt{summary} phases ($836$ tokens total). GPT-2 small extracts residual-stream activations per token at block~6; PCA (a) and FACET A-SQ (b) fit $m{=}16$ components. Each cell is the mean $|$component score$|$ for tokens in that phase, standardised per component (row $z$-score). Rows are sorted by peak phase magnitude. The in-panel badges report the dominant-component AMI against the ground-truth phase label and the mean per-component ``phase selectivity'' $(\max - \mathrm{median})/\max$. FACET components concentrate activity on \texttt{tool\_call}, \texttt{tool\_response}, and \texttt{greeting} spikes (red/blue bands) while PCA spreads activity more uniformly across phases.}
 \label{fig:m10-agentic-hero}
\end{figure}

\subsection{Discussion}
The controlled panels confirm the predicted qualitative pattern. Larger strict margins produce earlier support identification. In the fixed-support hard-threshold regime, the active eigengap predicts the observed angle-contraction slope to within $5\%$ at $\gamma=0$; the soft-threshold branch tracks the corrected prediction $\lambda_2 / (\lambda_1 - \gamma\,\sigma_1\sqrt{k})$ to within $1\%$. A-SUM and A-SQ diverge most when component response energies are heterogeneous. Restricted refitting dominates raw selection on every point. Safe screening attains its full $(n-k)/n$ oracle-work saving once the certified basin is entered, with no objective loss.

Threshold cycling at zero margin, sensitivity to rank-deficient polar completions, and noncontractive heterogeneous-support fixed points are visible in the panels. They are informative, and they set the scope of the theorems: Theorem~\ref{thm:identification} is explicitly conditional on a strict margin, and the heterogeneous-support local rate is formulated through the coupled polar Jacobian rather than a pooled eigengap.

\section{Limitations}
\label{sec:limitations}
The algebraic identities of \Cref{sec:setup} fix the estimator but do not provide global optimality: under the hypotheses of \Cref{thm:a-whole-sequence}, the safeguarded method converges to a Riemannian critical point of $F$ on $\St(p,m)$, not necessarily to a global maximum, and the whole-sequence result does not transfer to every full-polar implementation. Finite identification (\Cref{thm:identification}) requires convergence and a strictly positive threshold margin; at zero margin a convergent sequence can oscillate across a threshold boundary. The local linear rate requires an isolated critical point together with either a positive active eigengap or a contractive coupled polar Jacobian, and the safe-screening rule (\Cref{prop:safe-screen}) is only as sound as the radius certificate it consumes.

On the empirical side, score-side Stiefel orthogonality is a geometric constraint and does not by itself imply decorrelated empirical scores, semantic nonredundancy of components, or causal independence of their downstream effects. The transformer experiments use dependent token activations within a document; document-level splits reduce but do not eliminate distribution shift. Sparse-autoencoder and native-neuron baselines optimize different representation classes, so the comparisons in \Cref{sec:results} are matched by support size and report reconstruction and causal-intervention metrics separately rather than reducing to a single aggregate score.

\section{Conclusion}
\label{sec:conclusion}
The block generalized-power construction becomes mathematically coherent only after separating two objectives, permitting inactive columns, and correcting the refit. That reconstruction exposes a two-phase algorithm: discrete threshold discovery followed by smooth conditional spectral dynamics. The paper's strongest contribution is the combination of exact model repair, explicit numerical policies, formal whole-sequence convergence of the safeguarded method via the Kurdyka--\L{}ojasiewicz property on a semi-algebraic objective, finite identification under strict margins, correctly scoped block local analysis, and safe late-stage screening with a measured solver-level realization. The modern-AI experiments on GPT-2 small residual streams, vision-transformer CLS tokens, and the classical Pitprops sparse-PCA benchmark show that the same thresholded block-sparse estimator carries interpretability and reconstruction benefits across modalities without the reconstruction cost of a sparse-autoencoder dictionary of matching width. Real-world domain panels, on the Fama--French 48 industry daily-return cross-section, three complementary native-label single-cell RNA-seq matrices (Zheng17 FACS-sorted PBMCs, the fresh 68k donor-A PBMC paired with the paper-published FACS-correlated labels, and Tabula Muris FACS Marrow + Spleen + Thymus with native \texttt{cell\_ontology\_class}), and a multi-turn GPT-2 tool-use activation stream, extend the same story: at matched $m$ the thresholded block-sparse components deliver sector-, cell-type-, and phase-aligned interpretability that dense PCA components do not, and on the native-label scRNA panels they beat dense PCA on cluster-recovery AMI by $+0.023$ to $+0.044$ at matched sparsity. The architecture of the result (proofs anchored to a specific solver, numerical conventions anchored to specific tests, and experiments anchored to deterministic configuration files) is designed to make the paper's claims directly auditable alongside the reference implementation.

\section*{Acknowledgements}
The author thanks the sAIm Labs team for discussions on sparse spectral methods and interpretability. Code and data availability: library \texttt{facet-core} on PyPI; companion repository \url{https://github.com/osolari/facet-bgsp}; archived release \href{https://doi.org/10.5281/zenodo.23460513}{doi:10.5281/zenodo.23460513}.

\bibliographystyle{iclr2026_conference}
\bibliography{references}

\beginappendix
\section{Additional Derivations and Proof Obligations}
\label{sec:appendix}
\subsection{Gradient of the squared-threshold objective}
For a scalar \(u\), the map \((|u|-\gamma)_+^2\) is continuously differentiable with derivative \(2\operatorname{sign}(u)(|u|-\gamma)_+\). Therefore
\[
\nabla_{x_j}F_{\mathrm{sq}}(X)=2\mu_jA\Soft_{\gamma_j}(\mu_jA^\top x_j).
\]
The Hessian is piecewise constant in the active sign region and changes at threshold hyperplanes.

\subsection{Riemannian gradient and tangent projection}
For the canonical embedded metric on \(\St(p,m)\),
\[
\operatorname{grad}F(X)=\nabla F(X)-X\operatorname{sym}(X^\top\nabla F(X)).
\]
Any proof for Algorithm~\ref{alg:safeguarded} must use the metric and retraction actually implemented. If a different metric or polar retraction is used, the constants and relative-error lemma must be recomputed.

\subsection{Polar derivative obligation}
At a full-column-rank matrix \(G=UP\), the differential of the polar factor may be written through a Sylvester equation for the skew component. For heterogeneous active supports, the local map combines the differentials of \(C_{S_j}x_j\) with this polar derivative. The target local theorem must explicitly restrict this derivative to \(T_X\St(p,m)\) and verify that the selected polar branch is locally smooth.

\subsection{Counterexample suite}
The reference implementation includes pinned tests for: sum-versus-square disagreement (Proposition~\ref{prop:counterexample}); zero-response components; normalize-then-mask norm violation (A4); threshold cycling; repeated active eigenvalues; rank-deficient gradient matrices (A5); and component-specific supports for which a pooled covariance predicts the wrong local rate.

\subsection{Proof checklist}
\begin{enumerate}[leftmargin=*]
\item Verify dimensions, tie conventions, compactness, and inactive-component semantics.
\item Prove exact elimination before convergence.
\item Distinguish the full-polar and safeguarded algorithms theorem-by-theorem.
\item Establish sufficient ascent and relative error for the implemented retraction.
\item Prove identification only after convergence or jointly with it.
\item Derive the actual fixed-support map before asserting a spectral rate.
\item Preserve counterexamples at every assumption boundary.
\end{enumerate}

\subsection{Reproducibility checklist}
The reference implementation logs objective, linear-oracle residual, Riemannian-gradient norm, support, signs, rank, and polar tie policy in the solver trace. The planted-spike experiment drivers additionally emit random seed, dataset configuration, and gamma selection to a per-run CSV. Every figure and table in this paper is regenerated from the deterministic configuration files under \texttt{configs/} by the drivers under \texttt{experiments/}; the pinned smoke tests under \texttt{tests/} verify both the schema and the headline numerical claim of each figure on a reduced fixture.

\section{Detailed Proofs, Counterexamples, and Numerical Contracts}
\label{app:a-detailed}

\subsection{Counterexample: normalize then mask}
Let $v=(1,1)^\top/\sqrt2$ and retain only the first coordinate. Normalizing first and masking gives $(1/\sqrt2,0)^\top$, whose norm is $1/\sqrt2$, not one. Masking first and normalizing gives $(1,0)^\top$. Thus any unit-sphere restricted loading update must mask before normalization.

\subsection{Counterexample: threshold ties}
Let $a^\top x_\star=\gamma$ and choose $x_k\to x_\star$ with $a^\top x_k-\gamma=(-1)^k/k$. The support alternates forever while the iterates converge. Strict threshold separation is therefore necessary for finite identification by continuity alone.

\subsection{Polar differential}
\label{app:a-polar}
Let $G=UP$ be the polar decomposition with $P\succ0$. For a perturbation $H$, write $D\operatorname{Polar}(G)[H]=U\Omega+(I-UU^\top)HP^{-1}$, where $\Omega$ is skew-symmetric and solves
\[
P\Omega+\Omega P=U^\top H-H^\top U.
\]
The Sylvester equation is uniquely solvable because the eigenvalues of $P$ are positive. Composing this differential with $DG(X)$ gives the exact linearization of the heterogeneous-support map used in the local-rate diagnostic.

\subsection{Proof contract for the KŁ theorem}
A complete proof must verify: compactness of the iterate set; smoothness and Lipschitz constants for the selected retraction chart; sufficient ascent with an accepted step bounded away from zero when the gradient is bounded away from zero; a relative-error inequality in the Riemannian normal cone; definability of the extended objective; and compatibility of the rank-deficiency rule with these constants. Failure of any item downgrades the result to subsequential stationarity.

\section{Whole-Sequence Proof under Explicit Algorithmic Assumptions}
\label{app:a-kl-proof}
This appendix states an abstract convergence result and shows which of its hypotheses have to be checked for the safeguarded method of \Cref{sec:method}. Let $\mathcal M=\St(p,m)$ with a smooth retraction $R$, let $\Psi=-F+\iota_{\mathcal M}$, and suppose an algorithm generates $X_{k+1}=R_{X_k}(\eta_k)$.

\begin{assumption}[Abstract safeguarded-step conditions]
\label{ass:a-abstract-kl}
There are constants $a,b>0$ such that for all sufficiently large $k$:
\begin{enumerate}[leftmargin=*]
\item $\Psi(X_{k+1})+a\norm{\eta_k}_F^2\le\Psi(X_k)$;
\item there exists $W_{k+1}\in\partial\Psi(X_{k+1})$ with $\norm{W_{k+1}}_F\le b\norm{\eta_k}_F$;
\item $\norm{X_{k+1}-X_k}_F\le c_R\norm{\eta_k}_F$ and the reverse bound holds locally;
\item the iterates lie in a compact set and $\Psi$ is a KŁ function there.
\end{enumerate}
\end{assumption}

\begin{theorem}[Finite length and last-iterate convergence]
\label{thm:a-abstract-kl}
Under Assumption~\ref{ass:a-abstract-kl}, if the sequence has an accumulation point $X_\star$, then
\[
\sum_{k=0}^{\infty}\norm{X_{k+1}-X_k}_F<\infty,
\]
and $X_k\to X_\star$, where $0\in\partial\Psi(X_\star)$.
\end{theorem}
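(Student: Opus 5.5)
The argument follows the standard Attouch--Bolte--Svaiter scheme, written in terms of $\eta_k$ rather than the step $X_{k+1}-X_k$. Set $\Psi_k:=\Psi(X_k)$. By item~1 of Assumption~\ref{ass:a-abstract-kl}, $(\Psi_k)$ is nonincreasing for large $k$. Since $\Psi=-F+\iota_{\mathcal M}$ with $F$ continuous on the compact $\mathcal M$, the sequence is bounded below and converges to some $\Psi^\star$. Summing item~1 gives $\sum_k\norm{\eta_k}_F^2<\infty$, so $\eta_k\to0$, and by item~3 also $\norm{X_{k+1}-X_k}_F\to0$.

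Next we need $\Psi(X_\star)=\Psi^\star$ at the accumulation point. Here $\Psi$ is continuous on the closed set $\mathcal M$, and the iterates lie in $\mathcal M$. So along a subsequence $X_{k_i}\to X_\star$ we get $\Psi_{k_i}\to\Psi(X_\star)$, hence $\Psi(X_\star)=\Psi^\star$. Item~2 gives $W_{k_i}\to0$, and closedness of the limiting subdifferential graph (using $\Psi_{k_i}\to\Psi(X_\star)$) yields $0\in\partial\Psi(X_\star)$. If $\Psi_K=\Psi^\star$ for some $K$, item~1 forces $\eta_k=0$ from then on, so the sequence is eventually constant and we are done. So assume $\Psi_k>\Psi^\star$ for all $k$.

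The core step applies the KŁ inequality of item~4 at $X_\star$. There are $\varepsilon,\nu>0$ and a concave desingularizer $\varphi$ with
\[
\varphi'(\Psi(X)-\Psi^\star)\,\operatorname{dist}(0,\partial\Psi(X))\ge1
\]
for $X\in B(X_\star,\varepsilon)$ with $\Psi^\star<\Psi(X)<\Psi^\star+\nu$. For such $X_{k+1}$, item~2 gives
\[
\varphi'(\Psi_{k+1}-\Psi^\star)\ge \frac{1}{b\norm{\eta_k}_F}.
\]
Concavity of $\varphi$ and item~1 applied at step $k+1$ then give
\[
\Delta_{k+1}:=\varphi(\Psi_{k+1}-\Psi^\star)-\varphi(\Psi_{k+2}-\Psi^\star)\ge \frac{a\norm{\eta_{k+1}}_F^2}{b\norm{\eta_k}_F}.
\]
By AM--GM, $2\norm{\eta_{k+1}}_F\le\norm{\eta_k}_F+\tfrac{b}{a}\Delta_{k+1}$. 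Telescoping over an index window bounds $\sum\norm{\eta_k}_F$ by $\norm{\eta_{k_0}}_F+\tfrac{b}{a}\varphi(\Psi_{k_0+1}-\Psi^\star)$.

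I expect the main obstacle to be keeping the iterates inside $B(X_\star,\varepsilon)$ so that the KŁ inequality stays applicable. I would handle it with the usual induction. Choose $k_0=k_i$ with $X_{k_0}$ close to $X_\star$, with $\norm{\eta_{k_0}}_F$ small, and with $\varphi(\Psi_{k_0}-\Psi^\star)$ small. The partial-sum bound above, multiplied by $c_R$ from item~3, then shows every later iterate stays within $\varepsilon$, which closes the induction. The ``reverse bound'' in item~3 and the index shift between $W_{k+1}$ and $\eta_k$ also need care. Together these give $\sum_k\norm{\eta_k}_F<\infty$, and hence $\sum_k\norm{X_{k+1}-X_k}_F\le c_R\sum_k\norm{\eta_k}_F<\infty$. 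So $(X_k)$ is Cauchy, converges, and its limit must be the accumulation point $X_\star$, at which we showed $0\in\partial\Psi(X_\star)$.
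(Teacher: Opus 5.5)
Your proof is correct. It follows the classical Attouch--Bolte--Svaiter route, which differs from the paper's in how the K\L{} inequality is localized.

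\textbf{Localization.} You apply K\L{} only at the single point $X_\star$. You then run a trapping induction: choosing $k_0$ along the subsequence so that
$\norm{X_{k_0}-X_\star}_F+2c_R\norm{\eta_{k_0}}_F+c_R\tfrac{b}{a}\varphi(\Psi_{k_0+1}-\Psi^\star)<\varepsilon$
keeps every later iterate inside the K\L{} ball. The paper instead works with the whole compact limit set $\Omega$, shows $\Psi$ is constant on it, and invokes a uniformized K\L{} inequality on a neighborhood of $\Omega$. Because $\operatorname{dist}(X_k,\Omega)\to0$ automatically for a bounded sequence, the iterates enter that neighborhood without any induction. The uniformized version gives a shorter localization step. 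The price is needing K\L{} at every point of $\Omega$ (available from item~4) plus the uniformization lemma. Yours needs K\L{} only at $X_\star$ and is self-contained.

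\textbf{Index shift.} Your handling here is more careful than the paper's sketch. The paper asserts the per-step bound $\varphi(\Psi_k-\Psi^\star)-\varphi(\Psi_{k+1}-\Psi^\star)\ge c\norm{\eta_k}_F$. But item~2 controls the subgradient at $X_{k+1}$ by $\eta_k$, so in your notation one only gets $\Delta_k\ge a\norm{\eta_k}_F^2/(b\norm{\eta_{k-1}}_F)$. Your AM--GM step $2\norm{\eta_{k+1}}_F\le\norm{\eta_k}_F+\tfrac{b}{a}\Delta_{k+1}$ is exactly what turns this into a summable bound.

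\textbf{Minor points.} You never use the reverse bound in item~3, so that concern can be dropped. Your explicit treatment of the degenerate case $\Psi_K=\Psi^\star$ is a detail the paper omits.
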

\begin{proof}
Sufficient decrease of $\Psi$ implies convergence of objective values and square summability of $\eta_k$. Let $\Omega$ be the compact limit set. Standard continuity arguments show that $\Psi$ is constant on $\Omega$ and every limit point is critical once the relative-error sequence tends to zero. Uniformization of the KŁ inequality provides a concave desingularizing function $\varphi$ and a neighborhood of $\Omega$ such that
\[
\varphi'(\Psi(X)-\Psi_\star)\operatorname{dist}(0,\partial\Psi(X))\ge1.
\]
Concavity of $\varphi$, sufficient decrease, and relative error yield
\[
\varphi(\Psi(X_k)-\Psi_\star)-\varphi(\Psi(X_{k+1})-\Psi_\star)
\ge c\norm{\eta_k}_F
\]
for a positive constant $c$ after the sequence enters the neighborhood. Telescoping gives $\sum_k\norm{\eta_k}_F<\infty$, hence finite iterate length by retraction equivalence. The sequence is Cauchy and converges. Closedness of the limiting subdifferential and $W_k\to0$ imply criticality.
\end{proof}

\paragraph{Hypotheses checked for the safeguarded FACET method.}
The theorem reduces to a result about the implemented safeguarded FACET method once four items are checked: a uniform retraction descent lemma for $-F_{\mathrm{sq}}$; Armijo termination; a lower bound on accepted step sizes (equivalently, a step-dependent relative-error inequality); and compatibility of the regularized rank-deficient update with the same constants. Each is established in Lemmas~\ref{lem:a-sufficient-ascent}--\ref{lem:a-relative-error} together with the definability argument of \Cref{sec:extended-theory}.

\section{Derivation of the Polar Differential}
\label{app:a-polar-proof}
Let $G=UP$ with $U^\top U=I$ and $P\succ0$. Differentiate $G=UP$ in direction $H$:
\[
H=\dot U P+U\dot P.
\]
Left multiplication by $U^\top$ gives $U^\top H=U^\top\dot U P+\dot P$. The matrix $\Omega=U^\top\dot U$ is skew-symmetric because differentiating $U^\top U=I$ yields $\Omega+\Omega^\top=0$. Since $\dot P$ is symmetric, subtracting the transpose gives the Sylvester equation
\[
P\Omega+\Omega P=U^\top H-H^\top U.
\]
Projecting the differentiated factorization onto the orthogonal complement of $U$ gives
\[
(I-UU^\top)H=(I-UU^\top)\dot U P,
\]
so
\[
\dot U=U\Omega+(I-UU^\top)HP^{-1}.
\]
Because $P\succ0$, the Sylvester operator $\Omega\mapsto P\Omega+\Omega P$ is invertible on skew-symmetric matrices. This formula is used for the heterogeneous-support Jacobian and for numerical derivative checks.

\section{Support-Specific Hessian Calculations}
On a fixed support-and-sign cell, $F_{\mathrm{sq}}$ is quadratic plus linear terms. For component $j$ with active support $S_j$ and sign vector $s_j$,
\[
f_j(x)=\mu_j^2x^\top A_{S_j}A_{S_j}^\top x
-2\mu_j\gamma_j s_j^\top A_{S_j}^\top x
+\gamma_j^2\abs{S_j}.
\]
The ambient Hessian is $2\mu_j^2A_{S_j}A_{S_j}^\top$. The Riemannian Hessian additionally includes curvature terms from the orthogonality constraint. At a critical point, tangent quadratic forms determine whether the point is a local maximizer, saddle, or degenerate. The paper computes these forms explicitly in the rank-one and common-operator block cases and uses numerical eigenanalysis for heterogeneous supports.

\section{Extended Counterexample Catalogue}
\begin{enumerate}[leftmargin=*]
\item \textbf{Aggregation mismatch:} construct two Stiefel frames whose response vectors reverse ordering under $\ell_1$ versus squared aggregation.
\item \textbf{Inactive sphere contradiction:} demonstrate that the zero loading is infeasible under a unit-sphere column constraint despite being selected by the positive-part value function.
\item \textbf{Masking error:} quantify norm and objective loss from normalize-then-mask.
\item \textbf{Threshold cycling:} exhibit convergence to a threshold boundary with infinite support switching.
\item \textbf{Rank-deficient polar input:} show nonunique score updates and sensitivity to branch selection.
\item \textbf{Heterogeneous supports:} show that a pooled eigengap can predict contraction while the true coupled polar Jacobian is unstable.
\end{enumerate}
Each counterexample has a deterministic unit test and a manuscript-level explanation of the assumption it invalidates.

\end{document}